\title{Efficient and Provable Algorithms for \\ Covariate Shift}
\author{Deeksha Adil\\Institute for Theoretical Studies\\ETH Zürich \\ deeksha.adil@eth-its.ethz.ch\and  Jarosław Błasiok\\Department of Computer Science\\ ETH Zürich\\jaroslaw.blasiok@inf.ethz.ch
}
\newcommand{\ptr}{p_{\mathrm{tr}}}
\newcommand{\pte}{p_{\mathrm{te}}}
\newcommand{\mtr}{\boldsymbol{\mathit{\mu}}_{\mathrm{tr}}}
\newcommand{\mte}{\boldsymbol{\mathit{\mu}}_{\mathrm{te}}}
\newcommand{\str}{\boldsymbol{\mathit{\Sigma}}_{\mathrm{tr}}}
\newcommand{\ste}{\boldsymbol{\mathit{\Sigma}}_{\mathrm{te}}}
\newcommand{\tte}{\boldsymbol{\mathit{\theta}}_{\mathrm{te}}}
\newcommand{\ttr}{\boldsymbol{\mathit{\theta}}_{\mathrm{tr}}}
\newcommand{\bb}{\boldsymbol{\mathit{b}}}
\newcommand{\ff}{\boldsymbol{\mathit{f}}}
\newcommand{\hh}{\boldsymbol{\mathit{h}}}
\newcommand{\ww}{\boldsymbol{\mathit{w}}}
\newcommand{\xx}{\boldsymbol{\mathit{x}}}
\newcommand{\yy}{\boldsymbol{\mathit{y}}}
\newcommand{\ttheta}{\boldsymbol{\mathit{\theta}}}
\renewcommand{\AA}{\boldsymbol{\mathit{A}}}
\newcommand{\II}{\boldsymbol{\mathit{I}}}
\begin{document}

\maketitle

\begin{abstract}
Covariate shift, a widely used assumption in tackling {\it distributional shift} (when training and test distributions differ), focuses on scenarios where the distribution of the labels conditioned on the feature vector is the same, but the distribution of features in the training and test data are different. Despite the significance and extensive work on covariate shift, theoretical guarantees for algorithms in this domain remain sparse. In this paper, we distill the essence of the covariate shift problem and focus on estimating the average $\E_{\widetilde{\xx}\sim p_{\mathrm{test}}}\ff(\widetilde{\xx})$, of any unknown and bounded function $\ff$,   given labeled training samples $(\xx_i, \ff(\xx_i))$, and unlabeled test samples $\widetilde{\xx}_i$; this is a core subroutine for several widely studied learning problems. We give several efficient algorithms, with provable sample complexity and computational guarantees. Moreover, we provide the first rigorous analysis of algorithms in this space when $\ff$ is unrestricted, laying the groundwork for developing a solid theoretical foundation for covariate shift problems.
\end{abstract}

\section{Introduction}

A common assumption in several machine learning algorithms is that training and test data come from the same distribution, which does not hold in many real-world applications. {\em Covariate shift}, first introduced by~\textcite{shimodaira2000improving}, is a widely used assumption for addressing such distribution shifts. It assumes that the conditional distribution of labels given the features remains the same across training and test data, i.e., $\ptr(\yy|\xx) = \pte(\yy|\xx)$, while the marginal distributions of the features differ, i.e., $\ptr(\xx) \neq \pte(\xx)$. Covariate shift has applications across various domains, including Natural Language Processing~\cite{yamada2010semi}, Signal Processing~\cite{yamada2012no}, Brain-computer interfacing~\cite{li2010application}, Spam Filtering~\cite{bickel2006dirichlet}, Human activity recognition~\cite{hachiya2012importance}, speaker identification~\cite{yamada2010semi}, and biomedical engineering~\cite{li2010application}.

In this paper, we capture the core of covariate shift and introduce the following general problem, which we refer to as \emph{covariate-shifted mean estimation}. Let $\ff: \bR^d \to [-1,1]$ be a bounded function unknown to the algorithm. Given $\epsilon>0$,  along with $n_{tr}$ labeled samples $(\xx_i,\ff(\xx_i))$ for $\xx_i\sim \ptr$ and $n_{te}$ unlabeled samples $\widetilde{\xx_i}\sim\pte$, the goal is to compute $Z$ such that $|Z-\E_{\xx\in \pte}\ff(\xx)|\leq \epsilon$. The question is: for a target accuracy $\varepsilon$, how many samples $n_{tr}, n_{te}$ are sufficient to achieve this guarantee? Notably, if both $\ptr$ and $\pte$ were known exactly, solving this problem would be significantly easier. In this case \begin{equation*}
    \E_{\xx\sim \ptr} \frac{\pte(\xx)}{\ptr(\xx)}\ff(\xx) = \E_{\xx\sim \pte}\ff(\xx),
\end{equation*}
providing direct access to an unbiased estimator of the desired quantity. Moreover, if the ratio $\pte(\xx)/\ptr(\xx)$ is bounded for a typical $\xx \sim \ptr$, we can control the variance of this estimator\footnote{A variant of this condition is necessary. For instance, if $\pte$ and $\ptr$ have disjoint support, estimating $\E_{\xx\in \pte}\ff(\xx)$ is impossible, even if we know both densities $\pte$ and $\ptr$ exactly since we have no information about $\ff$ on the support of $\pte$.}, and therefore estimate $\E_{\xx \sim \pte} \ff(\xx)$ from a finite sample drawn from $\ptr$. As a result, most of the interest is focused on approximately learning the density ratio $\pte(\xx)/\ptr(\xx)$, also called {\it importance weights}. Independently, there is a rich body of work on importance weight estimation in both the theoretical and applied literature, for example, ~\cite{CYM10,wen2015correcting,hachiya2012importance,gopalan2022multicalibrated,sugiyama2012density}. However, surprisingly, relatively few studies attempt to address the following two fundamental questions systematically.
\begin{enumerate}
    \item What approximation guarantees can be obtained for a specific algorithm attempting to estimate the density ratio with a given number of samples?
    \item What kind of approximation guarantees do we need for the estimated density ratio to serve as a good proxy of the true density ratio for covariate shift?
\end{enumerate}

Since the covariate-shifted mean estimation problem cannot be efficiently solved in full generality, certain regularity conditions must be imposed~--- either on the unknown function $\ff$ or on the unknown densities $\ptr, \pte$. Most prior work has focused on the development of algorithms under specific restrictions on $\ff$, 
with the most significant theoretical results based on \emph{kernel mean matching}. In this work, we primarily focus on the latter approach, where we assume  constraints on the densities instead.

\subsection{Related Works}
Before presenting our results, we summarize the current state of theoretical results in this space.

\paragraph{Kernel Mean Matching based Approaches}
The most popular algorithm for the covariate shift problem is the celebrated \emph{Kernel Mean Matching} (KMM), introduced by \cite{huang2006correcting}~---~one of the few results in this space with end-to-end theoretical guarantees. We discuss it in more detail here, as it shares a superficial resemblance to our most general result due to its foundation in \emph{Reproducing Kernel Hilbert Space} (RKHS) theory.

Consider an RKHS $\cH$ with a reproducing kernel map $\Phi$ (see~\Cref{sec:rkhs-prelim} for a brief overview of RKHS notation),  where the kernel is bounded such that $\|\Phi_{\xx}\|_{\cH} \leq 1$ for all $\xx$. The standard analysis of KMM assumes that $\ff \in \cH$ is bounded in the $\cH$ norm, i.e., $\|\ff\|_{\cH} \leq M$, and the density ratio satisfies $\pte(\xx)/\ptr(\xx) \leq B$. Theorem~1 of~\textcite{yu2012analysis} shows that, under these conditions, collecting $n_{tr} \gtrsim (MB/\varepsilon)^2$ samples from $\ptr$ and $n_{te} \gtrsim (M/\varepsilon)^2$ samples from $\pte$ suffices to estimate $\E_{\tilde{\xx} \sim \pte} \ff(\tilde{\xx})$ within an error of~$\varepsilon$. More detailed description of the algorithm can be found in \Cref{sec:KMMComparison}.

We further observe that if $\ff$ belongs to a function class $\cF$ that allows it to be learned with arbitrarily small error~--- i.e., one can find $\hat{\ff}$ such that, $\E_{\xx \sim \ptr} |\ff(\xx) - \hat{\ff}(\xx)| \leq \varepsilon'$, with respect to $\ptr$~--- there is a naive approach to solve the covariate-shifted mean estimation problem. Specifically, one can first approximate $\ff$ with $\hat{\ff}$ up to an error $\varepsilon' \ll \varepsilon$, and use~$\hat{\ff}$ in place of $\ff$ when estimating $\E_{\tilde{\xx}\sim\pte} \ff(\tilde{\xx})$. 

In this case, we obtain:
\begin{equation*}
\left|\E_{\tilde{\xx} \sim \pte} \hat{\ff}(\tilde{\xx}) - \E_{\tilde{\xx}\sim\pte} \ff(\tilde{\xx})\right| \leq \E_{\tilde{\xx} \sim \pte} |\hat{\ff}(\tilde{\xx}) - \ff(\tilde{\xx})| \leq B \E_{\xx \sim \ptr} |\hat{\ff}(\xx) - \ff(\xx)| \leq B \varepsilon'.
\end{equation*}
Thus, setting $\varepsilon':= \varepsilon/B$ ensures the desired accuracy. The estimate $\hat{\ff}$ can be found using $L_1$ kernel regression with $n_{tr} \approx M^2/{\varepsilon'}^2 = (MB/\varepsilon)^2$ samples. Then $n_{te} \approx M^2/\varepsilon^2$ samples suffice to estimate $\E_{\tilde{\xx} \sim 
\pte} \hat{\ff}(\xx)$ within an error of~$\varepsilon$. See~\Cref{sec:KMMComparison} for more details.

This highlights that the standard analysis of the KMM algorithm in a simple realizable scenario falls short of providing an asymptotic improvement in sample complexity over the naive plug-in method. Therefore, a more fine-grained analysis is necessary to justify the widely observed superior empirical performance of KMM.

\paragraph{Importance Weights Estimation via Classification}

A common approach to estimating the density ratio is to reduce it to a classification problem. Specifically, consider a distribution over pairs $(\xx, y)$ where $y \in \{ 0, 1\}$ is uniform, 
and $\xx$ is drawn from $\ptr$ when $y=0$, and from $\pte$ when $y=1$. The Bayes-optimal classifier for this problem outputs $\Pr[y=1 | \xx]$, which can be transformed into the density ratio $\pte(\xx)/\ptr(\xx)$.

Given the extensive literature on classification, one can train a model to approximate $\Pr[y=1 | \xx]$ and then apply the same transformation to obtain an estimate of the density ratio. This estimated ratio can, in turn, serve as importance weights for solving the covariate-shifted mean estimation problem.

Logistic regression is a widely used method for estimating $\Pr[y=1 | \xx]$ in this setting, while the use of more powerful models, such as \emph{kernel logistic regression} has also been explored~\cite{bickel2009discriminative,sugiyama2012density}.  

Despite the popularity of this approach to handle the covariate shift, no end-to-end theoretical guarantees are known. In this work, we bridge this gap by providing sample complexity and approximation guarantees for covariate-shifted mean estimation using logistic regression and kernel logistic regression under specific model assumptions.

\paragraph{Separately Estimating Both Densities}

A widely held belief in importance weight estimation is that separately learning the densities $\ptr$ and $\pte$, and then using their ratio, $\widehat{\pte}/\widehat{\ptr}$, as an estimate of $\pte/\ptr$ is both inefficient~--- since density estimation in high-dimensional spaces often requires exponentially many samples~--- and, more importantly, insufficient (see, for example, Section 2.1 in \cite{huang2006correcting}, or Section 2.2 in \cite{yu2012analysis}).

This intuition is natural, as even a small error in estimating the denominator $\ptr$, can lead to a significant error in the density ratio. Consequently, it remains unclear whether, even in the situation where we could obtain estimates for $\ptr$ and $\pte$ within $\varepsilon$ \emph{total variation} (TV) distance in time polynomial in $1/\varepsilon$, this would be helpful in solving the covariate shift problem.

Surprisingly, we show that with only a polynomial increase in sample complexity, it is indeed possible to solve the covariate-shifted mean estimation problem for any pair of distributions within an efficiently learnable class -- provided that $\pte(\xx)/\ptr(\xx)$ remains reasonably bounded for most $\xx \sim \ptr$. There is a rich body of work in distribution learning theory demonstrating that many distribution families can be efficiently learned, such as mixtures of Gaussians~\cite{moitra2010settling,liu2022clustering}, low-dimensional log-concave distributions~\cite{diakonikolas2017learning}, or graphical models with bounded treewidth~\cite{narasimhan2004paclearning}.

\subsection{Our Results}

In this paper, we introduce several algorithms for the covariate-shifted mean estimation problem, providing theoretical guarantees on approximation bounds and sample complexity. Our first result is an algorithm with small sample complexity when both $\ptr$ and $\pte$ are multivariate $d$-dimensional Gaussian distributions. Surprisingly, this is the first work to establish formal bounds for covariate shift, even in the case of Gaussian distributions 

\begin{theorem}[Informal Statement of Theorem~\ref{thm:CVGauss}]\label{thm:GaussIntro}
    Let $\ptr$ and $\pte$ be $d$-dimensional Gaussian distributions which are close to each other.  There is an algorithm (Algorithm~\ref{alg:CVGauss}), such that for any $\ff$ satisfying $\sup_{\xx\in \mathbb{R}^d}|\ff(\xx)|\leq 1$,  requires at most $O\left(d^2/\varepsilon^2 \log\frac{1}{\delta}\right)$ samples of $(\xx_i,\ff(\xx_i)), \xx_i \sim \ptr$ and $\widetilde{\xx_i}\sim\pte$, and returns $Z$ such that with probability at least $1-\delta$,
\[
|Z-\E_{\xx\in \pte}\ff(\xx)|\leq \varepsilon.
\]
In the case when $\pte,\ptr$ are isotropic Gaussians, $O\left(d/\varepsilon^2 \log(1/\delta)\right)$ suffice (Algorithm~\ref{alg:CVGaussIso}).
\end{theorem}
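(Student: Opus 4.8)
The plan is to use the importance-weighting identity $\E_{\xx\sim\ptr}\,w(\xx)\ff(\xx)=\E_{\xx\sim\pte}\ff(\xx)$, where $w(\xx)=\pte(\xx)/\ptr(\xx)$, and to replace the unknown weight $w$ by a plug-in estimate $\widehat w=\widehat\pte/\widehat\ptr$ built from the empirical means and covariances. For Gaussians the weight is explicit,
\[
w(\xx)=\frac{|\str|^{1/2}}{|\ste|^{1/2}}\exp\!\Big(\tfrac12(\xx-\mtr)^\top\str^{-1}(\xx-\mtr)-\tfrac12(\xx-\mte)^\top\ste^{-1}(\xx-\mte)\Big),
\]
so $\widehat w$ is obtained by substituting $\widehat\mtr,\widehat\str$ (from the training sample) and $\widehat\mte,\widehat\ste$ (from the test sample). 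The estimator is then $Z=\frac{1}{n_{tr}}\sum_i \widehat w(\xx_i)\ff(\xx_i)$. To decouple $\widehat w$ from the points at which it is evaluated I would use sample splitting~--- estimate the parameters on one half of the training data and average over the other half~--- and to upgrade the resulting constant-probability guarantee to the claimed $\log(1/\delta)$ dependence I would run $O(\log 1/\delta)$ independent copies and take a median-of-means.

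The error splits as $Z-\E_{\pte}\ff=(Z-\E[Z\mid\widehat w])+(\E[Z\mid\widehat w]-\E_{\pte}\ff)$. For the bias term, since $|\ff|\le1$ and $\ptr w=\pte$,
\[
\big|\E[Z\mid\widehat w]-\E_{\pte}\ff\big|=\Big|\int(\ptr\widehat w-\pte)\ff\Big|\le\int\ptr\,|\widehat w-w|=\E_{\xx\sim\ptr}|\widehat w(\xx)-w(\xx)|.
\]
Writing $\ptr\widehat w-\pte=\tfrac{\ptr}{\widehat\ptr}(\widehat\pte-\pte)+\tfrac{\pte}{\widehat\ptr}(\ptr-\widehat\ptr)$ and using that $\ptr/\widehat\ptr$ is close to $1$, the first piece is controlled by $2\,\mathrm{TV}(\widehat\pte,\pte)$, and the second, by Cauchy--Schwarz, by $\sqrt{(1+\chi^2(\pte\|\ptr))\,\chi^2(\widehat\ptr\|\ptr)}$. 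For the fluctuation term, $\mathrm{Var}[\widehat w\ff]\le\E_{\ptr}[\widehat w^2]\approx 1+\chi^2(\pte\|\ptr)$, so Chebyshev (boosted by median-of-means) gives a deviation of order $\sqrt{(1+\chi^2(\pte\|\ptr))/n_{tr}}$.

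It remains to convert these divergences into a sample count. The empirical Gaussian parameters satisfy $\|\widehat\str-\str\|_F=O(d/\sqrt n)$ and $\|\widehat\mtr-\mtr\|=O(\sqrt{d/n})$, which translate (in the close-parameter regime) into $\chi^2(\widehat\ptr\|\ptr)=O(d^2/n)$ from the covariance and $O(d/n)$ from the mean, and similarly for $\widehat\pte$; the same bounds control $\mathrm{TV}(\widehat\pte,\pte)=O(d/\sqrt n)$. Hence $n_{tr},n_{te}=O(d^2/\varepsilon^2)$ drives both the bias and the fluctuation below $\varepsilon$, and the median-of-means contributes the $\log(1/\delta)$ factor. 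In the isotropic case there is no full covariance to learn~--- only the scalars $\sigma_{tr}^2,\sigma_{te}^2$, whose estimation error is $O(1/\sqrt n)$ and dimension-free~--- so the dominant cost comes from mean estimation at rate $\sqrt{d/n}$, yielding the improved $O(d/\varepsilon^2)$.

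The main obstacle I expect is controlling $\E_{\ptr}|\widehat w-w|$ and $\E_{\ptr}[\widehat w^2]$ uniformly over the (random) estimated parameters: both are integrals of exponentials of quadratics, and are only finite when the relevant ``$2\ste^{-1}-\str^{-1}\succ0$''-type condition holds. This is exactly where the hypothesis that $\ptr$ and $\pte$ are close enters~--- it keeps the $\chi^2(\pte\|\ptr)$ factor bounded, ensures $\widehat\ptr/\ptr$ stays near $1$ with high probability, and prevents a small perturbation of the covariance (which sits inside an exponent) from multiplicatively blowing up the weights. Making the perturbation analysis of these exponential-quadratic integrals quantitative, and verifying that it survives the replacement of the true parameters by their empirical counterparts, is the technical heart of the argument.
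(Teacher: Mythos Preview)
Your overall plan matches the paper's: plug-in importance weights $\widehat w=\widehat\pte/\widehat\ptr$ built from empirical Gaussian parameters, bound the bias and the variance of the resulting estimator separately, then apply Chebyshev and median-of-means for the $\log(1/\delta)$ factor. The sample counts you arrive at are the right ones, and you correctly locate the technical heart in controlling exponential-of-quadratic integrals after perturbing the parameters.

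The paper's execution differs from yours in two places that are worth noting. First, for the bias the paper does not bound $\E_{\ptr}|\widehat w-w|$ directly via your two-term splitting. Instead it inserts $\E_{\widehat\pte}\ff$ as an intermediate:
\[
\Bigl|\E_{\ptr}\tfrac{\widehat\pte}{\widehat\ptr}\ff-\E_{\pte}\ff\Bigr|\le \E_{\xx\sim\widehat\pte}\Bigl|\tfrac{\ptr(\xx)}{\widehat\ptr(\xx)}-1\Bigr|+d_{TV}(\widehat\pte,\pte).
\]
This sidesteps exactly the step where you write ``using that $\ptr/\widehat\ptr$ is close to $1$, the first piece is controlled by $2\,d_{TV}(\widehat\pte,\pte)$.'' That step is a genuine soft spot in your decomposition: for Gaussians with even slightly different covariances the ratio $\ptr/\widehat\ptr$ is \emph{pointwise unbounded}, so $\int\frac{\ptr}{\widehat\ptr}|\widehat\pte-\pte|$ is not a constant times $d_{TV}$ without further work. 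The paper's route never needs such a pointwise bound.

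Second, rather than computing $\chi^2$ divergences (which, as you note, require a $2\ste^{-1}-\str^{-1}\succ0$-type positivity check), the paper controls both $\E_{\widehat\pte}|\ptr/\widehat\ptr-1|$ and $\E_{\ptr}(\widehat\pte/\widehat\ptr)^2$ through a single lemma: if $Z=\ln(p/\widehat p)$ under the relevant measure is $(\sigma,O(1))$-subexponential with $|\E Z|\le\mu$, then $\E|p/\widehat p-1|\le O(\mu+\sigma)$. For Gaussians, $Z$ is a quadratic form in a Gaussian vector, so its subexponential parameters are read off from $\|\widehat\Sigma^{-1}-\Sigma^{-1}\|_F$ and $\|\widehat\mu-\mu\|$. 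This replaces your direct $\chi^2$ computation and handles the positivity issue automatically.

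One minor correction: in the paper ``isotropic'' means covariance equal to $\II$, not $\sigma^2\II$, so there is nothing at all to estimate on the covariance side; the $O(d/\varepsilon^2)$ comes purely from mean estimation, as you say.
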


Our algorithms indicate that it is sufficient to learn the training and test Gaussian distributions to within an $\varepsilon$ {\it total variation distance}. Given such estimates, $\widehat{\ptr}$ and $\widehat{\pte}$, the estimator $\frac{\widehat{\pte}}{\widehat{\pte}}\ff$ is good enough for our purposes. 

Next, we extend our approach to a broader class of probability distributions. Specifically, we show that if $\ptr$ and $\pte$ belong to any efficiently learnable class of distributions, the covariate-shifted mean estimation problem can be solved with polynomially many samples by learning both distributions individually.

\begin{theorem}[Informal Statement of~\Cref{thm:shift-TV}]\label{thm:TV-Intro}
Let $\ff$ be a bounded function, $\ptr, \pte$ be a pair of distributions, and let $B>0$, be such that $\Pr_{\xx\sim\pte}(\pte(\xx)/ \ptr(\xx) > B/4) \leq \varepsilon$. Let $\widehat{r}$ be s.t. $\widehat{r} = \widehat{\pte}/\widehat{\ptr}$ for some $\widehat{\pte}$, and $\widehat{\ptr}$ satisfying $d_{TV}(\pte, \widehat{\pte}) \leq \varepsilon$ and $d_{TV}(\ptr, \widehat{\ptr}) \leq \varepsilon / B$.

Then given access to $\hat{r}$ and using $O(B^2/\varepsilon^2\log(1/\delta))$ samples samples $(\xx_i, \ff(\xx_i))$, $\xx_i\sim \ptr$ we can return $Z$ (Algorithm~\ref{alg:CV}) such that, with probability at least $1-\delta$,
    \begin{equation*}
        |Z -\E_{\xx\sim \pte} \ff(\xx)| \leq O(\varepsilon).
    \end{equation*}
\end{theorem}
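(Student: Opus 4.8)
The plan is to analyze the \emph{clipped importance-weighting estimator}
\[
Z = \frac{1}{n}\sum_{i=1}^{n} g(\xx_i)\,\ff(\xx_i), \qquad g := \min(\widehat{r}, B),
\]
where the estimated density ratio $\widehat{r}$ is truncated at the threshold $B$ and the $\xx_i \sim \ptr$ are the labeled training samples. Truncation is essential here: without it a single point where $\widehat{r}$ blows up would wreck both the variance and the change-of-measure control below. I would split the total error as $|Z - \E_{\xx\sim\pte}\ff(\xx)| \le |Z - \E_{\xx\sim\ptr}[g\ff]| + |\E_{\xx\sim\ptr}[g\ff] - \E_{\xx\sim\pte}\ff|$, i.e.\ a concentration part and a bias part, and bound each separately.

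For the concentration part, each summand $g(\xx_i)\ff(\xx_i)$ lies in $[-B, B]$ since $0 \le g \le B$ and $|\ff|\le 1$. Hoeffding's inequality then yields $|Z - \E_{\xx\sim\ptr}[g\ff]| \le \varepsilon$ with probability at least $1-\delta$ as soon as $n = O(B^2/\varepsilon^2 \log(1/\delta))$, which matches the claimed sample complexity. This is the routine part of the argument.

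The bias is the substance of the proof, and I would handle it by telescoping through the \emph{estimated} distributions:
\begin{align*}
\E_{\ptr}[g\ff] - \E_{\pte}[\ff] &= \big(\E_{\ptr}[g\ff] - \E_{\widehat{\ptr}}[g\ff]\big) + \big(\E_{\widehat{\ptr}}[g\ff] - \E_{\widehat{\ptr}}[\widehat{r}\ff]\big) \\
&\quad + \big(\E_{\widehat{\pte}}[\ff] - \E_{\pte}[\ff]\big),
\end{align*}
where I have used the \emph{exact} identity $\E_{\widehat{\ptr}}[\widehat{r}\ff] = \E_{\widehat{\pte}}[\ff]$, valid because $\widehat{r}$ is genuinely the ratio $\widehat{\pte}/\widehat{\ptr}$. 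The first term changes the base measure; since $\|g\ff\|_\infty \le B$ it is at most $2B\,d_{TV}(\ptr,\widehat{\ptr}) \le 2\varepsilon$, which is exactly why the theorem demands the \emph{sharper} accuracy $\varepsilon/B$ for $\widehat{\ptr}$. The third term is at most $2\,d_{TV}(\pte,\widehat{\pte}) \le 2\varepsilon$ since $|\ff|\le 1$. The second (truncation) term is controlled by $\E_{\widehat{\ptr}}[(\widehat{r}-B)^+] \le \Pr_{\widehat{\pte}}[\widehat{r} > B]$, using that $\int_{\widehat r > B}\widehat{\ptr}\,\widehat{r} = \int_{\widehat r > B}\widehat{\pte}$.

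The main obstacle, and the step I expect to require the most care, is bounding $\Pr_{\widehat{\pte}}[\widehat{r} > B]$, i.e.\ transferring the tail of the \emph{true} pair $(r,\pte)$ to the \emph{estimated} pair $(\widehat{r},\widehat{\pte})$ — recall that TV distance gives no pointwise control on a ratio. I would first swap $\widehat{\pte}$ for $\pte$ (costing $d_{TV} \le \varepsilon$), reducing to $\Pr_{\pte}[\widehat{\pte} > B\widehat{\ptr}]$, and then split the region $S = \{\widehat{\pte} > B\widehat{\ptr}\}$ according to whether $\pte > (B/4)\ptr$. On the part where this holds, the hypothesis $\Pr_{\pte}[r > B/4] \le \varepsilon$ applies directly. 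On the complementary part, the two inequalities $\widehat{\pte} > B\widehat{\ptr}$ and $\pte \le (B/4)\ptr$ force pointwise $|\widehat{\pte} - \pte| + B|\widehat{\ptr} - \ptr| > (3B/4)\,\ptr$, so that region has $\ptr$-measure at most $\tfrac{4}{3B}\big(2\,d_{TV}(\pte,\widehat{\pte}) + 2B\,d_{TV}(\ptr,\widehat{\ptr})\big) = O(\varepsilon/B)$; multiplying by the density bound $(B/4)$ turns this into an $O(\varepsilon)$ contribution to the $\pte$-mass. This is precisely where the factor-$4$ slack in the hypothesis and the asymmetric TV requirements get used. Summing the three bias terms gives $O(\varepsilon)$, and combining with the concentration bound completes the proof.
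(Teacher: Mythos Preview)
Your argument is correct, and the overall skeleton---split into concentration plus bias, with the bias controlled by showing that the ``bad'' region $\{\widehat r>B\}$ has $O(\varepsilon)$ mass under $\pte$ (equivalently $\widehat\pte$)---is the same as the paper's. The details differ in a few places worth noting.

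First, your estimator clips $\widehat r$ at $B$, whereas Algorithm~\ref{alg:CV} zeroes the weight out when $\widehat r>B$. Both are bounded by $B$ and your telescoping works verbatim for either choice (in the second term you would have $\E_{\widehat\ptr}[\widehat r\,\mathbf{1}_{\widehat r>B}]=\Pr_{\widehat\pte}[\widehat r>B]$ exactly, rather than an inequality), so this is cosmetic---but since the statement names Algorithm~\ref{alg:CV}, you should say so. Second, for concentration the paper uses Chebyshev plus the median trick; your direct Hoeffding bound is cleaner and yields the $\log(1/\delta)$ factor in one step. Third, for the bias the paper packages the first and third terms of your telescope into a standalone lemma (\Cref{lem:approximation_2}) applied to the conditional distributions $\widehat\pte|_S,\widehat\ptr|_S$, whereas you telescope directly; your route avoids introducing the $Q_1$ quantity and the conditioning. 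Finally, to bound $\Pr_{\pte}[\widehat r>B]$ the paper argues via the auxiliary sets $A_{\mathrm{te}}=\{\widehat\pte\le 2\pte\}$, $A_{\mathrm{tr}}$, $S'$ and the inclusion $A_{\mathrm{te}}\cap A_{\mathrm{tr}}\cap S'\subseteq S$, while you use the pointwise inequality $|\widehat\pte-\pte|+B|\widehat\ptr-\ptr|>(3B/4)\ptr$ and integrate. These are two ways of extracting the same content from the TV hypotheses; your version makes the role of the $B/4$ slack and the asymmetric accuracies $\varepsilon$ vs.\ $\varepsilon/B$ slightly more transparent.
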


Our algorithm again employs the estimator $\frac{\widehat{\pte}}{\widehat{\ptr}}\ff$ where $\widehat{\pte},\widehat{\ptr}$, are approximations of $\pte$ and $\ptr$ respectively~--- except of dismissing all samples for which $\frac{\widehat{\pte}}{\widehat{\ptr}}$ exceeds a threshold $B$. This challenges the common belief that separately estimating both densities is insufficient for accurately estimating the density ratio. We prove that when the tail of the density ratio is bounded, independently estimating the training and test distributions to a small error is sufficient to solve the covariate-shifted mean estimation problem. While Algorithm~\ref{alg:CV} applies to a broad class of distributions, for Gaussian distributions, Algorithm~\ref{alg:CVGauss} still achieves better sample complexity (Theorem~\ref{thm:GaussIntro}).

Our remaining generalizations build on Algorithm~\ref{alg:CV} and Theorem~\ref{thm:TV-Intro}, focusing on efficiently computing the approximate ratio $\widehat{r}$. As previously mentioned, logistic regression is a popular method for estimating such quantities, yet no theoretical guarantees exist for its performance. We prove that when $\pte$ and $\ptr$ belong to the same exponential family, logistic regression successfully computes the required $\widehat{r}$. Thus, we provide the first theoretical analysis of logistic regression-based density estimation methods. Our analysis, combined with Theorem~\ref{thm:TV-Intro}, leads to the following result.

\begin{theorem}[Informal Statement of Theorem~\ref{thm:logistic-regression-result}]
      Assume that $\ptr$ and $\pte$ belong to the same exponential family parameterized by $\theta \in \bR^D$. If the distance between $\pte$ and $\ptr$ is at most $B$, we can solve the covariate-shifted mean estimation problem using at most $O(B^4 D \varepsilon^{-8})$ samples.
\end{theorem}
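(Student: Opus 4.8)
\emph{Proof proposal.} The plan is to exploit that within a single exponential family the log density ratio is affine in the sufficient statistics, which is exactly what makes logistic regression recover the importance weights. Writing $\ptr = p_{\ttr}$, $\pte = p_{\tte}$ with $p_{\ttheta}(\xx)\propto h(\xx)\exp(\ttheta^\top T(\xx))$ and log-partition $A(\ttheta)$, we have
\begin{equation*}
\log\frac{\pte(\xx)}{\ptr(\xx)} = \Delta\ttheta^\top T(\xx) - \bigl(A(\tte)-A(\ttr)\bigr), \qquad \Delta\ttheta := \tte-\ttr .
\end{equation*}
Forming the balanced binary problem (label $y=1$ for $\xx\sim\pte$ and $y=0$ for $\xx\sim\ptr$), the Bayes-optimal posterior is
\begin{equation*}
\Pr[y=1\mid\xx] = \frac{\pte(\xx)}{\pte(\xx)+\ptr(\xx)} = \sigma\bigl(\ww^{*\top}\Phi(\xx)\bigr),
\end{equation*}
for the augmented feature map $\Phi(\xx)=(T(\xx),1)$ and $\ww^*=\bigl(\Delta\ttheta,\,-(A(\tte)-A(\ttr))\bigr)$. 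Hence logistic regression on $\Phi$ is \emph{well specified}, so its minimizer $\widehat{\ww}$ encodes a natural estimate of the slope $\widehat{\Delta\ttheta}=(\widehat{\ww})_{1:D}$ of the log-ratio, which is what I would use to build $\widehat r$.

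First I would establish a statistical rate for the empirical logistic-loss minimizer $\widehat{\ww}$ over a ball $\{\|\ww\|\le O(B)\}$. Under boundedness (or suitable concentration) of the sufficient statistics $T$, the logistic loss is convex and $1$-Lipschitz in the margin, so standard uniform-convergence/Rademacher arguments give an excess population log-loss bound $R(\widehat{\ww}) := \ell(\widehat{\ww})-\ell(\ww^*) = \tilde{O}(\mathrm{poly}(B)\sqrt{D/N})$ from $N$ train/test samples. Because the model is realizable, this excess risk equals $\E_{\xx}\,\mathrm{KL}\bigl(\mathrm{Ber}(\eta(\xx))\,\|\,\mathrm{Ber}(\widehat\eta(\xx))\bigr)$ with $\eta=\sigma(\ww^{*\top}\Phi)$ and $\widehat\eta=\sigma(\widehat{\ww}^\top\Phi)$, which is the quantity I will turn into control of the estimated ratio.

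Next I would convert this prediction-level guarantee into the two total-variation guarantees demanded by \Cref{thm:TV-Intro}. The route is to use the curvature of the logistic loss to pass from $R(\widehat{\ww})$ to a slope-recovery bound on $\|\widehat{\Delta\ttheta}-\Delta\ttheta\|$, and then use the smoothness of the exponential-family map --- $\mathrm{KL}(p_{\ttheta}\|p_{\ttheta'})\approx\tfrac12(\ttheta-\ttheta')^\top\nabla^2 A\,(\ttheta-\ttheta')$ together with Pinsker --- to convert parameter closeness into TV closeness. Concretely, I would fit $\widehat{\ptr}=p_{\widehat{\ttr}}$ by parametric maximum likelihood on the training sample (a cheap $O(B^2 D\varepsilon^{-2})$ step meeting the binding target $d_{TV}(\ptr,\widehat{\ptr})\le\varepsilon/B$) and set $\widehat{\pte}:=p_{\widehat{\ttr}+\widehat{\Delta\ttheta}}$, so that $\widehat r=\widehat{\pte}/\widehat{\ptr}$ is a genuine ratio of exponential-family densities. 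A triangle inequality $\|\tte-\widehat{\ttr}-\widehat{\Delta\ttheta}\|\le\|\ttr-\widehat{\ttr}\|+\|\Delta\ttheta-\widehat{\Delta\ttheta}\|$ then drives $d_{TV}(\pte,\widehat{\pte})\le\varepsilon$. Feeding this $\widehat r$ into Algorithm~\ref{alg:CV} and paying the additional $O(B^2\varepsilon^{-2}\log(1/\delta))$ samples of \Cref{thm:TV-Intro} yields the final estimate.

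The main obstacle is this third step, and it is precisely where the large exponents $B^4\varepsilon^{-8}$ originate. The logistic loss is not globally strongly convex: near the saturated regime the logit can reach magnitude $\Theta(\log B)$, where $\sigma'$ is as small as $\Theta(1/B)$, so converting excess risk to slope accuracy incurs strong $B$-dependence and, absent global strong convexity, only a self-concordance-type (roughly quartic) inversion $\|\widehat{\Delta\ttheta}-\Delta\ttheta\|^4\lesssim \mathrm{poly}(B)\,R(\widehat{\ww})$. Composing this with the slow $N^{-1/2}$ log-loss rate --- and hence $N\sim D/R^2$ --- produces the stated $O(B^4 D\varepsilon^{-8})$ bound, which I expect to be loose. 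I anticipate the delicate technical points to be making the local-curvature (self-concordance) argument rigorous over the relevant parameter ball, so the estimate stays in a region of controlled curvature, and verifying that the constructed $\widehat r$ factors as a ratio of two exponential-family densities meeting the \emph{asymmetric} TV targets $\varepsilon$ and $\varepsilon/B$.
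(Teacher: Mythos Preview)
Your setup matches the paper: the log density ratio is affine in $T(\xx)$, so the balanced classification problem is exactly a logistic model in the features $(T(\xx),1)$, and empirical log-loss minimization over a norm ball yields an excess-risk bound $R(\widehat{\ww})\lesssim B\sqrt{D/N}$ via Rademacher complexity. You also correctly observe that in the realizable case this excess risk equals $\E_{\xx}\bigl[\mathrm{KL}(\mathrm{Ber}(\eta(\xx))\,\|\,\mathrm{Ber}(\widehat\eta(\xx)))\bigr]$.

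Where you diverge from the paper is in what to do with that identity. You propose to push the excess risk through a curvature argument to recover $\|\widehat{\Delta\ttheta}-\Delta\ttheta\|$, separately estimate $\ttr$ by MLE, and then re-enter the exponential family to manufacture $\widehat{\ptr},\widehat{\pte}$. This is both harder and unnecessary. The paper's key lemma (\Cref{lem:population-regret-is-kl}) observes that \emph{any} classifier $\widehat r$ already defines implicit distributions $\widehat{\ptr}(\xx):=\Pr[\xx\mid y'=-1]$ and $\widehat{\pte}(\xx):=\Pr[\xx\mid y'=1]$ (where $y'$ is the label generated by $\widehat r$ on the same marginal of $\xx$), and that the population regret decomposes exactly as
\[
\cL_{\widehat r}-\cL_{r^*}=\tfrac{1}{2}\bigl(D_{KL}(\pte\,\|\,\widehat{\pte})+D_{KL}(\ptr\,\|\,\widehat{\ptr})\bigr)+D_{KL}(y\,\|\,y').
\]
So a regret bound immediately controls both KL divergences, Pinsker gives the asymmetric TV targets, and \Cref{thm:shift-TV} closes the argument. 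No parameter recovery, no self-concordance, no separate MLE step, and no need to bound $\nabla^2 A$. The $\varepsilon^{-8}$ and $B^4$ then fall out transparently: Pinsker costs a square, the truncation level in \Cref{thm:shift-TV} is $B\asymp B_3/\varepsilon$ so the binding TV target is $\varepsilon^2/B_3$, and the $n^{-1/2}$ Rademacher rate squares this once more.

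Your detour is not just longer; it has real gaps. The claimed ``self-concordance-type'' inversion $\|\widehat{\Delta\ttheta}-\Delta\ttheta\|^4\lesssim \mathrm{poly}(B)\,R(\widehat{\ww})$ is not a standard consequence of excess-risk bounds for logistic regression and would require additional conditioning assumptions on the design $\E[\Phi\Phi^\top]$ that are nowhere in the hypotheses. Likewise, converting $\|\ttheta-\widehat{\ttheta}\|$ into $d_{TV}(p_{\ttheta},p_{\widehat{\ttheta}})$ via the Hessian of $A$ demands control of $\nabla^2 A$ that you have not assumed. The paper's route sidesteps all of this: the implicit $\widehat{\ptr},\widehat{\pte}$ need not belong to the exponential family, and are never explicitly constructed---only their ratio $\widehat{\pte}/\widehat{\ptr}=\exp(\widehat{\ww}^\top\Phi)$ is used.
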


Finally, we present our most general result for the covariate-shifted mean estimation problem. We prove that if $\ln (\ptr/\pte)$ belongs to a \emph{Reproducing Kernel Hilbert Space}, the problem can be solved with polynomial sample complexity using an efficient algorithm. Specifically, we show that \emph{kernel logistic regression} can be used to obtain the required estimate $\widehat{r}$, which can then be used in Algorithm~\ref{alg:CV} to obtain the required solution. Notably, our result only assumes regularity of the density ratio $\ptr/\pte$ with either of these densities can potentially be pathological by itself.

\begin{theorem}[Informal Statement of Theorem~\ref{thm:covariate-shift-kernel-logistic-regression}]
    Let $\cH$ be a RKHS of functions over $U$, and $\ptr, \pte$ a pair of distributions over $U$, such that $\ln (\ptr(\xx) / \pte(\xx)) \in \cH$. Assume that the kernel $K$ associated with $\cH$ is bounded by $D$ and the distance between $\ptr$ and $\pte$ is at most $B$. We can then solve the covariate-shifted mean estimation problem using at most $O(B^4 D \varepsilon^{-8})$ samples.
\end{theorem}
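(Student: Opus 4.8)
My plan is to reduce the whole problem to the total-variation guarantee of Theorem~\ref{thm:TV-Intro}, so that the only new work is producing a density-ratio estimate $\widehat r$ of the right quality. Recall that theorem needs $\widehat r = \widehat\pte/\widehat\ptr$ for \emph{some} densities with $d_{TV}(\pte,\widehat\pte)\le\varepsilon$ and $d_{TV}(\ptr,\widehat\ptr)\le\varepsilon/B$, and then Algorithm~\ref{alg:CV} solves covariate-shifted mean estimation with an extra $O(B^2/\varepsilon^2\log(1/\delta))$ samples. To build $\widehat r$ I would use the classification reduction: let $q=\frac12(\ptr+\pte)$ and draw labelled pairs $(\xx,y)$ with $y$ uniform on $\{0,1\}$ and $\xx\sim\ptr$ when $y=0$, $\xx\sim\pte$ when $y=1$. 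Writing $\sigma$ for the logistic function, the Bayes-optimal conditional is $\eta(\xx)=\Pr[y=1\mid\xx]=\sigma(\ln(\pte(\xx)/\ptr(\xx)))$, so the Bayes log-odds equals $g:=\ln(\pte/\ptr)=-\ln(\ptr/\pte)$, which lies in $\cH$ by hypothesis since $\cH$ is a linear space. Hence estimating $\eta$ is a \emph{well-specified} kernel logistic regression whose population optimum is attained inside $\cH$.

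Concretely I would minimise the empirical logistic loss over the ball $\{h\in\cH:\|h\|_\cH\le M\}$ with $M$ an upper bound on $\|g\|_\cH$; by the representer theorem this is a finite-dimensional convex program, so the minimiser $\widehat g$ is computable efficiently, and I set $\widehat r=e^{\widehat g}$. Since the kernel is bounded by $D$, every $h$ in the ball obeys $|h(\xx)|\le M\sqrt D$ and the ball has empirical Rademacher complexity at most $M\sqrt{D/n}$. The logistic loss is $1$-Lipschitz in its argument, so Talagrand contraction together with a uniform-convergence (and single-function concentration) argument bounds the excess population logistic risk $R(\widehat g)-R(g)$ by $\widetilde O(M\sqrt{D/n})$ with probability $1-\delta$, the loss range $M\sqrt D$ entering only the lower-order concentration term.

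To convert this into the total-variation requirement, I use that $g$ is Bayes-optimal, so $R(\widehat g)-R(g)=\E_{\xx\sim q}\,\mathrm{KL}(\mathrm{Ber}(\eta(\xx))\,\|\,\mathrm{Ber}(\sigma(\widehat g(\xx))))$, and Pinsker gives $\E_{\xx\sim q}(\eta-\sigma(\widehat g))^2\le\frac12(R(\widehat g)-R(g))$. Defining the implicit densities $\widehat\pte=2\sigma(\widehat g)\,q$ and $\widehat\ptr=2(1-\sigma(\widehat g))\,q$ (renormalised by the negligible factor $2\E_q\sigma(\widehat g)\approx 1$), a direct computation shows $d_{TV}(\pte,\widehat\pte)=d_{TV}(\ptr,\widehat\ptr)=\E_{\xx\sim q}|\eta-\sigma(\widehat g)|$ and that their ratio is exactly the computable $\widehat r=e^{\widehat g}$. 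Thus it suffices to drive $\E_{\xx\sim q}|\eta-\sigma(\widehat g)|\le\varepsilon/B$; by Cauchy--Schwarz and Pinsker this follows once the excess logistic risk is $O(\varepsilon^2/B^2)$, and feeding this into the $M\sqrt{D/n}$ rate fixes $n$ and, with Theorem~\ref{thm:TV-Intro}, completes the proof at the stated polynomial sample complexity.

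The main obstacle is the saturation of the logistic loss. Exactly in the regime the threshold $B$ is meant to handle~--- where $\pte/\ptr$ is as large as $B$ and hence $\eta\approx 1$~--- the sigmoid is nearly flat, so a given amount of excess logistic risk yields only weak pointwise control of $\sigma(\widehat g)$, and one must be careful that the implicit-density construction (rather than a direct bound on $|\widehat r-r|$, which blows up there) is what keeps the total-variation estimates valid. Tracking how the target accuracy $\varepsilon/B$ propagates back through the Pinsker step and the $1/\sqrt n$ excess-risk rate in this saturated regime~--- together with expressing the free radius $M=\|\ln(\ptr/\pte)\|_\cH$ and the loss range $M\sqrt D$ purely through $B$ and $D$~--- is the delicate part of the argument, and a conservative accounting there is what produces the $\varepsilon^{-8}$ dependence in the stated bound.
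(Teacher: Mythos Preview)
Your proposal is correct and follows essentially the same route as the paper: run kernel logistic regression in the RKHS ball, use a Rademacher bound to control the excess logistic risk by $O(M\sqrt{D/n})$, convert excess risk into TV guarantees on implicit densities $\widehat\pte,\widehat\ptr$ via a Pinsker step, and then invoke the TV-learnability theorem with the truncated estimator. The only cosmetic difference is where Pinsker is applied: you use it pointwise on the conditional Bernoulli $\mathrm{KL}(\mathrm{Ber}(\eta(\xx))\,\|\,\mathrm{Ber}(\sigma(\widehat g(\xx))))$ and then integrate, whereas the paper (via its Lemma on population regret) first identifies the excess risk with $\tfrac12\bigl(D_{KL}(\pte\|\widehat\pte)+D_{KL}(\ptr\|\widehat\ptr)\bigr)$ plus a nonnegative entropy defect and applies Pinsker to the full KL divergences; both yield $d_{TV}\lesssim\sqrt{\text{regret}}$ and hence the same $\varepsilon^{-8}$ accounting after setting the regret target to $O((\varepsilon/B)^2)$.
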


 \section{Preliminaries}

\subsection*{Exponential Families}
\begin{definition}[Probability Density of Exp. Distributions]\label{def:exp}
    We define an {\it exponential family} of probability distributions as those
distributions whose density (relative to parameter $\boldsymbol{\theta} \in \Theta \subset \bR^D$) have the following general form:
\[
p(\xx|\boldsymbol{\theta}) = \hh(\xx)\cdot \exp\left(\boldsymbol{\theta}^{\top}\boldsymbol{T}(\xx)-A(\boldsymbol{\theta})\right),
\]
for a parameter $\boldsymbol{\theta}$, and functions $\hh$ and $\boldsymbol{T}$. The function $A$ is then automatically determined as the normalizing constant.
\end{definition}

This family captures several widely used probability distributions such as Gaussian, Gamma, Binomial, Bernoulli, and Poisson.

In particular the log density ratio for any two distributions in the exponential family, is an affine function of the reparmetrization $\boldsymbol{T}$
\begin{equation*}
    \ln(p(\xx | \boldsymbol{\theta}_1)/p(\xx | \boldsymbol{\theta}_2)) = \langle \boldsymbol{\theta}_1 - \boldsymbol{\theta}_2, \boldsymbol{T}(x)\rangle - A(\boldsymbol{\theta}_1) + A(\boldsymbol{\theta}_2). 
\end{equation*}
 \paragraph{Notation.} We use {\it high-probability bounds} to refer to sample complexity bounds that are true with probability at least $1-\delta$ with the dependence of $\delta$ on the sample complexity is $\log\frac{1}{\delta}$. We use $d_{TV}(p,q)$ to denote the total variation distance between distributions $p$ and $q$. The notation $\supp(p)$ is used to denote the support of the distribution $p$.
We define for $p \geq 1$, the discrepancy metric between two probability distributions as
\begin{equation*}
    R_{p}(p_1 || p_2) := \E_{\xx \sim p_2} \left(\frac{p_1(\xx)}{p_2(\xx)}\right)^p.
\end{equation*}

Note that this is closely related with the Renyi divergence between $p_1$ and $p_2$, that is $D_{p}(p_1 || p_2) = \frac{1}{p - 1} \exp(R_p(p_1 || p_2)$. The quantity $R_p$ will be more convenient for us.

\begin{definition}[Distance Measure]\label{def:Q}
For any probability distributions $p,q,\widehat{p}$, define 
\begin{equation*}
    \Qdiv{1}{\hat{p}}{p}{q} := \E_{\xx\sim q} \left|\frac{p(\xx)}{\widehat{p}(\xx)} - 1 \right|,
\end{equation*}
and more generally
\begin{equation*}
    \Qdiv{s}{\widehat{p}}{p}{q} := \left(\E_{\xx\sim q} \left|\frac{p(\xx)}{\widehat{p}(\xx)} - 1\right|^s \right)^{1/s}.
\end{equation*}
\end{definition}
By standard norm comparison for any $1\leq t_1 \leq t_2$, $Q_{t_1} \leq Q_{t_2}$. 

\begin{definition}[KL Divergence]
For any two probability distributions $p$ and $q$, the KL-divergence, $D_{KL}(p \, || \, q)$, between $p$ and $q$ is defined as
\begin{equation*}
    D_{KL}(p \, || \, q) := \E_{\xx\sim p} \ln \frac{ p(\xx)}{q(\xx)}.
\end{equation*}
    
\end{definition}

\begin{definition}[TV Distance] 
For a pair $p, q$ of probability distributions that are absolutely continuous with respect to some common measure, define the total variation distance between them as,
\[
d_{TV}(p, q) := \int |p(\xx) - q(\xx)|\, \mathrm{d}\xx
\]
    
\end{definition}

\begin{theorem}[Pinsker's Inequality]
\label{thm:pinsker}
    The famous Pinsker inequality sates that for any pair $p, q$ of probability distributions that are absolutely continuous with respect to some common measure, $d_{TV}(p, q) \leq \sqrt{2 D_{KL}(p\, || \, q)}$.
\end{theorem}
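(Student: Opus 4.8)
The plan is to reduce the general statement to a scalar, two-point inequality via a data-processing (coarse-graining) argument, and then to close that scalar inequality by elementary calculus. Throughout I keep the convention used here, $d_{TV}(p,q) = \int |p(\xx) - q(\xx)|\,\mathrm{d}\xx$ (twice the more common normalization), which is exactly what makes the constant $\sqrt{2}$ come out right.

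First I would introduce the set $A = \{\xx : p(\xx) \geq q(\xx)\}$ and write $P = \int_A p\,\mathrm{d}\xx$ and $Q = \int_A q\,\mathrm{d}\xx$. Splitting $\int|p-q|$ over $A$ and $A^c$, the positive part of $p-q$ lives on $A$ and contributes $P-Q$, while on $A^c$ the part $q-p$ contributes $(1-Q)-(1-P) = P-Q$, so $d_{TV}(p,q) = 2(P-Q)$. Hence it suffices to prove $4(P-Q)^2 \leq 2 D_{KL}(p\,||\,q)$, i.e.\ $2(P-Q)^2 \leq D_{KL}(p\,||\,q)$.

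Next I would establish the data-processing bound that coarse-graining $\bR^d$ into the partition $\{A, A^c\}$ only decreases the KL divergence:
\[
D_{KL}(p\,||\,q) \;\geq\; P\ln\frac{P}{Q} + (1-P)\ln\frac{1-P}{1-Q}.
\]
The cleanest justification applies Jensen's inequality to the convex map $\phi(t)=t\ln t$. On $A$, rewriting the integrand as $\tfrac{q}{Q}\,\phi(p/q)$ against the normalized measure $q\,\mathbf{1}_A/Q$ and using $\int_A \tfrac{q}{Q}\cdot\tfrac{p}{q} = P/Q$ gives $\int_A p\ln\tfrac{p}{q} \geq P\ln\tfrac{P}{Q}$; the symmetric computation on $A^c$ yields the second term, and summing gives the displayed bound. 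The right-hand side is precisely the KL divergence between two Bernoulli laws with means $P$ and $Q$, so this step reduces everything to the scalar two-point case. To finish, I set $g(Q) := P\ln\tfrac{P}{Q} + (1-P)\ln\tfrac{1-P}{1-Q} - 2(P-Q)^2$, check $g(P)=0$, and differentiate to get $g'(Q) = (P-Q)\bigl(4 - \tfrac{1}{Q(1-Q)}\bigr)$. Since $Q(1-Q)\leq \tfrac14$, the second factor is nonpositive, so $g$ is decreasing for $Q<P$ and increasing for $Q>P$; thus $Q=P$ is the global minimizer with value $0$, giving $g\geq 0$. Combined with the previous display this yields $D_{KL}(p\,||\,q)\geq 2(P-Q)^2$, completing the proof.

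The main obstacle is making the data-processing step rigorous in the continuous density setting rather than any algebraic difficulty downstream: the Jensen/convexity argument must be set up carefully against the restricted normalized measures, and one must dispose of degenerate cases where $Q\in\{0,1\}$ (so $A$ or $A^c$ has zero $q$-mass) using the conventions $0\ln 0 = 0$ and $D_{KL}=+\infty$ when absolute continuity fails, in which cases the inequality is trivial. Once the reduction to the two-point inequality is in hand, the remaining single-variable calculus is routine.
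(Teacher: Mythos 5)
Your proof is correct and complete. There is, however, no in-paper argument to compare it against: the paper states Pinsker's inequality as an imported classical fact (``the famous Pinsker inequality states that\dots'') and uses it only as a black box, e.g.\ to convert the KL-divergence bounds of Lemma~\ref{lem:population-regret-is-kl} into the total-variation hypotheses required by Theorem~\ref{thm:shift-TV}. Your argument is the standard textbook proof, and all three steps check out under the paper's conventions: the identity $d_{TV}(p,q)=\int|p-q|=2(P-Q)$ with $A=\{\xx : p(\xx)\geq q(\xx)\}$, $P=\int_A p$, $Q=\int_A q$ (you correctly note that the paper's $d_{TV}$ is twice the usual normalization, which is exactly why the target becomes $2(P-Q)^2\leq D_{KL}(p\,||\,q)$ and the constant $\sqrt{2}$ comes out right); the coarse-graining (log-sum / data-processing) inequality obtained from Jensen's inequality for $t\mapsto t\ln t$ against the normalized restrictions of $q$ to $A$ and $A^c$; and the binary case by single-variable calculus, where the computation $g'(Q)=(P-Q)\bigl(4-\frac{1}{Q(1-Q)}\bigr)$ together with $Q(1-Q)\leq\frac14$ indeed makes $Q=P$ the global minimizer with $g(P)=0$. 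Your handling of the degenerate cases~---~$Q\in\{0,1\}$, and failure of absolute continuity of $p$ with respect to $q$, where $D_{KL}=+\infty$ makes the claim trivial~---~is also the right way to dispose of them, so the proposal stands as a valid self-contained proof of the cited result.
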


\subsection*{Gaussian Random Variables}
The following are well-known results about learning $d$-dimensional Gaussian random variables. Let $\xx_1,\cdots,\xx_{2m}\sim \cN(\boldsymbol{\mu},\boldsymbol{\Sigma})$ and define,
    \begin{equation}\label{eq:samples}
          \widehat{\boldsymbol{\mu}} = \frac{1}{m}\sum_{i\in[m]}\xx_i, \quad \widehat{\boldsymbol{\Sigma}} = \frac{1}{2m}\sum_{i\in [m]}(\xx_{2i}-\xx_{2i-1})(\xx_{2i}-\xx_{2i-1})^{\top}.
    \end{equation}

\begin{lemma}[TV Learning Gaussians, Theorem C.1,~\cite{ashtiani2020near}]
    There exists an absolute constant $C$ such that if we take $2m = 2C(d^2+d\log(1/\delta))/\epsilon^2$ samples $\xx_1,\cdots,\xx_{2m}\sim \cN(\boldsymbol{\mu},\boldsymbol{\Sigma})$, then for $  \widehat{\boldsymbol{\mu}}, \widehat{\boldsymbol{\Sigma}}$ as defined in Eq.~\eqref{eq:samples}, with probability $1-\delta$,
\[
d_{TV}\left(\cN(\boldsymbol{\mu},\boldsymbol{\Sigma}),\cN(\widehat{\boldsymbol{\mu}},\widehat{\boldsymbol{\Sigma}})\right) \leq \epsilon/2.
    \]
\end{lemma}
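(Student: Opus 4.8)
The plan is to reduce the total-variation guarantee to two Euclidean concentration statements~--- one for the mean estimate and one for the covariance estimate~--- via a local comparison of nearby Gaussians, and then to establish those concentration bounds directly.

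First I would use the affine invariance of total variation distance. Since $d_{TV}$ is preserved under the invertible map $\xx \mapsto \boldsymbol{\Sigma}^{-1/2}(\xx - \boldsymbol{\mu})$, which sends $\cN(\boldsymbol{\mu},\boldsymbol{\Sigma})$ to the standard Gaussian $\cN(\boldsymbol{0},\II)$ and transforms the empirical estimators accordingly, it suffices to treat the normalized case $\boldsymbol{\mu} = \boldsymbol{0}$, $\boldsymbol{\Sigma} = \II$. In these coordinates the samples are $\xx_i \sim \cN(\boldsymbol{0},\II)$, so $\widehat{\boldsymbol{\mu}}$ is an average of standard Gaussians; and~--- crucially, because the covariance estimator is built from the differences $\xx_{2i}-\xx_{2i-1}$, which do not depend on $\boldsymbol{\mu}$~--- the vectors $\yy_i := (\xx_{2i}-\xx_{2i-1})/\sqrt{2}$ are i.i.d.\ $\cN(\boldsymbol{0},\II)$ and $\widehat{\boldsymbol{\Sigma}} = \tfrac1m\sum_{i}\yy_i\yy_i^{\top}$ is an unbiased sample covariance of standard Gaussians.

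Next I would invoke a local TV bound: there is an absolute constant so that whenever $\|\widehat{\boldsymbol{\Sigma}} - \II\|_{\mathrm{op}} \leq 1/2$,
\[
d_{TV}\big(\cN(\boldsymbol{0},\II),\,\cN(\widehat{\boldsymbol{\mu}},\widehat{\boldsymbol{\Sigma}})\big) \lesssim \|\widehat{\boldsymbol{\mu}}\|_2 + \|\widehat{\boldsymbol{\Sigma}} - \II\|_F.
\]
I would prove this through KL divergence and Pinsker's inequality (\Cref{thm:pinsker}): the closed form of $D_{KL}(\cN(\boldsymbol{0},\II)\,\|\,\cN(\widehat{\boldsymbol{\mu}},\widehat{\boldsymbol{\Sigma}}))$ equals $\tfrac12[\mathrm{tr}(\widehat{\boldsymbol{\Sigma}}^{-1}) - d + \widehat{\boldsymbol{\mu}}^{\top}\widehat{\boldsymbol{\Sigma}}^{-1}\widehat{\boldsymbol{\mu}} + \ln\det\widehat{\boldsymbol{\Sigma}}]$, which a second-order Taylor expansion around $\widehat{\boldsymbol{\Sigma}} = \II$, $\widehat{\boldsymbol{\mu}} = \boldsymbol{0}$ reduces to $O(\|\widehat{\boldsymbol{\mu}}\|_2^2 + \|\widehat{\boldsymbol{\Sigma}} - \II\|_F^2)$; Pinsker then yields the displayed bound. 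This reduces the whole lemma to showing that $\|\widehat{\boldsymbol{\mu}}\|_2$ and $\|\widehat{\boldsymbol{\Sigma}} - \II\|_F$ are both $O(\epsilon)$, and that $\|\widehat{\boldsymbol{\Sigma}}-\II\|_{\mathrm{op}}\le 1/2$, with probability $1-\delta$.

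The mean term is routine: $\|\widehat{\boldsymbol{\mu}}\|_2^2$ is distributed as $\tfrac1m$ times a $\chi^2_d$ variable, so a Laurent--Massart tail gives $\|\widehat{\boldsymbol{\mu}}\|_2 \lesssim \sqrt{(d + \log(1/\delta))/m}$ with probability $1 - \delta/2$. The main obstacle is the covariance term: I need $\|\widehat{\boldsymbol{\Sigma}} - \II\|_F \lesssim \sqrt{(d^2 + d\log(1/\delta))/m}$, where the Frobenius norm aggregates all $d^2$ coordinates, so a naive entrywise union bound is both lossy and does not produce the clean $d^2 + d\log(1/\delta)$ rate. I would instead use high-probability operator-norm concentration of the Gaussian sample covariance, $\|\widehat{\boldsymbol{\Sigma}} - \II\|_{\mathrm{op}} \lesssim \sqrt{(d + \log(1/\delta))/m}$ (valid once $m \gtrsim d + \log(1/\delta)$), which simultaneously certifies $\|\widehat{\boldsymbol{\Sigma}} - \II\|_{\mathrm{op}} \leq 1/2$ and, via $\|\cdot\|_F \leq \sqrt{d}\,\|\cdot\|_{\mathrm{op}}$, delivers exactly the Frobenius rate $\sqrt{(d^2 + d\log(1/\delta))/m}$; alternatively one can obtain the Frobenius bound directly by writing $\langle \boldsymbol{M}, \widehat{\boldsymbol{\Sigma}} - \II\rangle$ as a Gaussian quadratic form, applying the Hanson--Wright inequality for each fixed symmetric $\boldsymbol{M}$, and union-bounding over a $\tfrac12$-net of the $O(d^2)$-dimensional unit Frobenius ball. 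Finally, setting $2m = 2C(d^2 + d\log(1/\delta))/\epsilon^2$ for a large enough absolute constant $C$ makes both concentration bounds a small constant times $\epsilon$; combining them in the local TV bound and taking a union bound over the two failure events (no independence between the estimators is needed) gives $d_{TV}(\cN(\boldsymbol{\mu},\boldsymbol{\Sigma}),\cN(\widehat{\boldsymbol{\mu}},\widehat{\boldsymbol{\Sigma}})) \leq \epsilon/2$ with probability $1-\delta$, as claimed.
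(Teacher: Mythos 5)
This lemma is not proved in the paper at all~--- it is imported verbatim from \cite{ashtiani2020near} (their Theorem~C.1), so there is no in-paper argument to compare against; the paper only states the two constituent deviation bounds as Lemma~\ref{lem:MeanGauss} and Lemma~\ref{lem:CovGauss}. Your proposal is correct and is essentially the standard proof behind the cited result: after your affine normalization, your two concentration steps are exactly Lemma~\ref{lem:MeanGauss} (mean error in Mahalanobis norm) and Lemma~\ref{lem:CovGauss} (operator-norm covariance error of order $\epsilon/\sqrt{2d}$, which gives your Frobenius bound upon multiplying by $\sqrt{d}$), and your remaining step~--- the local TV comparison via the closed-form Gaussian KL, a second-order expansion, and Pinsker's inequality (Theorem~\ref{thm:pinsker})~--- is the standard way these deviation bounds are converted into the total-variation guarantee. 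The one point worth stating explicitly in a final write-up is that the case of large $\epsilon$ (where $\|\widehat{\boldsymbol{\Sigma}}-\II\|_{\mathrm{op}}\leq 1/2$ could fail) is vacuous since total variation is always at most $1$, so the local expansion is only ever invoked in the regime where it is valid.
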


\begin{lemma}[Estimating Mean of Gaussians, Lemma C.2~\cite{ashtiani2020near}]\label{lem:MeanGauss}
    If $m\geq (2d + 6\sqrt{d\log(2/\delta)})/\epsilon^2$, then we have for $\widehat{\boldsymbol{\mu}}$ as defined in Eq.~\eqref{eq:samples},
    \[
    \Pr\left[(\widehat{\boldsymbol{\mu}}-\boldsymbol{\mu})^{\top}\boldsymbol{\Sigma}^{-1}(\widehat{\boldsymbol{\mu}}-\boldsymbol{\mu}) \geq \frac{\epsilon^2}{2}\right] \leq \frac{\delta}{2}.
    \]
\end{lemma}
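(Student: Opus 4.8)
The plan is to reduce the stated quadratic form to a chi-squared tail bound by whitening. Since $\widehat{\boldsymbol{\mu}} = \frac{1}{m}\sum_{i\in[m]}\xx_i$ is an average of $m$ independent $\cN(\boldsymbol{\mu},\boldsymbol{\Sigma})$ samples, the error $\widehat{\boldsymbol{\mu}}-\boldsymbol{\mu}$ is itself a centered Gaussian with covariance $\boldsymbol{\Sigma}/m$. Introducing the whitened vector $\boldsymbol{z} := \sqrt{m}\,\boldsymbol{\Sigma}^{-1/2}(\widehat{\boldsymbol{\mu}}-\boldsymbol{\mu})$, a standard computation gives $\boldsymbol{z}\sim\cN(\boldsymbol{0},\II)$ and
\[
(\widehat{\boldsymbol{\mu}}-\boldsymbol{\mu})^{\top}\boldsymbol{\Sigma}^{-1}(\widehat{\boldsymbol{\mu}}-\boldsymbol{\mu}) = \frac{1}{m}\|\boldsymbol{z}\|_2^2,
\]
so that $\|\boldsymbol{z}\|_2^2$ is distributed as $\chi^2_d$. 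The event whose probability we must bound is therefore exactly $\{\|\boldsymbol{z}\|_2^2 \geq m\epsilon^2/2\}$.

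Next I would use the sample-size hypothesis to pin down the deviation threshold. From $m \geq (2d + 6\sqrt{d\log(2/\delta)})/\epsilon^2$ we get $m\epsilon^2/2 \geq d + 3\sqrt{d\log(2/\delta)}$, so it suffices to establish
\[
\Pr\!\left[\chi^2_d \geq d + 3\sqrt{d\log(2/\delta)}\right] \leq \delta/2.
\]
For this I would invoke the standard Laurent--Massart upper-tail inequality, which states $\Pr[\chi^2_d \geq d + 2\sqrt{dt} + 2t] \leq e^{-t}$ for every $t>0$, applied with $t := \log(2/\delta)$ so that the failure probability is exactly $e^{-t} = \delta/2$.

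The single delicate point---and the only real obstacle---is matching the constant in front of $\sqrt{d\log(2/\delta)}$. The Laurent--Massart threshold $d + 2\sqrt{dt} + 2t$ is dominated by our target $d + 3\sqrt{dt}$ precisely when $2t \leq \sqrt{dt}$, i.e.\ when $d \geq 4\log(2/\delta)$; this is exactly the high-dimensional, Gaussian-fluctuation regime in which the lemma is applied, and there the chain $d + 2\sqrt{dt} + 2t \leq d + 3\sqrt{dt} \leq m\epsilon^2/2$ closes the argument immediately via inclusion of the tail events. I expect essentially all of the write-up to consist of this constant bookkeeping together with the whitening identity; the conceptual content is simply that the Gaussian mean-estimation error, measured in the Mahalanobis norm induced by $\boldsymbol{\Sigma}$, is a scaled $\chi^2_d$ variable and hence concentrates at the standard rate.
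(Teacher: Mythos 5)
Your reduction is the natural one, and in fact the paper offers no proof to compare against: this lemma is imported verbatim, with citation, from \cite{ashtiani2020near}, so the whitening identity $(\widehat{\boldsymbol{\mu}}-\boldsymbol{\mu})^{\top}\boldsymbol{\Sigma}^{-1}(\widehat{\boldsymbol{\mu}}-\boldsymbol{\mu}) = \tfrac{1}{m}\|\boldsymbol{z}\|_2^2$ with $\boldsymbol{z}\sim\cN(\boldsymbol{0},\II)$, followed by a $\chi^2_d$ tail bound, is certainly the intended route. Your bookkeeping up to the last step is also correct: the hypothesis on $m$ reduces the claim to $\Pr\left[\chi^2_d \geq d + 3\sqrt{dt}\right] \leq e^{-t}$ with $t = \log(2/\delta)$.

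That last step, however, is a genuine gap and not mere ``constant bookkeeping,'' because it cannot be closed. As you note, Laurent--Massart gives the threshold $d + 2\sqrt{dt} + 2t$, which is below $d+3\sqrt{dt}$ only when $d \geq 4\log(2/\delta)$; but the lemma is stated unconditionally, and the paper's framework explicitly allows $\log(1/\delta)$ to be arbitrarily large relative to $d$ (that is the whole point of its ``high-probability'' bounds), so your appeal to ``the regime in which the lemma is applied'' is not justified by anything in the paper. Worse, in the complementary regime the inequality you are trying to prove is false, so no sharper tail bound can rescue the argument: take $d=1$, $\delta = 2e^{-25}$ (so $t=25$), $\epsilon=1$, and $m = 32 = (2d+6\sqrt{dt})/\epsilon^2$. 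The event becomes $\chi^2_1 \geq 16$, i.e. $|Z| \geq 4$ for standard normal $Z$, which has probability about $6\times 10^{-5}$, vastly exceeding $\delta/2 = e^{-25} \approx 10^{-11}$. In other words, the statement as quoted is itself false unless one either adds the hypothesis $\log(2/\delta) \lesssim d$ or replaces the sample bound by one with an additive $\log(2/\delta)$ term, e.g. $m \geq 2\left(d + 2\sqrt{dt} + 2t\right)/\epsilon^2$, which is exactly what your Laurent--Massart step supports. A correct write-up should prove that corrected statement (your argument then goes through verbatim, with the threshold $d+2\sqrt{dt}+2t$); this does not endanger the paper's downstream results, since Theorems~\ref{thm:CVGauss} and~\ref{thm:CVGaussIso} draw $K \gtrsim (d/\varepsilon^2)\log(1/\delta)$ samples, which satisfies the corrected hypothesis with room to spare.
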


\begin{lemma}[Estimating the Covariance of Gaussians, Lemma C.3~\cite{ashtiani2020near}]\label{lem:CovGauss}
    There exists an absolute constant $C$ such that if $m \geq C(d^2+d\log(1/\delta))/\epsilon^2$, then for $  \widehat{\boldsymbol{\Sigma}}$ as defined in Eq.~\eqref{eq:samples} with probability at least $1-\delta/2$,
    \[
\left\|\boldsymbol{\Sigma}^{-1/2}\widehat{\boldsymbol{\Sigma}}\boldsymbol{\Sigma}^{-1/2} -\II\right\|_{op}\leq \frac{\epsilon}{\sqrt{2d}}.
    \]
\end{lemma}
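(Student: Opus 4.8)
The plan is to reduce the claim to the concentration of the empirical covariance of i.i.d.\ \emph{standard} Gaussian vectors, and then to control the operator-norm deviation by an $\epsilon$-net argument combined with a chi-squared tail bound. First I would exploit the paired structure of the estimator to eliminate the unknown mean. Setting $\yy_i := \xx_{2i} - \xx_{2i-1}$ for $i \in [m]$, the vectors $\yy_1, \dots, \yy_m$ are independent (they use disjoint samples), and each is distributed as $\cN(\boldsymbol{0}, 2\boldsymbol{\Sigma})$ since the mean cancels in the difference. Hence $\boldsymbol{z}_i := \tfrac{1}{\sqrt 2}\boldsymbol{\Sigma}^{-1/2}\yy_i$ are i.i.d.\ $\cN(\boldsymbol{0}, \II)$, and a direct computation gives
\[
\boldsymbol{\Sigma}^{-1/2}\widehat{\boldsymbol{\Sigma}}\boldsymbol{\Sigma}^{-1/2} = \frac{1}{2m}\sum_{i\in[m]} (\boldsymbol{\Sigma}^{-1/2}\yy_i)(\boldsymbol{\Sigma}^{-1/2}\yy_i)^{\top} = \frac{1}{m}\sum_{i\in[m]} \boldsymbol{z}_i \boldsymbol{z}_i^{\top} =: \boldsymbol{M}.
\]
It therefore suffices to prove $\|\boldsymbol{M} - \II\|_{op} \leq \eta$ with $\eta := \epsilon/\sqrt{2d}$, a statement about the sample covariance of $m$ isotropic Gaussians that no longer references $\boldsymbol{\mu}$ or $\boldsymbol{\Sigma}$.

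Second I would discretize the operator norm. Since $\boldsymbol{M} - \II$ is symmetric, for a fixed $\tfrac14$-net $\mathcal{N}$ of the unit sphere $S^{d-1}$ (which may be taken of size $|\mathcal{N}| \leq 9^{d}$) the standard comparison $\|\boldsymbol{M}-\II\|_{op} \leq 2\max_{\boldsymbol{v}\in\mathcal{N}} |\boldsymbol{v}^{\top}(\boldsymbol{M}-\II)\boldsymbol{v}|$ holds. For any fixed unit vector $\boldsymbol{v}$, the quantity $\boldsymbol{v}^{\top} \boldsymbol{M}\boldsymbol{v} = \tfrac1m\sum_{i\in[m]} \langle \boldsymbol{z}_i, \boldsymbol{v}\rangle^2$ is an average of $m$ independent $\chi^2_1$ variables of mean $1$, so a standard (Laurent--Massart / Bernstein) chi-squared tail bound yields $\Pr[|\boldsymbol{v}^{\top}(\boldsymbol{M}-\II)\boldsymbol{v}| \geq t] \leq 2\exp(-c\, m \min(t^2,t))$ for an absolute constant $c$. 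Taking $t = \eta/2$ (which is at most $1$ in the regime of interest) and a union bound over $\mathcal{N}$ gives
\[
\Pr\!\left[\|\boldsymbol{M}-\II\|_{op} \geq \eta\right] \leq 2\cdot 9^{d}\exp\!\left(-c\, m \eta^2/4\right).
\]
Requiring the right-hand side to be at most $\delta/2$ and substituting $\eta^2 = \epsilon^2/(2d)$ produces $m \geq C(d^2 + d\log(1/\delta))/\epsilon^2$ for a suitable absolute constant $C$, which is exactly the claimed bound (and fails with probability at most $\delta/2$).

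The main point to get right — and the reason the bound scales as $d^2$ rather than $d$ — is the interplay between the target accuracy $\eta = \epsilon/\sqrt{2d}$ and the dimension of the net. The union bound costs a term $\log|\mathcal{N}|$ of order $d$ in the exponent, which must be overcome by the available $m\eta^2 \approx m\epsilon^2/d$; this is precisely what forces $m$ to be of order $d^2/\epsilon^2$. I would accordingly keep track of the regime $t = \eta/2 \leq 1$, so that the sub-Gaussian branch $\min(t^2,t)=t^2$ of the chi-squared tail is the operative one, and carry the constants carefully so that the final $C$ is genuinely absolute, independent of $d$, $\epsilon$, and $\delta$. The remaining ingredients — the explicit net cardinality $9^{d}$, the factor-of-two net-to-operator-norm comparison for symmetric matrices, and the Laurent--Massart inequality — are all routine and can simply be cited.
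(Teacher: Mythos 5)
Your proposal is correct. Note, however, that the paper does not prove this lemma at all: it is imported verbatim as Lemma~C.3 of the cited reference [ashtiani2020near] and used as a black box. What you have written is a self-contained proof, and it is the standard one — exactly the kind of argument that underlies the cited result: the pairing $\yy_i = \xx_{2i}-\xx_{2i-1}$ removes the unknown mean and yields $\cN(\boldsymbol{0},2\boldsymbol{\Sigma})$ vectors, whitening reduces the claim to operator-norm concentration of the empirical covariance of i.i.d.\ standard Gaussians, and a $\tfrac14$-net of size $9^d$ combined with Bernstein's inequality for $\chi^2_1$ averages gives the tail bound $2\cdot 9^d \exp(-c\,m\eta^2/4)$ at accuracy $\eta = \epsilon/\sqrt{2d}$. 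Your bookkeeping is also right at the one place where it matters: the union-bound cost of order $d$ in the exponent against the available $m\eta^2 \approx m\epsilon^2/d$ is what forces $m \gtrsim d^2/\epsilon^2$, and your check that $t=\eta/2 \leq 1$ keeps you on the sub-Gaussian branch of the sub-exponential tail, so the constant $C$ is genuinely absolute. The only practical remark is that for the purposes of this paper you could have stopped at the citation, as the authors do; your proof's value is that it makes the $d^2$ scaling transparent rather than inherited.
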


Note that since for any positive semi-definite matrix $\AA$, it holds that $\|\AA\|_F\leq \|\AA\|_{op}\cdot \sqrt{d}$, the above guarantee is stronger than the standard on the Frobenius norms of the covariance matrix.

\subsection*{Subexponential Random Variables}

\begin{definition}[Subesponential Random Variables]
    We say that a random variable $Z$ over $\bR$ is $(\sigma^2, B)$-subexponential, if and only if for every $\lambda \leq 1/B$, $\E \exp(\lambda (Z - \E Z)) \leq \exp(\lambda^2\sigma^2/2)$.
\end{definition}
The following fact is easy to derive directly from the definition of subexponential random variables. 
\begin{fact}
    If $Z_1, \ldots Z_n$ are $(\sigma_i, B_i)$-subexponential independent random variables, then $Z = \sum_i Z_i$ is $(\sigma, B)$-subexponential, where $\sigma = \sqrt{\sum \sigma_i^2}$, and $B = \max_i(B_i)$.
\end{fact}
The following are well known facts about Gaussian random variables.
\begin{fact}\label{fact:quad-gauss}
    If $Z = \mathcal{N}(\mu, 1)$, then $Z^2$ is $(16, 1/4)$-subexponential. Further, if $Z$ is a Gaussian vector $Z \sim \cN(\boldsymbol{\mu}, \boldsymbol{\Sigma})$, with $\|\boldsymbol{\Sigma}\|_{op}\leq O(1)$, and $A$ is a symmetric matrix, then $Z^T A Z$ is $(O(\|A\|_F), O(\|A\|_{op}))$-subgaussian, where $\|A\|_{op}$ is the largest eigenvalue of $A$.
\end{fact}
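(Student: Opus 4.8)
The plan is to prove both assertions by directly computing the moment generating function (MGF) of the relevant quadratic form and bounding its logarithm after centering; I treat the two‑parameter notation $(\sigma^2,B)$ as the subexponential notion fixed by the definition above. I would begin with the scalar statement, which is the building block for the matrix case.

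\textbf{Scalar case.} For $Z \sim \cN(\mu, 1)$ I would compute $\E \exp(\lambda Z^2)$ by completing the square inside the Gaussian integral. A direct calculation gives, for $\lambda < 1/2$,
\[
\E \exp(\lambda Z^2) = (1 - 2\lambda)^{-1/2}\exp\!\Bigl(\frac{\mu^2\lambda}{1-2\lambda}\Bigr).
\]
Since $\E Z^2 = \mu^2 + 1$, the log‑MGF of the centered variable $Z^2 - \E Z^2$ equals
\[
\psi(\lambda) = -\frac{1}{2}\ln(1-2\lambda) - \lambda + \frac{2\mu^2\lambda^2}{1-2\lambda}.
\]
It then suffices to verify the elementary one‑variable inequality $\psi(\lambda) \le \sigma^2\lambda^2/2$ on the admissible range $|\lambda| \le 1/B$. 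The first two terms are the standard $\chi^2_1$ contribution, controlled by the Taylor estimate $-\frac{1}{2}\ln(1-2\lambda) - \lambda \le C\lambda^2$ valid once $\lambda$ is bounded away from $1/2$, while the last term contributes $O(\mu^2\lambda^2)$ on the same window. Collecting constants yields the claimed parameters, with the mean contribution absorbed under the standing assumption that $\mu$ is $O(1)$.

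\textbf{Matrix case.} For $Z \sim \cN(\boldsymbol{\mu}, \boldsymbol{\Sigma})$ and symmetric $A$, I would write $Z = \boldsymbol{\mu} + \boldsymbol{\Sigma}^{1/2} G$ with $G \sim \cN(0, \II)$, so that
\[
Z^\top A Z = \boldsymbol{\mu}^\top A \boldsymbol{\mu} + 2 b^\top G + G^\top M G, \qquad M := \boldsymbol{\Sigma}^{1/2} A \boldsymbol{\Sigma}^{1/2},\quad b := \boldsymbol{\Sigma}^{1/2} A \boldsymbol{\mu}.
\]
Diagonalizing the symmetric matrix $M = \sum_i \nu_i u_i u_i^\top$ and setting $\tilde G_i = u_i^\top G$ (i.i.d.\ $\cN(0,1)$) decouples this into a sum of independent scalar terms $\nu_i \tilde G_i^2 + 2\tilde b_i \tilde G_i$. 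Each such term has an MGF computable exactly as above, with centered log‑MGF bounded by $\lambda^2(C\nu_i^2 + 4\tilde b_i^2)$ for $|\lambda| \le c/|\nu_i|$. Summing over $i$ using the additivity fact for independent subexponential variables stated just above, the variance proxy becomes $\sum_i (C\nu_i^2 + 4\tilde b_i^2) = C\|M\|_F^2 + 4\|b\|^2$ and the binding $B$‑parameter is the largest $|\nu_i| = \|M\|_{op}$. Finally the norm comparisons $\|M\|_F \le \|\boldsymbol{\Sigma}\|_{op}\|A\|_F$ and $\|M\|_{op} \le \|\boldsymbol{\Sigma}\|_{op}\|A\|_{op}$, together with $\|\boldsymbol{\Sigma}\|_{op} = O(1)$, give the advertised parameters $(O(\|A\|_F), O(\|A\|_{op}))$.

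\textbf{Main obstacle.} The Gaussian integration is routine; the delicate points are bookkeeping the radius of validity and the role of the mean. The MGF of the $i$‑th term exists only for $\lambda\nu_i < 1/2$, so the subexponential parameter $B$ must be dictated by the \emph{largest} eigenvalue $\|M\|_{op}$ to guarantee a common admissible $\lambda$‑window across all coordinates~--- this is exactly why $\|A\|_{op}$ rather than $\|A\|_F$ governs the second slot. The linear cross‑terms $2\tilde b_i \tilde G_i$ arising from a nonzero mean inflate the variance proxy by $\|b\|^2 = \|\boldsymbol{\Sigma}^{1/2}A\boldsymbol{\mu}\|^2 \le \|\boldsymbol{\Sigma}\|_{op}\|A\|_{op}^2\|\boldsymbol{\mu}\|^2$; keeping this within $O(\|A\|_F^2)$ uses $\|A\|_{op}\le\|A\|_F$ and the boundedness of $\boldsymbol{\mu}$, and checking that the explicit constants close under these assumptions is the one place requiring genuine care.
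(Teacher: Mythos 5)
Your proof is correct. There is nothing in the paper to compare it against: the statement is listed as a ``well known fact'' about Gaussian random variables and never proved, so your argument~--- completing the square to get the exact MGF $(1-2\lambda)^{-1/2}\exp\bigl(\mu^2\lambda/(1-2\lambda)\bigr)$ in the scalar case, then writing $Z = \boldsymbol{\mu} + \boldsymbol{\Sigma}^{1/2}G$, diagonalizing $M = \boldsymbol{\Sigma}^{1/2}A\boldsymbol{\Sigma}^{1/2}$, and summing independent per-coordinate contributions~--- supplies the standard derivation the paper omits. The key steps all check out: the per-coordinate centered log-MGF bound $\lambda^2(2\nu_i^2 + 4\tilde{b}_i^2)$ valid on $|\lambda| \lesssim 1/|\nu_i|$, additivity over independent coordinates (which forces the $B$-parameter to be governed by $\|M\|_{op}$, as you stress), and the norm comparisons $\|M\|_F \leq \|\boldsymbol{\Sigma}\|_{op}\|A\|_F$, $\|M\|_{op} \leq \|\boldsymbol{\Sigma}\|_{op}\|A\|_{op}$. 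Two of your side remarks are in fact genuine corrections to the statement itself and are worth making explicit. First, the claim cannot hold uniformly in the mean: $\mathrm{Var}(Z^{\top}AZ)$ contains the term $4\boldsymbol{\mu}^{\top}A\boldsymbol{\Sigma}A\boldsymbol{\mu}$, so the parameters $(O(\|A\|_F), O(\|A\|_{op}))$ require $\|\boldsymbol{\mu}\| \leq O(1)$ (and likewise the constant $16$ in the scalar case requires $|\mu| = O(1)$); this is consistent with every invocation of the fact in the paper, e.g.\ in Lemma~\ref{lem:Q_gauss_non_iso} and Lemma~\ref{lem:BoundSquaredVariance}, where the means are assumed bounded. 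Second, you silently repair two typos: read literally against the paper's definition of $(\sigma^2, B)$-subexponential, ``$(16, 1/4)$'' would demand the MGF bound for all $\lambda \leq 4$, where $\E \exp(\lambda Z^2) = \infty$; the intended window is $|\lambda| \leq 1/4$, which is exactly where you verify the inequality. Similarly, ``subgaussian'' in the second half of the statement should read ``subexponential''~--- $Z^{\top}AZ$ is not subgaussian in general~--- which is how you correctly treat it.
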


\subsection*{Rademacher complexity}
\begin{definition}
Rademacher complexity of a family $\mathcal{F}$ of functions $\Theta \to \bR$, with respect to the distribution $\cD$ is defined as
\begin{equation*}
    \mathcal{R}_{n, \cD} (\mathcal{F}) := \E_{\xx_1, \ldots \xx_n \sim \cD} \E_{\sigma_1, \ldots \sigma_n} \frac{1}{n} \sup_{f \in \mathcal{F}} \sum_i \sigma_i f(\xx_i),
\end{equation*}
where $\sigma_1, \ldots \sigma_n$ are independent $\{\pm 1\}$ Rademacher random variables.
\end{definition}
We will repeatedly use that left-composition of the entire function class with a single univariate Lipschitz function does not increase the Rademacher complexity. The following is a corollary of Lemma 5 in \cite{meir2003generalization}, see also Theorem 4.12 in \cite{ledoux2013probability}.
\begin{theorem}
\label{thm:rademacher-lipschitz-composition}
If $\ell : \bR \to \bR$ is a 1-Lipschitz function,  $\mathcal{F}'$ is arbitrary family of functions from $\Theta$ to $\bR$, $\cD$ is a distribution on $\Theta$ and $\mathcal{F} := \{ \ell \circ f : f \in \mathcal{F}'\}$, then
\begin{equation*}
    \mathcal{R}_{n, \cD}(\cF) \leq \mathcal{R}_{n, \cD}(\cF').
\end{equation*}
\end{theorem}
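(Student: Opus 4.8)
The plan is to prove the statement by the classical Ledoux--Talagrand contraction argument, specialized to the one-sided (no absolute value) Rademacher complexity of the preceding definition, for which no normalization $\ell(0)=0$ is required. First I would condition on the sample $\xx_1,\ldots,\xx_n \sim \cD$ and reduce to the \emph{empirical} Rademacher averages: since $\mathcal{R}_{n,\cD}$ is the expectation over $\xx_1,\ldots,\xx_n$ of these empirical averages, and expectation is monotone, it suffices to prove the inequality pointwise for every fixed realization of the points. With the points fixed, each class is entirely determined by its values on $\xx_1,\ldots,\xx_n$, so setting
\[
T := \{(f(\xx_1),\ldots,f(\xx_n)) : f \in \mathcal{F}'\} \subseteq \bR^n,
\]
the goal becomes the finite-dimensional inequality
\[
\E_\sigma \sup_{t\in T} \sum_{i=1}^n \sigma_i \ell(t_i) \le \E_\sigma \sup_{t\in T} \sum_{i=1}^n \sigma_i t_i.
\]
Dividing by $n$ and integrating back over $\xx_1,\ldots,\xx_n$ then recovers $\mathcal{R}_{n,\cD}(\cF) \le \mathcal{R}_{n,\cD}(\mathcal{F}')$.

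The core is a one-coordinate contraction lemma that I would iterate. Fix an arbitrary function $h : T \to \bR$ (which in the iteration will absorb the contribution of all coordinates other than the one being processed) and claim
\[
\E_{\sigma_1}\Big[\sup_{t\in T}\big(\sigma_1 \ell(t_1) + h(t)\big)\Big] \le \E_{\sigma_1}\Big[\sup_{t\in T}\big(\sigma_1 t_1 + h(t)\big)\Big].
\]
To see this, expand the left side using $\E_{\sigma_1}[\,\cdot\,] = \frac{1}{2}\big[\sup_t(\ell(t_1)+h(t)) + \sup_t(-\ell(t_1)+h(t))\big]$, choose near-maximizers $t^+, t^-$ of the two inner suprema, and invoke the Lipschitz hypothesis in the form $\ell(t_1^+) - \ell(t_1^-) \le |t_1^+ - t_1^-|$. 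Splitting on the sign of $t_1^+ - t_1^-$, one of the two orderings matches $\ell(t_1^+)-\ell(t_1^-)$ with $t_1^+ - t_1^-$ and the other with $t_1^- - t_1^+$, so in both cases the bound collapses into $\sup_t(t_1 + h(t)) + \sup_t(-t_1 + h(t))$, which is twice the right-hand side; letting the approximation parameter tend to $0$ finishes the lemma. I would then process the coordinates one at a time: at the $i$-th step, take $h(t) := \sum_{j \ne i}\sigma_j g_j(t_j)$ with $g_j = \ell$ for the not-yet-processed coordinates and $g_j = \mathrm{id}$ for the already-processed ones, apply the lemma to replace $\ell$ by the identity on coordinate $i$, and integrate out the remaining Rademacher signs. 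Telescoping over $i = 1,\ldots,n$ yields the desired finite-dimensional inequality.

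The main obstacle is the sign case analysis inside the one-coordinate lemma: one must ensure that the single Lipschitz bound $\ell(t_1^+)-\ell(t_1^-) \le |t_1^+-t_1^-|$ is matched against the correct choice of $\pm t_1$ in each case, since picking the wrong sign breaks the reduction. Secondary care points are the approximate-maximizer bookkeeping, needed because the suprema over $T$ are not assumed to be attained (hence the limit argument), and verifying that $h$ remains a legitimate function of $t$ as the iteration rewrites some coordinates to the identity. It is worth emphasizing that working with the one-sided Rademacher complexity (no outer absolute value, as in the definition used here) is precisely what lets us avoid the usual hypothesis $\ell(0)=0$ that appears in the two-sided contraction principle.
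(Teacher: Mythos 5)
Your proof is correct. One point of comparison: the paper does not prove Theorem~\ref{thm:rademacher-lipschitz-composition} at all~--- it invokes it as a known result, citing Lemma~5 of \cite{meir2003generalization} and Theorem~4.12 of \cite{ledoux2013probability}. What you have written out is precisely the classical contraction argument underlying those citations: reduction to the empirical Rademacher average, the one-coordinate exchange lemma proved via near-maximizers and a sign case analysis on $t_1^+ - t_1^-$, and a telescoping iteration over coordinates. Your one-coordinate lemma and the bookkeeping with $g_j = \ell$ versus $g_j = \mathrm{id}$ are both handled correctly, and the limiting argument for non-attained suprema is sound. Your closing remark is also substantively relevant to this paper, not just a technicality: the paper's definition of $\mathcal{R}_{n,\cD}$ is the one-sided version (no absolute value inside the supremum), and the paper later applies the theorem to compositions with $t \mapsto \log(1 + \exp(t + s))$, which does \emph{not} vanish at $0$; the two-sided Ledoux--Talagrand statement (Theorem 4.12 of \cite{ledoux2013probability}) requires the normalization $\ell(0)=0$ and carries a factor of $2$, so it is exactly the one-sided form you prove that the paper needs. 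What your route buys is a self-contained verification of this; what the paper's route buys is brevity, at the cost of leaving the reader to check that the cited two-sided statements really do specialize to the normalization-free, factor-free inequality claimed here.
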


The main reason Rademacher complexity is useful is to provide generalization bounds: using a finite sample, we can estimate simultanously expectation of all functions in the class with bounded Rademacher complexity.
\begin{theorem}
\label{thm:rademacher-generalization-bound}
    Let $\xx_1, \ldots \xx_n\sim \cD$ be a sequence of i.i.d. random variables, and let $\mathcal{F}$ be a family of functions. Then
    \begin{equation}
        \E_{\xx} \sup_{f \in \mathcal{F}} \left|\frac{1}{n} \sum_{i \leq n}f(\xx_i) - \E_{\xx \sim \cD} f(\xx)\right| \leq 2 \mathcal{R}_{n, \cD}(\cF). \label{eq:rademacher-gen-bound}
    \end{equation}
\end{theorem}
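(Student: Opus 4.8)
The plan is to prove \eqref{eq:rademacher-gen-bound} by the classical \emph{symmetrization} (ghost-sample) argument; the only genuinely clever ingredient is introducing an independent copy of the data together with random signs, with everything else reducing to Jensen's inequality and the triangle inequality.

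First I would introduce a \emph{ghost sample} $\xx_1', \ldots, \xx_n' \sim \cD$, drawn independently of $\xx_1, \ldots, \xx_n$. Since these are i.i.d.\ from $\cD$, we have $\E_{\xx \sim \cD} f(\xx) = \E_{\xx'} \frac1n \sum_i f(\xx_i')$ for every fixed $f$, so the centering term inside the supremum can be rewritten as an expectation over the ghost sample:
\begin{equation*}
\E_{\xx} \sup_{f \in \mathcal{F}} \left| \frac1n \sum_i f(\xx_i) - \E_{\xx\sim\cD} f(\xx) \right| = \E_{\xx} \sup_{f} \left| \E_{\xx'} \frac1n \sum_i \left( f(\xx_i) - f(\xx_i') \right) \right|.
\end{equation*}
Pulling the ghost expectation out through the convex map $t \mapsto |t|$ (Jensen) and then through the supremum (using $\sup_f \E_{\xx'} \le \E_{\xx'} \sup_f$) bounds the right-hand side by $\E_{\xx}\E_{\xx'} \sup_{f} \left| \frac1n \sum_i ( f(\xx_i) - f(\xx_i') ) \right|$.

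Next I would insert Rademacher signs. For each coordinate $i$ the variable $f(\xx_i) - f(\xx_i')$ is symmetric (because $\xx_i$ and $\xx_i'$ are i.i.d.\ and hence exchangeable), so multiplying the $i$-th term by a fixed sign $\sigma_i \in \{\pm 1\}$ leaves the distribution of the entire $\mathcal{F}$-indexed process unchanged; averaging over independent Rademacher $\sigma_i$ therefore does not change the expectation. Applying the triangle inequality together with subadditivity of the supremum then splits the bound into two terms, each equal by symmetry between $\xx$ and $\xx'$, which is exactly where the factor $2$ arises:
\begin{equation*}
\E_{\xx}\E_{\xx'}\E_\sigma \sup_f \left| \frac1n \sum_i \sigma_i\left( f(\xx_i) - f(\xx_i')\right)\right| \le 2\, \E_{\xx}\E_\sigma \sup_f \left| \frac1n \sum_i \sigma_i f(\xx_i) \right|.
\end{equation*}

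The one point that requires care --- and what I expect to be the main obstacle in matching the statement exactly --- is the inner absolute value. The computation above naturally produces twice the \emph{absolute-value} Rademacher complexity $\E_{\xx}\E_\sigma \sup_f |\frac1n \sum_i \sigma_i f(\xx_i)|$, whereas the definition of $\mathcal{R}_{n,\cD}(\mathcal{F})$ in the excerpt omits the absolute value. These two quantities coincide whenever $\mathcal{F}$ is closed under negation (then $\sup_f |\sum_i \sigma_i f(\xx_i)| = \sup_f \sum_i \sigma_i f(\xx_i)$, since negating the maximizing $f$ flips the sign), which holds for the symmetric, origin-containing classes arising in the kernel applications later in the paper; for a fully general $\mathcal{F}$ one should either state the bound in terms of the absolute-value Rademacher complexity or restrict to symmetric classes. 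I would make this reduction explicit at the end, after which identifying the right-hand side with $2\mathcal{R}_{n,\cD}(\mathcal{F})$ completes the proof.
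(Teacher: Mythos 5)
Your proof is correct, and it is the classical symmetrization (ghost-sample) argument — which is exactly the standard proof this result rests on; the paper states \Cref{thm:rademacher-generalization-bound} as a known preliminary fact and does not reprove it, so there is no alternative argument to compare against. Your caveat about the absolute value is not pedantry but a genuine necessity: with the paper's definition of $\mathcal{R}_{n,\cD}(\cF)$ (no absolute value around $\sum_i \sigma_i f(\xx_i)$), the two-sided bound \eqref{eq:rademacher-gen-bound} is false for a general class — take $\cF = \{f\}$ a singleton with $f$ non-constant, so the left-hand side is the (positive) mean absolute deviation of the empirical mean, while $\E_{\xx}\E_\sigma \frac{1}{n}\sum_i \sigma_i f(\xx_i) = 0$. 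Symmetrization therefore naturally terminates at the absolute-value Rademacher complexity, exactly as you derive, and identifying that quantity with $\mathcal{R}_{n,\cD}(\cF)$ requires $\cF$ to be closed under negation (or a restatement of the theorem with the absolute-value complexity). It is worth noting that the classes the paper actually applies this theorem to, such as the logistic-loss class in \Cref{lem:rademacher-complexity-log-reg}, consist of non-negative functions and are \emph{not} closed under negation, so strictly speaking the paper itself needs the absolute-value formulation; your version of the statement is the careful one.
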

Note that if $\mathcal{F}$ is in addition a family of functions bounded by $c$, a stronger concentration guarantees are known for~\eqref{eq:rademacher-gen-bound}. That is, with probability $1-\delta$, we have
\begin{equation*}
    \sup_{f \in \mathcal{F}} \left|\frac{1}{n} \sum_{i \leq n}f(\xx_i) - \E_{\xx} f(\xx)\right| \lesssim \mathcal{R}_{n, \cD}(\cF) + \frac{c \sqrt{\log (1/\delta)}}{n},
\end{equation*}
but we will not be using this bound explicitly.

\subsection*{Concentration of Random Variables}

\begin{lemma}[Hoeffding's Inequality]\label{lem:hoeff}
    Let $X_1,\cdots, X_n$ be independent random variables such that $a\leq X_i\leq b$, and $\E[X_i] = \mu$ for all $i$.
Consider the average of these random variables, $\overline{X} = \frac{1}{n}\sum_i X_i$. Then, for all $t>0$,
\[
\Pr\left[|\overline{X} - \mu|\geq t\right] \leq 2\exp\left\{-\frac{2nt^2}{(b-a)^2}\right\}.
\]
\end{lemma}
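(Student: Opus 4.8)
The plan is to use the exponential moment method (the Chernoff bound) combined with a bound on the moment generating function of each bounded summand. First I would handle the upper tail $\Pr[\overline{X} - \mu \geq t]$ and then recover the lower tail by applying the identical argument to the variables $-X_i$, paying a factor of two through a union bound to obtain the two-sided statement. Writing $S = \sum_i (X_i - \mu)$, so that $\{\overline{X} - \mu \geq t\} = \{S \geq nt\}$, I would apply Markov's inequality to the nonnegative random variable $e^{\lambda S}$ for an arbitrary $\lambda > 0$:
\[
\Pr[S \geq nt] = \Pr[e^{\lambda S} \geq e^{\lambda n t}] \leq e^{-\lambda n t}\,\E[e^{\lambda S}].
\]
By independence of the $X_i$ the moment generating function factorizes, $\E[e^{\lambda S}] = \prod_i \E[e^{\lambda(X_i - \mu)}]$, which reduces the problem to bounding the moment generating function of a single centered summand $X_i - \mu$, a random variable supported in the interval $[a-\mu, b-\mu]$ of length $b-a$.

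The key technical step, and the main obstacle, is \emph{Hoeffding's Lemma}: for any random variable $Y$ with $\E Y = 0$ and $Y \in [a', b']$, one has $\E[e^{\lambda Y}] \leq \exp(\lambda^2 (b'-a')^2 / 8)$. I would prove this by studying the log-moment-generating function $\psi(\lambda) := \ln \E[e^{\lambda Y}]$. A direct differentiation gives $\psi(0) = 0$ and $\psi'(0) = \E Y = 0$, while $\psi''(\lambda)$ equals the variance of $Y$ under the exponentially tilted probability measure whose density is proportional to $e^{\lambda Y}$. Since $Y$ is supported in an interval of length $b' - a'$, its variance under any probability measure is at most $(b'-a')^2/4$ (the extremal case being the two-point distribution concentrated at the endpoints). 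A second-order Taylor expansion with remainder then yields $\psi(\lambda) \leq \frac{\lambda^2}{2}\cdot\frac{(b'-a')^2}{4} = \lambda^2 (b'-a')^2 / 8$, which is the claim.

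Applying Hoeffding's Lemma to each of the $n$ factors with $b' - a' = b - a$ gives $\E[e^{\lambda S}] \leq \exp(n \lambda^2 (b-a)^2 / 8)$, and hence
\[
\Pr[\overline{X} - \mu \geq t] \leq \exp\left(-\lambda n t + \frac{n \lambda^2 (b-a)^2}{8}\right).
\]
The final step is to optimize over the free parameter $\lambda > 0$: the exponent is a convex quadratic in $\lambda$ minimized at $\lambda^\star = 4t/(b-a)^2$, and substituting $\lambda^\star$ collapses the bound to $\exp(-2 n t^2 / (b-a)^2)$. Running the same argument on $-X_i$ controls $\Pr[\overline{X} - \mu \leq -t]$ by the identical quantity, and a union bound over the two one-sided events produces the stated factor of $2$. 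I do not anticipate any delicate issues beyond Hoeffding's Lemma itself; the Chernoff step and the $\lambda$-optimization are routine once the single-summand moment generating function bound is established.
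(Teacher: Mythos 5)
Your proof is correct and complete: the Chernoff exponential-moment step, Hoeffding's Lemma via the second derivative of the log-moment-generating function (with the variance of an interval-supported variable bounded by $(b'-a')^2/4$), and the optimization at $\lambda^\star = 4t/(b-a)^2$ yielding the exponent $-2nt^2/(b-a)^2$, plus the factor $2$ from the two-sided union bound, all check out. The paper states this lemma as a standard preliminary fact without proof, so there is no in-paper argument to compare against; what you have written is the canonical derivation of this classical inequality.
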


\section{Covariate Shift for Gaussian Distributions \label{sec:gaussians-optimal}}

In this section, we present our results for Gaussian distributions. We prove that it is sufficient to learn the means and variances of the training and test distributions to within an $\varepsilon$ error in order to solve the covariate-shifted mean estimation problem with $\varepsilon$ accuracy. In particular, we prove,
\begin{theorem}\label{thm:CVGauss}
Let $\ptr = \cN(\mtr,\str)$ and $\pte = \cN(\mte,\ste)$, where $\|\mtr-\mte\|_2\leq O(1)$ and $\|\str^{-1}-\ste^{-1}\|_F \leq O(1)$. Also, let $\|\mtr\|_2\leq O(1), \|\str^{-1}\|_{op}\leq O(1)$. For any $\ff$ such that $\sup_{\xx\in \mathbb{R}^d}|\ff(\xx)|\leq 1$, Algorithm~\ref{alg:CVGauss} returns $Z$ such that with probability at least $1-\delta$,
\begin{equation}\label{eq:ResGauss}
    |Z-\E_{\xx\in \pte}\ff(\xx)|\leq \varepsilon.
\end{equation}

Furthermore, Algorithm~\ref{alg:CVGauss} requires sampling at most $O\left(\frac{d^2}{\varepsilon^2 }\log\frac{1}{\delta}\right)$ samples of $(\xx_i,\ff(\xx_i)), \xx_i \sim \ptr$ and $\widetilde{\xx_i}\sim\pte$. 
\end{theorem}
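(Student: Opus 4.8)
The plan is to run the obvious plug-in importance-weighting scheme and control its error by splitting into a deterministic bias term and a statistical fluctuation term. Concretely, Algorithm~\ref{alg:CVGauss} should first spend $O((d^2 + d\log(1/\delta))/\varepsilon^2)$ samples from each of $\ptr$ and $\pte$ to produce Gaussian estimates $\widehat{\ptr} = \cN(\widehat{\mtr},\widehat{\str})$ and $\widehat{\pte} = \cN(\widehat{\mte},\widehat{\ste})$ via the empirical mean/covariance of Eq.~\eqref{eq:samples}; by Lemma~\ref{lem:MeanGauss} and Lemma~\ref{lem:CovGauss} these are accurate in the \emph{whitened} sense, i.e. the Mahalanobis mean error and the relative covariance error (in operator, hence $O(\varepsilon)$ Frobenius, norm) are both $O(\varepsilon)$. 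Then, on fresh labeled samples $\xx_1,\dots,\xx_n \sim \ptr$, I would output $Z = \tfrac1n\sum_i w(\xx_i)\ff(\xx_i)$ with weights $w(\xx) := \widehat{\pte}(\xx)/\widehat{\ptr}(\xx)$. Writing the idealized weight $w^{\star} := \pte/\ptr$, which satisfies $\E_{\xx\sim\ptr}[w^{\star}\ff] = \E_{\xx\sim\pte}\ff$, the total error splits as
\begin{equation*}
\left|Z - \E_{\xx\sim\pte}\ff(\xx)\right| \le \underbrace{\left|Z - \E_{\xx\sim\ptr}[w\,\ff]\right|}_{\text{fluctuation}} + \underbrace{\E_{\xx\sim\ptr}\left|w - w^{\star}\right|}_{\text{bias}},
\end{equation*}
where the bias bound uses $|\ff|\le 1$.

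For the bias, I would factor the integrand as $w - w^{\star} = w^{\star}\!\left(\tfrac{\widehat{\pte}}{\pte}\cdot\tfrac{\ptr}{\widehat{\ptr}} - 1\right)$ and pass to an expectation over $\pte$: setting $a := \widehat{\pte}/\pte$ and $b := \ptr/\widehat{\ptr}$ (both close to $1$), this gives $\E_{\xx\sim\ptr}|w-w^{\star}| = \E_{\xx\sim\pte}|ab - 1| \le \E_{\xx\sim\pte}\big[b\,|a-1|\big] + \E_{\xx\sim\pte}|b-1|$. The second term is exactly $\Qdiv{1}{\widehat{\ptr}}{\ptr}{\pte}$ from Definition~\ref{def:Q}, and the first is at most $\big(\E_{\xx\sim\pte}b^2\big)^{1/2}\,\Qdiv{2}{\pte}{\widehat{\pte}}{\pte}$ by Cauchy--Schwarz. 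Thus the whole bias reduces to two kinds of Gaussian quantities: second moments of a ratio of two nearby Gaussians (which I must show are $O(1)$) and discrepancies $\Qdiv{s}{\cdot}{\cdot}{\cdot}$ of nearby Gaussians (which I must show are $O(\varepsilon)$).

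The engine for both is a single moment computation. For nearby Gaussians the log-ratio, e.g. $\ln(\ptr/\widehat{\ptr})$, is a quadratic form $Q(\xx)$ whose quadratic part has coefficient matrix $\tfrac12(\widehat{\str}^{-1}-\str^{-1})$; after whitening, the guarantees of Lemma~\ref{lem:CovGauss} and Lemma~\ref{lem:MeanGauss} force the Frobenius norm of the quadratic part and the norm of the linear part to be $O(\varepsilon)$, while the log-determinant constant cancels the mean of $Q$ up to $O(\varepsilon)$. By Fact~\ref{fact:quad-gauss}, $Q$ is then subexponential with variance proxy $O(\varepsilon^2)$ and scale $O(\varepsilon)$, so $\E|Q| = O(\varepsilon)$ and, crucially, its exponential moments stay $O(1)$; combining $|e^{Q} - 1| \le |Q|\,e^{|Q|}$ with Cauchy--Schwarz yields $\Qdiv{1}{\widehat{\ptr}}{\ptr}{\pte} = \E_{\xx\sim\pte}|e^{Q}-1| = O(\varepsilon)$, and the same control bounds $\E_{\xx\sim\pte}b^2 = O(1)$ and $\Qdiv{2}{\pte}{\widehat{\pte}}{\pte} = O(\varepsilon)$. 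An analogous but now $O(1)$-sized computation shows $\E_{\xx\sim\ptr}[w^2] = R_2(\widehat{\pte}\,\|\,\widehat{\ptr})\cdot O(1) = O(1)$, so each $w(\xx_i)\ff(\xx_i)$ has variance $O(1)$; a median-of-means estimate (or a Bernstein-type bound) then makes the fluctuation term at most $\varepsilon$ with probability $1-\delta$ using $O(\varepsilon^{-2}\log(1/\delta))$ samples. Since learning dominates, the total is $O(d^2\varepsilon^{-2}\log(1/\delta))$.

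The main obstacle is this Gaussian moment lemma, and specifically keeping every quadratic form and change-of-measure factor genuinely $O(1)$. The subtlety is that $w = \widehat{\pte}/\widehat{\ptr}$ is an \emph{exponential} of a quadratic form whose coefficients are of size $\|\ste^{-1}-\str^{-1}\|_F = O(1)$ (not $O(\varepsilon)$), so its tails are heavy; the finiteness and $O(1)$-ness of $\E_{\xx\sim\ptr}w^2$ and of the exponential moments $\E\,e^{|Q|}$ is exactly what the hypotheses $\|\str^{-1}-\ste^{-1}\|_F \le O(1)$, $\|\mtr-\mte\|_2 \le O(1)$, and $\|\str^{-1}\|_{op}\le O(1)$ are calibrated to guarantee --- they must ensure the relevant conjugated matrices (e.g. $2\widehat{\str}^{-1}-\widehat{\ste}^{-1}$) stay positive definite and that whitening by one covariance does not blow up the other. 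Pinning the constants in Fact~\ref{fact:quad-gauss} small enough that the needed exponential moments exist, and carefully handling the mismatch between whitening under $\ptr$ versus integrating under $\pte$, is where the real work lies; the rest is bookkeeping.
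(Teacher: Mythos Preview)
Your proposal is correct and follows essentially the same outline as the paper: learn the Gaussians, bound the bias of the plug-in estimator by reducing to a $Q$-divergence controlled via the subexponentiality of Gaussian quadratic forms (Fact~\ref{fact:quad-gauss}), bound the second moment by an analogous $O(1)$ computation, and finish with Chebyshev plus median-of-means. The only cosmetic difference is in the bias decomposition: the paper routes through $\widehat{\pte}$ rather than $\pte$ to obtain the slightly tidier bound $\Qdiv{1}{\widehat{\ptr}}{\ptr}{\widehat{\pte}} + d_{TV}(\widehat{\pte},\pte)$ (Lemma~\ref{cor:approx-Q}), which avoids your extra Cauchy--Schwarz factor $(\E_{\pte} b^2)^{1/2}$; the subexponential engine (Lemma~\ref{lem:q_log_subexp}) and the delicate constant-pinning you correctly flag are identical.
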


For isotropic Gaussians, we can prove a tighter bound on the sample complexity.
\begin{theorem}\label{thm:CVGaussIso}
Let $\ptr = \cN(\mtr,\II)$ and $\pte = \cN(\mte,\II)$, where $\|\mtr-\mte\|_2\leq O(1)$. For any $\ff$ such that $\sup_{\xx\in \mathbb{R}^d}|\ff(\xx)|\leq 1$, Algorithm~\ref{alg:CVGaussIso} returns $Z$ such that with probability at least $1-\delta$,
\begin{equation}\label{eq:ResGaussIso}
    |Z-\E_{\xx\in \pte}\ff(\xx)|\leq \varepsilon.
\end{equation}

Furthermore, Algorithm~\ref{alg:CVGaussIso} requires sampling at most $O\left(\frac{d}{\varepsilon^2 }\log\frac{1}{\delta}\right)$ samples of $(\xx_i,\ff(\xx_i)), \xx_i \sim \ptr$ and $\widetilde{\xx_i}\sim\pte$.
\end{theorem}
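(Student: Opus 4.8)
The plan is to instantiate the general importance-weighting strategy from the introduction in the isotropic setting, where the density ratio has an especially simple affine logarithm. Concretely, I would first draw $O((d + \log(1/\delta))/\varepsilon^2)$ samples from each of $\ptr$ and $\pte$ and use them (via Lemma~\ref{lem:MeanGauss} with $\str = \ste = \II$) to produce estimates $\widehat{\mtr}, \widehat{\mte}$ with $\|\widehat{\mtr} - \mtr\|_2, \|\widehat{\mte} - \mte\|_2 \le \varepsilon'$ for $\varepsilon' = \Theta(\varepsilon)$, with probability $1 - \delta/2$. I then set $\widehat{\ptr} = \cN(\widehat{\mtr}, \II)$, $\widehat{\pte} = \cN(\widehat{\mte}, \II)$, $\widehat{r} = \widehat{\pte}/\widehat{\ptr}$, and return $Z = \frac1n\sum_{i}\widehat{r}(\xx_i)\ff(\xx_i)$ over labeled samples $\xx_i \sim \ptr$. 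To keep $\widehat{r}$ independent of these samples I would split the training data, using a disjoint batch for $\widehat{\mtr}$. Conditioning on the good event that the means are accurate, the error splits as usual into a bias term $|\E_{\xx\sim\ptr}\widehat{r}(\xx)\ff(\xx) - \E_{\xx\sim\pte}\ff(\xx)|$ and a concentration term $|Z - \E_{\xx\sim\ptr}\widehat{r}(\xx)\ff(\xx)|$.

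For the bias, I would use that $r := \pte/\ptr$ satisfies $\E_{\xx\sim\ptr} r(\xx)\ff(\xx) = \E_{\xx\sim\pte}\ff(\xx)$ exactly, so since $|\ff|\le 1$ the bias is at most $\E_{\xx\sim\ptr}|\widehat{r}(\xx) - r(\xx)|$. Writing $\widehat{r}/r = e^{g}$, the key structural fact is that $g(\xx) = \ln\widehat{r}(\xx) - \ln r(\xx)$ is affine in $\xx$ (both logs are affine for equal-covariance Gaussians), so a change of measure gives bias $\le \E_{\xx\sim\ptr} r(\xx)|e^{g(\xx)} - 1| = \E_{\xx\sim\pte}|e^{g(\xx)} - 1|$, and under $\pte$ the affine function $g$ is a scalar Gaussian $\cN(\mu_g, \sigma_g^2)$. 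A short computation gives $\sigma_g^2 = \|(\widehat{\mte}-\mte) - (\widehat{\mtr}-\mtr)\|_2^2 = O(\varepsilon'^2)$ and, using $\|\mtr\|_2, \|\mte\|_2 = O(1)$, $\mu_g = O(\varepsilon')$. Bounding $\E|e^g - 1| \le (\E(e^g-1)^2)^{1/2}$ and evaluating the Gaussian moment generating function, $\E(e^g-1)^2 = e^{2\mu_g + 2\sigma_g^2} - 2e^{\mu_g + \sigma_g^2/2} + 1 = O(\mu_g^2 + \sigma_g^2) = O(\varepsilon^2)$, so the bias is $O(\varepsilon)$.

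For concentration, the first step is to bound the variance of a single term. Since $|\widehat{r}\ff| \le \widehat{r}$ and $\widehat{r}(\xx) = \exp(\langle\widehat{\mte}-\widehat{\mtr},\xx\rangle + c)$ is log-normal under $\ptr$, the log-normal moment generating function yields $\E_{\xx\sim\ptr}\widehat{r}(\xx)^2 = \exp(2\langle\mtr,\widehat{\mte}-\widehat{\mtr}\rangle + 2\|\widehat{\mte}-\widehat{\mtr}\|_2^2 + 2c)$; crucially every term here is $O(1)$ under the hypotheses $\|\mte - \mtr\|_2, \|\mtr\|_2 = O(1)$, so the variance is $O(1)$ \emph{independent of the dimension} $d$. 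The remaining issue is that $\widehat{r}$ is log-normal and hence not subexponential, so a direct Chernoff/Bernstein bound is unavailable for the $\log(1/\delta)$-rate guarantee. I would resolve this with a median-of-means estimator: split the $n$ samples into $O(\log(1/\delta))$ batches, average within each batch, and output the median; the bounded second moment then gives $|Z - \E_{\xx\sim\ptr}\widehat{r}\ff| \le \varepsilon$ with probability $1 - \delta/2$ using only $O(\log(1/\delta)/\varepsilon^2)$ samples.

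Combining the two bounds by the triangle inequality and rescaling $\varepsilon$ by a constant gives the claim with probability $1-\delta$, where the total sample budget is dominated by mean estimation, $O((d + \log(1/\delta))/\varepsilon^2) = O((d/\varepsilon^2)\log(1/\delta))$, matching the statement. I expect the main obstacle to be the concentration step: because the estimated ratio $\widehat{r}$ has log-normal (heavier-than-subexponential) tails, the standard high-probability tools do not apply, and one must both establish the dimension-free variance bound and invoke a robust-mean device such as median-of-means (or a careful truncation argument) to obtain the $\log(1/\delta)$ dependence without degrading the $1/\varepsilon^2$ rate.
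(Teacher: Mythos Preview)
Your proposal is correct and follows the same overall strategy as the paper: learn the two means to accuracy $\varepsilon$ via Lemma~\ref{lem:MeanGauss}, show the plug-in estimator $\widehat r\ff$ has $O(\varepsilon)$ bias and $O(1)$ variance, then use Chebyshev plus the median-of-means trick to get the high-probability $\log(1/\delta)$ rate. The sample-splitting, the dimension-free variance bound via the log-normal MGF, and the recognition that the heavy log-normal tails force median-of-means rather than Bernstein are exactly what the paper does.

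The one genuine difference is the bias decomposition. The paper triangulates through $\widehat\pte$, obtaining (Lemma~\ref{cor:approx-Q}) the bound $Q_1(\widehat\ptr;\ptr\mid\widehat\pte)+d_{TV}(\widehat\pte,\pte)$, and then controls $Q_1\le Q_2$ via its general subexponential Lemma~\ref{lem:q_log_subexp} applied to $Z=\ln(\ptr/\widehat\ptr)$ under $\widehat\pte$. You instead triangulate through the \emph{true} ratio $r$, reducing the bias to $\E_{\pte}|e^{g}-1|$ for the single affine perturbation $g=\ln(\widehat r/r)$, and bound this directly by the Gaussian MGF. Your route is a bit more economical in the isotropic case because it handles both mean errors in one shot and avoids the separate TV term; the paper's $Q$-divergence framework is more modular and is reused verbatim for the non-isotropic Theorem~\ref{thm:CVGauss}. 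Both routes ultimately compute the same second-moment expansion $e^{2\mu+2\sigma^2}-2e^{\mu+\sigma^2/2}+1=O(\mu^2+\sigma^2)$.

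One small remark: you invoke $\|\mtr\|_2,\|\mte\|_2=O(1)$, which is not in the theorem statement (only $\|\mtr-\mte\|_2\le O(1)$ is assumed). This is harmless since the problem is translation-invariant, so one can center so that $\mtr=0$; the paper's Lemma~\ref{lem:isotropic-gaussians-result} makes the same implicit move.
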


\begin{algorithm}
\caption{Covariate shift for General Gaussian $\pte$ and $\ptr$}
 \label{alg:CVGauss}
  \begin{algorithmic}[1]
\STATE {\bfseries Input:} $d,\varepsilon,\delta$
 \STATE $K = O\left(\frac{d^2}{\varepsilon^2}\log \frac{1}{\delta}\right)$
\STATE Generate $K$ samples $\xx_i$ from $\ptr$ and $K$ samples $\widetilde{\xx}_i$ from $\pte$.
 \STATE Compute $\widehat{\mtr} = \frac{2}{K}\sum_{i\in [K/2]} \xx_i$
 \STATE Compute $\widehat{\mte} = \frac{2}{K}\sum_{i\in [K/2]} \widetilde{\xx}_i$
 \STATE Compute $\widehat{\str} = \frac{1}{K}\sum_{i\in [K/2]}(\xx_{2i}-\xx_{2i-1})(\xx_{2i}-\xx_{2i-1})^{\top}$
 \STATE Compute $\widehat{\ste} = \frac{1}{K}\sum_{i\in [K/2]}(\widetilde{\xx}_{2i}-\widetilde{\xx}_{2i-1})(\widetilde{\xx}_{2i}-\widetilde{\xx}_{2i-1})^{\top}$
\STATE $\widehat{\ptr} = \cN(\widehat{\mtr},\widehat{\str})$, $\widehat{\pte} = \cN(\widehat{\mte},\widehat{\ste})$
\FOR{$t = 1,\cdots,O(\log(1/\delta))$}
\STATE Generate $m= O\left(\varepsilon^{-2}\right)$ samples $(\yy_i,\ff(\yy_i)),$ with $\yy_i \sim \ptr.$
 \FOR{$i\in [m]$}
 \STATE $Z_i = \frac{\widehat{\pte}(\yy_i)}{\widehat{\ptr}(\yy_i)}\ff(\yy_i)$
 \ENDFOR
 \STATE $\overline{Z}_t = \frac{1}{m}\sum_{i=1}^m Z_i$
 \ENDFOR
 \STATE  {\bfseries Return:} Median of $\overline{Z}_t$'s
 \end{algorithmic}
 \end{algorithm}

 \begin{algorithm}
\caption{Covariate shift for Isotropic Gaussian $\pte$ and $\ptr$}
 \label{alg:CVGaussIso}
  \begin{algorithmic}[1]
\STATE {\bfseries Input:} $d,\varepsilon,\delta$
 \STATE $K = O\left(\frac{d}{\varepsilon^2}\log\frac{1}{\delta}\right)$
\STATE Generate $K$ samples $\xx_i$ from $\ptr$ and $K$ samples $\widetilde{\xx}_i$ from $\pte$.
 \STATE Compute $\widehat{\mtr} = \frac{2}{K}\sum_{i\in [K/2]} \xx_i$
\STATE Compute $\widehat{\mte} = \frac{2}{K}\sum_{i\in [K/2]} \widetilde{\xx}_i$
\STATE $\widehat{\ptr} = \cN(\widehat{\mtr},\II)$, $\widehat{\pte} = \cN(\widehat{\mte},\II)$
\FOR{$t = 1,\cdots,O(\log(1/\delta))$}
\STATE Generate $m= O\left(\varepsilon^{-2}\right)$ samples $(\yy_i,\ff(\yy_i)),$ with $\yy_i \sim \ptr.$
 \FOR{$i\in [m]$}
 \STATE $Z_i = \frac{\widehat{\pte}(\yy_i)}{\widehat{\ptr}(\yy_i)}\ff(\yy_i)$
 \ENDFOR
 \STATE $\overline{Z}_t = \frac{1}{m}\sum_{i=1}^m Z_i$
 \ENDFOR
 \STATE  {\bfseries Return:} Median of $\overline{Z}_t$'s
 \end{algorithmic}
 \end{algorithm}

 Our analysis proves that the estimator $X = \frac{\widehat{\pte}}{\widehat{\ptr}}\ff$ is a good estimator. Note that this estimator is biased. We divide the analysis into two parts: in the first part, we show that our estimator has a small bias, and in the second part, we provide a bound on its variance. Finally, we apply concentration bounds to establish the total sample complexity. 

 For the first part, we prove the following result. The proofs of these lemmas are deferred to Appendix~\ref{sec:ProofsGauss}.

 \begin{restatable}[Bound for Isotropic Gaussian Distributions]{lemma}{IsoGaussBias}
\label{lem:isotropic-gaussians-result}
Let $\ptr = \cN(\mtr,\II)$ and $\pte = \cN(\mte,\II)$ such that $\|\mtr\|_2\leq O(1)$ and $\|\mtr-\mte\|_2\leq B\leq O(1)$. Further, let $\widehat{\ptr}$ and $\widehat{\pte}$ be as defined in Algorithm~\ref{alg:CVGauss}. Then for any function $\ff$ such that $\sum_{\xx\in \mathbb{R}^d}|\ff(\xx)|\leq 1$,
\[
\left|\E_{\xx\sim \ptr} \frac{\widehat{\pte}(\xx)}{\widehat{\ptr}(\xx)} \ff(\xx) - \E_{\xx \sim \pte} \ff(\xx)\right| \leq O(\varepsilon).
\]
\end{restatable}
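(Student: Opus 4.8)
The plan is to start from the exact importance-weighting identity $\E_{\xx\sim\ptr}\frac{\pte(\xx)}{\ptr(\xx)}\ff(\xx) = \E_{\xx\sim\pte}\ff(\xx)$, which rewrites the quantity to be bounded as a difference of two importance-weighted expectations against $\ptr$:
\[
\E_{\xx\sim\ptr}\frac{\widehat{\pte}(\xx)}{\widehat{\ptr}(\xx)}\ff(\xx) - \E_{\xx\sim\pte}\ff(\xx) = \E_{\xx\sim\ptr}\left(\frac{\widehat{\pte}(\xx)}{\widehat{\ptr}(\xx)} - \frac{\pte(\xx)}{\ptr(\xx)}\right)\ff(\xx).
\]
Since $|\ff|\le 1$, the left-hand side of the lemma is at most $\E_{\xx\sim\ptr}\bigl|\frac{\widehat{\pte}(\xx)}{\widehat{\ptr}(\xx)} - \frac{\pte(\xx)}{\ptr(\xx)}\bigr|$. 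I would then factor out $\frac{\pte(\xx)}{\ptr(\xx)}$ and apply the change of measure once more, reducing the goal to controlling
\[
\E_{\xx\sim\pte}\left|\frac{\widehat{\pte}(\xx)/\widehat{\ptr}(\xx)}{\pte(\xx)/\ptr(\xx)} - 1\right|.
\]
This is the clean form to aim for: the potentially dangerous factor $\pte/\ptr$ has been absorbed into the sampling measure, leaving only a ratio of ratios whose moments under $\pte$ are well behaved.

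Next I would exploit that for isotropic Gaussians the log density ratios are affine, taking $\widehat{\ptr}=\cN(\widehat{\mtr},\II)$ and $\widehat{\pte}=\cN(\widehat{\mte},\II)$. Then $\ln\frac{\widehat{\pte}(\xx)/\widehat{\ptr}(\xx)}{\pte(\xx)/\ptr(\xx)}$ is affine in $\xx$:
\[
\ln\frac{\widehat{\pte}(\xx)/\widehat{\ptr}(\xx)}{\pte(\xx)/\ptr(\xx)} = \langle \xx, \Delta\rangle + c, \quad \Delta := (\widehat{\mte}-\widehat{\mtr}) - (\mte - \mtr),
\]
with $c$ a constant built from $\tfrac12(\|\mte\|^2 - \|\widehat{\mte}\|^2)$ and $\tfrac12(\|\widehat{\mtr}\|^2 - \|\mtr\|^2)$. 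Conditioning on the high-probability event of Lemma~\ref{lem:MeanGauss} (which holds given the $K = O(d\varepsilon^{-2}\log(1/\delta))$ samples forming $\widehat{\mtr},\widehat{\mte}$), I obtain $\|\widehat{\mtr} - \mtr\|_2, \|\widehat{\mte} - \mte\|_2 \le O(\varepsilon)$, so $\|\Delta\|_2 \le O(\varepsilon)$ by the triangle inequality. For $c$, writing $\|\widehat{\mte}\|^2 - \|\mte\|^2 = \langle \widehat{\mte} - \mte, \widehat{\mte} + \mte\rangle$ and using $\|\mtr\|,\|\mte\| \le O(1)$ (the latter from $\|\mte\|\le\|\mtr\|+B$) yields $|c|\le O(\varepsilon)$.

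Finally, under $\xx\sim\pte=\cN(\mte,\II)$ the scalar $Y := \langle \xx,\Delta\rangle + c$ is Gaussian with mean $\langle\mte,\Delta\rangle + c$ and variance $\|\Delta\|_2^2$, so both its mean and standard deviation are $O(\varepsilon)$ by the previous bounds together with $\|\mte\|\le O(1)$. It then remains to bound $\E_{\xx\sim\pte}|e^Y - 1|$. Using the elementary inequality $|e^y - 1| \le |y|e^{|y|}$ and Cauchy--Schwarz,
\[
\E|e^Y - 1| \le \sqrt{\E[Y^2]}\cdot\sqrt{\E[e^{2|Y|}]},
\]
where $\sqrt{\E[Y^2]} = O(\varepsilon)$ and $\E[e^{2|Y|}] \le \E[e^{2Y}] + \E[e^{-2Y}] = O(1)$ via the Gaussian moment generating function $\E e^{tY} = \exp(t\,\E Y + t^2\,\mathrm{Var}(Y)/2)$ at $t=\pm 2$ (both $\E Y$ and $\mathrm{Var}(Y)$ are $O(\varepsilon)$, hence bounded). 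This gives the claimed $O(\varepsilon)$ bound. I expect the main obstacle to be the bookkeeping ensuring that $c$ and the mean $\langle\mte,\Delta\rangle + c$ are genuinely $O(\varepsilon)$ rather than $O(1)$: this is precisely where the hypotheses $\|\mtr\|\le O(1)$ and $B\le O(1)$ are indispensable, since without them the shift inside the exponential could be large and the estimator badly biased. The remaining steps are routine Gaussian moment estimates.
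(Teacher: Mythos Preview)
Your argument is correct, but the decomposition differs from the paper's. The paper splits the bias via an intermediate stop at $\widehat{\pte}$: it shows
\[
\left|\E_{\xx\sim \ptr} \frac{\widehat{\pte}(\xx)}{\widehat{\ptr}(\xx)} \ff(\xx) - \E_{\xx \sim \pte} \ff(\xx)\right| \leq \E_{\xx\sim\widehat{\pte}}\left|\frac{\ptr(\xx)}{\widehat{\ptr}(\xx)} - 1\right| + d_{TV}(\widehat{\pte},\pte),
\]
and then bounds the first term by analyzing $Z=\ln(\ptr/\widehat{\ptr})$ under $\xx\sim\widehat{\pte}$ through a general subexponential lemma (Lemma~\ref{lem:q_log_subexp}) that controls $Q_2$ by the mean and subexponential parameter of $Z$. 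Your route instead changes measure to the true $\pte$ and analyzes the single log ratio-of-ratios $Y=\ln\bigl((\widehat{\pte}/\widehat{\ptr})/(\pte/\ptr)\bigr)$ directly, then bounds $\E_{\pte}|e^Y-1|$ with the elementary inequality $|e^y-1|\le|y|e^{|y|}$ plus Cauchy--Schwarz. The paper's decomposition is more modular~--- the $Q$-divergence term and the $d_{TV}$ term isolate the effects of the two estimation errors and the same template is reused verbatim in the non-isotropic and TV-learnability sections. Your approach is more compact for the isotropic case (no separate TV term, no general subexponential lemma), and the final Gaussian moment computation you do is essentially the content of the paper's Lemma~\ref{lem:abs_one_minus_mgf}, which appears later in a different context. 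Both arguments use the hypotheses $\|\mtr\|,B\le O(1)$ at the same point, namely to keep the affine constant term $O(\varepsilon)$ rather than $O(1)$.
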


\begin{restatable}[Bound for General Gaussian Distributions]{lemma}{GaussBias}
\label{lem:Q_gauss_non_iso}
    Let $\pte := \cN(\mte, \ste), \ptr := \cN(\mtr, \str)$ and $\widehat{\ptr} = \cN(\widehat{\mtr} , \widehat{\str}),\widehat{\pte} = \cN(\widehat{\mte} , \widehat{\ste})$ be as defined in Algorithm~\ref{alg:CVGauss}. If $\|\mtr-\mte\|_2 \leq B\leq O(1)$, $\|\str^{-1}\|_{op}\leq O(1)$ then, for any function $\ff$, such that $\sup_{\xx\in \mathbb{R}^d}|\ff(\xx)|\leq 1$,
    \[
\left|\E_{\xx\sim \ptr} \frac{\widehat{\pte}(\xx)}{\widehat{\ptr}(\xx)} \ff(\xx) - \E_{\xx \sim \pte} \ff(\xx)\right| \leq O(\varepsilon).
\]
\end{restatable}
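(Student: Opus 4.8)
The plan is to reduce the bias to an $L_1$ error between the estimated and true density ratios, and then exploit that for Gaussians this ratio is the exponential of a quadratic form with provably small coefficients. First, using the exact change of measure $\E_{\xx\sim\pte}\ff = \E_{\xx\sim\ptr}\frac{\pte}{\ptr}\ff$ together with $\sup|\ff|\le 1$, the quantity to control is at most
\[
\E_{\xx\sim\ptr}\left|\frac{\widehat\pte(\xx)}{\widehat\ptr(\xx)} - \frac{\pte(\xx)}{\ptr(\xx)}\right|.
\]
I would split this with the triangle inequality through the intermediate $\widehat\pte/\ptr$:
\[
\E_{\xx\sim\ptr}\left|\frac{\widehat\pte}{\widehat\ptr}-\frac{\pte}{\ptr}\right| \le \E_{\xx\sim\ptr}\frac{|\widehat\pte-\pte|}{\ptr} + \E_{\xx\sim\widehat\pte}\left|\frac{\ptr}{\widehat\ptr}-1\right|.
\]
The first term telescopes to $\int|\widehat\pte-\pte| = d_{TV}(\widehat\pte,\pte)$, which is $\le \varepsilon/2$ on the high-probability event where the Gaussian TV-learning bound holds for the $K$ samples drawn in Algorithm~\ref{alg:CVGauss}. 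The entire difficulty is therefore concentrated in the second term $\E_{\xx\sim\widehat\pte}|\ptr/\widehat\ptr - 1|$.

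For that term I would write $Q(\xx) := \ln(\ptr(\xx)/\widehat\ptr(\xx))$, the quadratic
\[
Q(\xx) = \tfrac12\xx^\top \AA\,\xx + \ell^\top\xx + c, \qquad \AA := \widehat\str^{-1}-\str^{-1},
\]
with $\ell = \str^{-1}\mtr - \widehat\str^{-1}\widehat\mtr$ and $c$ the log-determinant/constant term, and bound $\E_{\widehat\pte}|e^{Q}-1|$ via $|e^t-1|\le|t|e^{|t|}$ and Cauchy--Schwarz, giving $(\E_{\widehat\pte}Q^2)^{1/2}(\E_{\widehat\pte}e^{2|Q|})^{1/2}$. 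The two inputs are the mean and covariance estimation guarantees: \Cref{lem:CovGauss} gives $\|\str^{-1/2}\widehat\str\str^{-1/2}-\II\|_{op}\le\varepsilon/\sqrt{2d}$, which together with $\|\str^{-1}\|_{op}=O(1)$ converts to $\|\AA\|_{op}=O(\varepsilon/\sqrt d)$ and hence $\|\AA\|_F\le\sqrt d\,\|\AA\|_{op}=O(\varepsilon)$, while \Cref{lem:MeanGauss} bounds $\ell$ in the $\str^{-1}$ geometry. Expanding $Q$ and invoking Fact~\ref{fact:quad-gauss} (quadratic forms of a bounded-covariance Gaussian are subexponential with scale $\|\AA\|_F$ and tail parameter $\|\AA\|_{op}$) then yields $\E_{\widehat\pte}Q^2 = O(\varepsilon^2)$; the same subexponential control, since $\|\AA\|_{op}$ sits far below the inverse covariance of $\widehat\pte$, makes the Gaussian integral $\E_{\widehat\pte}e^{2|Q|}$ converge to $O(1)$. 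Combining, the ratio term is $O(\varepsilon)$ and the total bias is $O(\varepsilon)$.

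The main obstacle is this last step, and within it two coupled points. First, the bound must be dimension-free: the covariance estimate is only accurate to $\varepsilon/\sqrt d$ per direction, and it is essential that the $\sqrt d$ factor produced by passing from $\|\AA\|_{op}$ to $\|\AA\|_F$ (equivalently, by summing the $d$ subexponential contributions to $\E_{\widehat\pte}Q^2$ through Fact~\ref{fact:quad-gauss}) exactly cancels it, so that no polynomial-in-$d$ loss creeps into the final $O(\varepsilon)$. Second, and more delicate, I must guarantee that $\E_{\widehat\pte}e^{2|Q|}$ is finite and $O(1)$: since $\ptr/\widehat\ptr$ is an exponential of a quadratic, the Gaussian integral over $\widehat\pte$ converges only when the quadratic coefficient $\AA$ is dominated by the inverse covariance of the sampling measure. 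Here I would use $\|\str^{-1}\|_{op}=O(1)$ together with $\|\mtr-\mte\|_2\le O(1)$, $\|\str^{-1}-\ste^{-1}\|_F\le O(1)$ and the closeness of $\widehat\pte$ to $\pte$ to argue that the sampling covariance is controlled in the $\str$-geometry, so that $\|\AA\|_{op}=O(\varepsilon/\sqrt d)$ keeps the integral bounded. Care is needed because, unlike the isotropic case (\Cref{lem:isotropic-gaussians-result}), the ratio is no longer a bounded function and the absolute value inside the expectation blocks a purely explicit Gaussian integral, forcing the second-moment-plus-exponential-moment route above; the randomness of $\widehat\pte$ is handled throughout by conditioning on the $1-\delta$ success event of all three estimation lemmas.
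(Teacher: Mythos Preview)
Your approach is correct and matches the paper's almost exactly: the same decomposition into $d_{TV}(\widehat\pte,\pte)+\E_{\widehat\pte}\lvert\ptr/\widehat\ptr-1\rvert$ (this is precisely the paper's Lemma~\ref{cor:approx-Q}), the same identification of $Z=\ln(\ptr/\widehat\ptr)$ as a quadratic form under $\widehat\pte$, and the same use of Lemmas~\ref{lem:MeanGauss}--\ref{lem:CovGauss} with Fact~\ref{fact:quad-gauss} to conclude that $Z$ is $(O(\varepsilon),O(\varepsilon))$-subexponential with mean $O(\varepsilon)$.

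The one place you diverge is the final conversion step. You bound $\E\lvert e^Z-1\rvert$ via $|e^t-1|\le |t|e^{|t|}$ and Cauchy--Schwarz, which forces you to control $\E_{\widehat\pte} e^{2|Q|}$; you correctly flag this as the delicate point. The paper sidesteps this entirely: it bounds the larger quantity $Q_2^2=\E(e^Z-1)^2=\E e^{2Z}-2\E e^Z+1$ directly from the subexponential MGF at $\lambda=1,2$ and a second-order Taylor expansion (packaged as Lemma~\ref{lem:q_log_subexp}). Because only $e^{\lambda Z}$ for positive $\lambda$ appears, no absolute value arises and the ``does the Gaussian integral converge'' worry never materializes. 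Your route works too (since $e^{2|Q|}\le e^{2Q}+e^{-2Q}$ and the subexponential parameter $O(\varepsilon)$ allows $|\lambda|=2$), but the paper's expansion is the cleaner device here.
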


In the next part, we show that the variance of our estimator is small. The proof is again deferred to Appendix~\ref{sec:ProofsGauss}.

\begin{restatable}[Variance of our Estimator]{lemma}{VarianceGauss}\label{lem:Var}
     If $\|\str^{-1}\|_{op}, \|\ste^{-1}\|_{op} \leq O(1)$, $\|\str^{-1}-\ste^{-1}\|_F\leq O(1)$, and $\|\mtr\|_2, \|\mtr-\mte\|_2\leq O(1)$, then, the variance of our estimator can be bounded as,
    \[
    \E_{\xx\sim \ptr} \left[\left(\frac{\widehat{\pte}(\xx)}{\widehat{\ptr}(\xx)}\ff(\xx) - \E_{\xx\sim\pte}\ff(\xx)\right)^2\right] \leq O(1).
    \]
\end{restatable}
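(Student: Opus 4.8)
The plan is to reduce the second-moment bound to controlling a single ratio integral, and then to evaluate that integral in closed form. Since $|\ff|\le 1$ and the constant $c:=\E_{\xx\sim\pte}\ff(\xx)$ satisfies $|c|\le1$, the elementary inequality $(a-b)^2\le 2a^2+2b^2$ gives
\[
\E_{\xx\sim\ptr}\left[\left(\frac{\widehat{\pte}(\xx)}{\widehat{\ptr}(\xx)}\ff(\xx)-c\right)^2\right]\le 2\,\E_{\xx\sim\ptr}\left(\frac{\widehat{\pte}(\xx)}{\widehat{\ptr}(\xx)}\right)^2+2,
\]
using $\ff^2\le 1$. So it suffices to show $\E_{\xx\sim\ptr}(\widehat{\pte}/\widehat{\ptr})^2=O(1)$; note this is exactly a Rényi-type $R_2$ quantity, except that we integrate the \emph{estimated} ratio against the \emph{true} training measure $\ptr$.

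Second, I would write $\E_{\xx\sim\ptr}(\widehat{\pte}/\widehat{\ptr})^2=\int\ptr(\xx)\,\widehat{\pte}(\xx)^2/\widehat{\ptr}(\xx)^2\,\mathrm{d}\xx$ and use that every factor is Gaussian, so the integrand is the exponential of a quadratic form in $\xx$. Completing the square evaluates the integral in closed form, as a product of determinant factors times $\exp(\tfrac12 b^\top M^{-1}b-c_0)$, where the \emph{effective precision} is $M=\str^{-1}+2\widehat{\ste}^{-1}-2\widehat{\str}^{-1}$ and $b,c_0$ are the linear and constant terms assembled from the four means and precisions. This evaluation is valid precisely when $M\succ 0$, so verifying positive-definiteness of $M$ is the first thing that must be checked.

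Third, I would pass to the idealized quantities and track the estimation error. On the high-probability event of Lemmas~\ref{lem:MeanGauss} and~\ref{lem:CovGauss}, we have $\widehat{\str}^{-1}\approx\str^{-1}$, $\widehat{\ste}^{-1}\approx\ste^{-1}$ in operator norm and $\widehat{\mtr}\approx\mtr$, $\widehat{\mte}\approx\mte$, so $M\approx 2\ste^{-1}-\str^{-1}$. Writing $P:=\ste^{1/2}\str^{-1}\ste^{1/2}$, whose eigenvalues $p_i$ are those of $\ste\str^{-1}$, the determinant factors collapse to $\prod_i(2p_i-p_i^2)^{-1/2}$, while $M\succ 0$ is equivalent to $p_i\in(0,2)$ for every $i$. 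Both facts follow from a single estimate: since $P-\II=\ste^{1/2}(\str^{-1}-\ste^{-1})\ste^{1/2}$, the hypotheses $\|\str^{-1}-\ste^{-1}\|_F\le O(1)$ (with a small enough constant) together with the operator-norm bounds on the precisions give $\sum_i(p_i-1)^2=\|P-\II\|_F^2=O(1)$, which keeps all $p_i$ bounded away from $0$ and $2$ and makes $\prod_i(2p_i-p_i^2)^{-1/2}=\exp(-\tfrac12\sum_i\ln(1-(p_i-1)^2))=O(1)$. The remaining exponential factor is a quadratic form in $\mte-\mtr$ (in the equal-covariance case it is exactly $\exp((\mte-\mtr)^\top\str^{-1}(\mte-\mtr))$), hence $O(1)$ by $\|\mte-\mtr\|_2\le O(1)$ and $\|\str^{-1}\|_{op}\le O(1)$.

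The main obstacle is the estimation-error propagation in the third step. The determinant factors and the mean-dependent exponential each depend on $d$ (a product or sum over $d$ eigenvalues), and only their telescoping cancellation is dimension-free; I must therefore show that replacing $\str^{-1},\ste^{-1}$ by $\widehat{\str}^{-1},\widehat{\ste}^{-1}$ perturbs $\sum_i(p_i-1)^2$ and the mean quadratic by at most $O(1)$ \emph{uniformly in $d$}, rather than accumulating an $O(1)$ error per coordinate. This is exactly where the operator-norm covariance guarantee of Lemma~\ref{lem:CovGauss} is essential: since it controls $\|\str^{-1/2}\widehat{\str}\str^{-1/2}-\II\|_{op}$ at the scale $\varepsilon/\sqrt{2d}$, the induced Frobenius error over all $d$ directions stays $O(\varepsilon)$, which is strictly stronger than a bare Frobenius bound (as the remark following that lemma notes). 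Confirming that this preserves $M\succ 0$ and the $O(1)$ determinant bound, while keeping the computation honest about the random hatted quantities, is the most delicate part of the argument.
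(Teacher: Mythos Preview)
Your proposal is sound but follows a genuinely different route from the paper. You evaluate the Gaussian integral $\E_{\xx\sim\ptr}(\widehat{\pte}/\widehat{\ptr})^2$ in closed form by completing the square, then bound the determinant product $\prod_i(2p_i-p_i^2)^{-1/2}$ and the mean-quadratic exponential separately. The paper instead routes through the $Q_2$ functional: after the decomposition $\E[X^2]\le 2+2\,\Qdiv{2}{\widehat{\ptr}}{\widehat{\pte}}{\ptr}$ (and then $\E[(X-\mu)^2]\le\E[X^2]+2|\mu|\,|\E X-\mu|$, which also feeds in the bias lemma), it sets $Z:=\ln(\widehat{\pte}/\widehat{\ptr})$ under $\ptr$, writes $Z=\xx^\top\AA\xx+\bb^\top\xx+c$ with $\AA=(\widehat{\ste}^{-1}-\widehat{\str}^{-1})/2$, invokes Fact~\ref{fact:quad-gauss} to conclude that $Z$ is $(O(1),O(1))$-subexponential with mean $O(1)$, and then applies the general Lemma~\ref{lem:q_log_subexp} (subexponential log-ratio $\Rightarrow$ $Q_2$ bound). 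The paper's path is more modular---it reuses exactly the subexponential machinery already built for Lemma~\ref{lem:Q_gauss_non_iso} and never touches an explicit determinant---whereas your path is more transparent about where the covariance-closeness hypothesis enters, at the cost of having to verify $M\succ 0$ and control a $d$-fold eigenvalue product directly. Your identification of the perturbation step as the delicate point is accurate, and your observation that Lemma~\ref{lem:CovGauss} gives an $\varepsilon/\sqrt{d}$ operator-norm bound (hence $O(\varepsilon)$ Frobenius error on the precisions) is precisely what makes that step go through. One shared loose end: both your bound $\|P-\II\|_F=\|\ste^{1/2}(\str^{-1}-\ste^{-1})\ste^{1/2}\|_F\le O(1)$ and the paper's invocation of Fact~\ref{fact:quad-gauss} implicitly need $\|\str\|_{op}$ or $\|\ste\|_{op}$ bounded, not merely the precisions; the stated hypotheses do not literally provide this, so both arguments are reading the $O(1)$ assumptions a bit liberally.
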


Having established that our estimator has a small bias, i.e., $O(\epsilon)$, and a small variance, $O(1)$, the result follows directly from standard Chebyshev's inequality. 

For the high-probability statement, we apply the standard technique: given an estimator $Z$ that lies within the desired interval $I$ with probability $2/3$, we can take the median of $O(\log(1/\delta))$ independent realizations of this estimator to reduce the failure probability to $\delta$, using the Chernoff bound.

\subsection*{Proof of Theorems~\ref{thm:CVGauss} and~\ref{thm:CVGaussIso}}
\begin{proof}[Proof of Theorem~\ref{thm:CVGauss}]

From Lemmas~\ref{lem:MeanGauss} and \ref{lem:CovGauss}, it is sufficient to use $O(d^2/\varepsilon^2 \log1/\delta)$ samples to obtain $\widehat{\ptr}$ and $\widehat{\pte}$ to the required accuracy.

Now, from Lemma~\ref{lem:Q_gauss_non_iso} our estimator $X = \frac{\widehat{\pte}(\xx)}{\widehat{\ptr}(\xx)}\ff(\xx)$ for $\xx \sim \ptr$ satisfies 
\begin{equation}\label{eq:bias}
    |\E_{\xx\sim \ptr} X - \E_{\xx\sim \pte}\ff(\xx)| \leq \varepsilon.
\end{equation}

Further, from Lemma~\ref{lem:Var}, 
\[
\Var(X) = \E_{\xx\sim \ptr} \left[\left(\frac{\widehat{\pte}(\xx)}{\widehat{\ptr}(\xx)}\ff(\xx) - \E_{\xx\sim\pte}\ff(\xx)\right)^2\right] \leq O(1).
\]

We now use Chebyshev's inequality to obtain our sample complexity bounds. Since the variance of the estimator is at most $O(1)$ the variance of $Z =\frac{1}{m}\sum_i \frac{\widehat{\pte}}{\widehat{\ptr}}\ff(\xx_i)$ is at most $O(1/m)$. Now from Chebyshev's inequality and Eq.~\eqref{eq:bias}, for $m\geq O(1/\epsilon^2)$,
\[
\Pr\left[\left|\frac{1}{m}\sum_i \frac{\widehat{\pte}}{\widehat{\ptr}}\ff(\xx_i)-\E_{\xx\sim \pte}\ff(\xx)\right|\geq \epsilon\right] \leq \frac{Var(Z)}{\epsilon^2}\leq 0.1.
\]

We can now use the standard median trick to boost the probability, i.e., repeat the sampling process of sampling $m$ points $O(\log (1/\delta))$ times. Let $\overline{X}$ denote the average of the $m$ samples above. Now, the median trick says generate $\overline{X}_i$ for $i =1,\cdots, O(\log(1/\delta))$. The median $Y$ of these $\overline{X}_i$'s satisfies with probability at least $1-\delta$,
\[
|Y - \E_{\xx\sim\pte}\ff(\xx)|\leq \epsilon
\]
\end{proof}

\begin{proof}[Proof of Theorem~\ref{thm:CVGaussIso}]
From Lemma~\ref{lem:MeanGauss} it is sufficient to use $O(d/\epsilon^2\log 1/\delta)$ samples to obtain $\widehat{\ptr}$ and $\widehat{\pte}$ to the required accuracy.

Now, from Lemma~\ref{lem:isotropic-gaussians-result} our estimator $X = \frac{\widehat{\pte}(\xx)}{\widehat{\ptr}(\xx)}\ff(\xx)$ for $\xx \sim \ptr$ satisfies 
\begin{equation*}
    |\E_{\xx\sim \ptr} X - \E_{\xx\sim \pte}\ff(\xx)| \leq \varepsilon.
\end{equation*}
The remaining proof follows the same as in the proof of Theorem~\ref{thm:CVGauss}.
\end{proof}

\section{TV Learnability implies Covariate Shift}\label{sec:TV}

In this section, we extend our algorithms to more general distributions. We show that, under mild conditions on the training and test distributions, it is sufficient to learn both distributions to a small total variation distance. In particular, we prove the following:

\begin{theorem}
\label{thm:shift-TV}
    Let $\ff$ be a bounded function, $\ptr, \pte$ be a pair of distributions, such that for $B>0$, $\Pr_{\xx \sim \pte}(\pte(\xx) / \ptr(\xx) > B/4) \leq \varepsilon$. Let $\widehat{r}$, be s.t. $\widehat{r} = \widehat{\pte}/\widehat{\ptr}$ for some $\widehat{\pte}$, and $\widehat{\ptr}$ satisfying $d_{TV}(\pte, \widehat{\pte}) \leq \varepsilon$ and $d_{TV}(\ptr, \widehat{\ptr}) \leq \varepsilon / B$. Then Algorithm~\ref{alg:CV} returns $Z$ such that, with probability at least $0.9$, we get
    \begin{equation*}
        |Z -\E_{\xx\sim \pte} \ff(\xx)| \leq O(\varepsilon).
    \end{equation*}
    Furthermore, Algorithm~\ref{alg:CV} uses only $O(B^2 / \varepsilon^2)$ samples $(\xx_i, \ff(\xx_i))$, $\xx_i\sim \ptr$.
\end{theorem}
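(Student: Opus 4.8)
The plan is to analyze the clipped importance-weighting estimator that Algorithm~\ref{alg:CV} evaluates: for $\yy \sim \ptr$ set $X := \widehat{r}(\yy)\,\ff(\yy)\,\mathbf{1}[\widehat{r}(\yy) \le B]$, and output the empirical mean $Z = \frac{1}{m}\sum_{i=1}^m X_i$ over $m = O(B^2/\varepsilon^2)$ independent copies. Writing $r := \pte/\ptr$ for the true density ratio, I would split the error into a bias term $|\E_{\yy\sim\ptr} X - \E_{\xx\sim\pte}\ff(\xx)|$ and a concentration term $|Z - \E X|$, and show each is $O(\varepsilon)$.

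For the bias I would chain three controlled substitutions. First, since the clipped integrand $\widehat{r}\,\ff\,\mathbf{1}[\widehat{r}\le B]$ is bounded by $B$ in absolute value, swapping the sampling distribution from $\ptr$ to $\widehat{\ptr}$ costs at most $B\cdot d_{TV}(\ptr,\widehat{\ptr}) \le \varepsilon$. Second, under $\widehat{\ptr}$ the identity $\widehat{r}(\xx)\,\widehat{\ptr}(\xx) = \widehat{\pte}(\xx)$ turns the expectation into $\E_{\xx\sim\widehat{\pte}}[\ff(\xx)\,\mathbf{1}[\widehat{r}(\xx)\le B]]$ exactly (points where $\widehat{\ptr} = 0$ are killed by the indicator). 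Third, since $\ff\,\mathbf{1}[\cdot]$ is bounded by $1$, swapping $\widehat{\pte}$ back to $\pte$ costs $d_{TV}(\pte,\widehat{\pte})\le\varepsilon$, leaving $\E_{\xx\sim\pte}[\ff(\xx)\,\mathbf{1}[\widehat{r}(\xx)\le B]]$. Removing the indicator then incurs exactly $|\E_{\xx\sim\pte}[\ff\,\mathbf{1}[\widehat{r}>B]]| \le \Pr_{\xx\sim\pte}[\widehat{r}(\xx) > B]$, so the whole bias is at most $2\varepsilon + \Pr_{\xx\sim\pte}[\widehat{r} > B]$.

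The main obstacle is the last probability: the clip is defined through the \emph{estimated} ratio $\widehat{r}$, whereas the hypothesis only controls the \emph{true} ratio via $\Pr_{\xx\sim\pte}[r > B/4]\le\varepsilon$. I would bound it by a short self-bounding argument. Let $S = \{\widehat{r} > B\}$ and split it according to whether $r > B/4$ or $r \le B/4$; the first piece has $\pte$-mass at most $\varepsilon$ by hypothesis. On the second piece $S_2$, the two pointwise inequalities $\pte \le \tfrac{B}{4}\ptr$ and $\widehat{\pte} > B\,\widehat{\ptr}$ give $\pte(S_2) \le \tfrac{B}{4}\ptr(S_2)$ and $\widehat{\ptr}(S_2) < \tfrac{1}{B}\widehat{\pte}(S_2)$; converting $\ptr(S_2)$ to $\widehat{\ptr}(S_2)$ and $\widehat{\pte}(S_2)$ back to $\pte(S_2)$ at cost $d_{TV}(\ptr,\widehat{\ptr})\le\varepsilon/B$ and $d_{TV}(\pte,\widehat{\pte})\le\varepsilon$ respectively yields a self-referential inequality of the shape $\pte(S_2) \le \tfrac{1}{4}\pte(S_2) + \tfrac{\varepsilon}{2}$, hence $\pte(S_2) = O(\varepsilon)$. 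Thus $\Pr_{\xx\sim\pte}[\widehat{r} > B] = O(\varepsilon)$ and the total bias is $O(\varepsilon)$. It is precisely the mismatched factors of $B$ in the two TV hypotheses — error $\varepsilon/B$ on $\ptr$ but only $\varepsilon$ on $\pte$ — that make this telescoping close.

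Finally, for concentration I would use that $|X| \le B$ holds deterministically (the truncation caps $\widehat{r}$ at $B$ and $|\ff|\le 1$), so by Hoeffding's inequality (Lemma~\ref{lem:hoeff}) the empirical mean of $m = O(B^2/\varepsilon^2)$ independent copies satisfies $|Z - \E X| \le \varepsilon$ with probability at least $0.9$. Combining with the bias bound via the triangle inequality gives $|Z - \E_{\xx\sim\pte}\ff(\xx)| \le O(\varepsilon)$, as claimed, using exactly $O(B^2/\varepsilon^2)$ labeled training samples.
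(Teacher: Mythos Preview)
Your proof is correct and follows essentially the same route as the paper: a bias--concentration decomposition of the clipped importance-weighted estimator, with the crux being the bound $\Pr_{\xx\sim\pte}[\widehat{r}(\xx) > B] = O(\varepsilon)$, after which the bias is controlled by chaining TV-distance swaps. Your self-bounding computation for that probability is a bit more streamlined than the paper's inclusion argument via the auxiliary sets $A_{\mathrm{te}}, A_{\mathrm{tr}}$ and Lemma~\ref{lem:approximation_2}, and you invoke Hoeffding rather than Chebyshev for the concentration step, but these are cosmetic differences rather than substantive ones.
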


 \begin{algorithm}
 \caption{Covariate shift for any $\pte$ and $\ptr$ with bounded tails}
 \label{alg:CV}
 \begin{algorithmic}[1]
 \STATE {\bfseries Input:} $\widehat{r} =\widehat{\pte}/\widehat{\ptr},B,\varepsilon$
 \STATE $K = O(B^2/\varepsilon^2)$
 \STATE Generate $K$ samples $(\xx_i,\ff(\xx_i))$ from $\ptr$.
 \IF{$\widehat{r}(\xx_i)\leq B$} 
 \STATE $Z_i = \widehat{r}(\xx_i)\ff(\xx_i)$
\ELSE
\STATE $Z_i = 0$
 \ENDIF
 \STATE {\bfseries Return:} $Z = \frac{1}{K}\sum_{i=1}^K Z_i$
 \end{algorithmic}
 \end{algorithm}
In the next section, we will show how to compute the density ratio $\widehat{r}$ used in Algorithm~\ref{alg:CV}, satisfying the conditions of the above theorem, for certain classes of probability distributions, without explicitly estimating $\ptr$ and $\pte$.

\begin{remark}
    \label{rem:lower-bound}
    This sample complexity upper bound should be contrasted with the following simple lower bound: let $B$ be such that $\Pr_{\xx\sim \pte}(\pte(\xx)/ \ptr(\xx)> B) = 2 \varepsilon$. Even with exact knowledge of $\pte$ and $\ptr$, we require $\Omega(B/\varepsilon)$ samples to solve the covariate-shifted mean estimation problem up to error $\varepsilon$. A simple proof of this lower bound can be found in~\Cref{sec:ProofsTV}.
\end{remark}
A relatively common assumption in the covariate shift literature, which is stronger than the one we use, is that the density ratio $\pte(\xx)/\ptr(\xx)$ is bounded everywhere by $B$. We chose to relax this assumption because it is not even satisfied for two non-equal univariate Gaussian distributions.

To put the sample complexity in~\Cref{thm:shift-TV} into context, note that we could apply it when $\ptr$ and $\pte$ are isotropic Gaussian distributions, as in~\Cref{thm:CVGaussIso}, with $|\mu_1 - \mu_2| \leq O(1)$. In this case, for the desired accuracy $\varepsilon$, we would need to set $B := (1/\varepsilon)^{1 + o(1)}$ and learn an isotropic Gaussian $\ptr$ up to an error of $(1/\varepsilon)^{2 + o(1)}$ in total variation distance. This can be done using $O(d/\varepsilon^{4 + o(1)})$ samples. While the dependency on the dimension $d$ matches that of~\Cref{thm:CVGauss}, the dependency on the accuracy $\varepsilon$ is polynomially worse.

The following technical lemma will be crucial in our proof.

\begin{restatable}{lemma}{ApproximationBiasTV}
\label{lem:approximation_2}
Let $\ptr, \pte$ be a pair of distributions, and $S \subseteq U$ be such that $\Pr_{\xx\sim \pte}(S) \geq 1-\varepsilon$ and $\Pr_{\xx\sim \ptr}(S) \geq 1-\varepsilon$. Let $\widehat{\ptr}$ and $\widehat{\pte}$ be a pair of estimations of $\ptr,\pte$ respectively with $\supp (\widehat{\pte}) \subseteq \supp (\widehat{\ptr}) \subseteq S$. Moreover, assume that $\forall \xx\in S$, $\widehat{\pte}(\xx) / \widehat{\ptr}(\xx) \leq B$. Then
\begin{equation*}
    \left|\E_{\xx\sim \ptr} \frac{\widehat{\pte}(\xx)}{\widehat{\ptr}(\xx)} \ff(\xx) - \E_{\xx\sim \pte} \ff(\xx)\right|  \leq B d_{TV}(\ptr, \widehat{\ptr}) + d_{TV}(\pte, \widehat{\pte}).
\end{equation*}
\end{restatable}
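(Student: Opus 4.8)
The plan is to route through the intermediate quantity $I := \E_{\xx \sim \widehat{\ptr}} \frac{\widehat{\pte}(\xx)}{\widehat{\ptr}(\xx)} \ff(\xx)$ and split the target error by the triangle inequality:
\begin{equation*}
\left|\E_{\xx\sim \ptr} \frac{\widehat{\pte}(\xx)}{\widehat{\ptr}(\xx)} \ff(\xx) - \E_{\xx\sim \pte} \ff(\xx)\right| \leq \left|\E_{\xx\sim \ptr} \frac{\widehat{\pte}(\xx)}{\widehat{\ptr}(\xx)} \ff(\xx) - I\right| + \left|I - \E_{\xx\sim \pte} \ff(\xx)\right|.
\end{equation*}
The first term isolates the cost of sampling the estimator $\frac{\widehat{\pte}}{\widehat{\ptr}}\ff$ under the true $\ptr$ rather than under its ``native'' distribution $\widehat{\ptr}$; the second term will reduce to an exact change of measure followed by a total-variation comparison.

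For the second term I would observe that, because $\supp(\widehat{\pte}) \subseteq \supp(\widehat{\ptr})$, the function $\widehat{\pte}/\widehat{\ptr}$ really is the density ratio of $\widehat{\pte}$ against $\widehat{\ptr}$, so the change of measure is exact:
\begin{equation*}
I = \int \widehat{\ptr}(\xx)\, \frac{\widehat{\pte}(\xx)}{\widehat{\ptr}(\xx)}\, \ff(\xx)\, \mathrm{d}\xx = \int \widehat{\pte}(\xx)\, \ff(\xx)\, \mathrm{d}\xx = \E_{\xx \sim \widehat{\pte}} \ff(\xx),
\end{equation*}
where the support containment guarantees no mass is lost in the cancellation (the integrand vanishes wherever $\widehat{\ptr} = 0$, since there $\widehat{\pte}=0$ as well). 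Then $\left|I - \E_{\xx\sim\pte}\ff\right| = \left|\int (\widehat{\pte} - \pte)\ff\right| \leq \int |\widehat{\pte} - \pte| = d_{TV}(\pte, \widehat{\pte})$, using $|\ff| \leq 1$.

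For the first term, the key point is that $g(\xx) := \frac{\widehat{\pte}(\xx)}{\widehat{\ptr}(\xx)} \ff(\xx)$ is bounded by $B$ \emph{everywhere}: on $\supp(\widehat{\ptr})$, which is contained in $S$, the hypothesis $\widehat{\pte}/\widehat{\ptr} \leq B$ on $S$ together with $|\ff| \leq 1$ gives $|g| \leq B$, while off $\supp(\widehat{\ptr})$ the integrand is zero. Hence
\begin{equation*}
\left|\E_{\xx\sim\ptr} g - I\right| = \left|\int (\ptr - \widehat{\ptr})\, g\right| \leq B \int |\ptr - \widehat{\ptr}| = B\, d_{TV}(\ptr, \widehat{\ptr}),
\end{equation*}
and summing the two bounds yields the claim.

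The entire argument is short and the only place requiring genuine care is the division-by-zero/support bookkeeping that makes both the exact change of measure and the uniform bound $|g| \leq B$ legitimate; this is precisely why the hypotheses carry $\supp(\widehat{\pte}) \subseteq \supp(\widehat{\ptr}) \subseteq S$ and demand the ratio bound on all of $S$. I would flag that the mass conditions $\Pr_{\xx\sim\pte}(S), \Pr_{\xx\sim\ptr}(S) \geq 1-\varepsilon$ do not seem to enter this particular inequality, and are presumably recorded here because the lemma is consumed in the proof of Theorem~\ref{thm:shift-TV}, where $S$ is chosen as the region on which the true ratio $\pte/\ptr$ is controlled.
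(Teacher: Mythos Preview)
Your proof is correct and reaches the same bound as the paper, but the mechanics differ slightly. The paper invokes \Cref{cor:approx-Q}, which splits at $\E_{\widehat{\pte}}\ff$ and controls the first piece by the $Q_1$-divergence $\Qdiv{1}{\widehat{\ptr}}{\ptr}{\widehat{\pte}} = \E_{\xx\sim\widehat{\pte}}|\ptr(\xx)/\widehat{\ptr}(\xx)-1|$; it then uses the pointwise bound $\widehat{\pte}\leq B\,\widehat{\ptr}$ to pass from expectation under $\widehat{\pte}$ to expectation under $\widehat{\ptr}$, landing on $B\,d_{TV}(\ptr,\widehat{\ptr})$. You split at the same intermediate point (since $I=\E_{\widehat{\ptr}}g=\E_{\widehat{\pte}}\ff$), but you handle the first piece more directly by observing $|g|\leq B$ and bounding $|\!\int(\ptr-\widehat{\ptr})g|$ pointwise. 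Your route avoids introducing the $Q_s$ functional altogether and is a touch more elementary; the paper's phrasing has the advantage of slotting into the $Q_s$ framework they reuse in the Gaussian analysis. Your observation that the mass hypotheses $\Pr_{\pte}(S),\Pr_{\ptr}(S)\geq 1-\varepsilon$ are not used in this inequality is also accurate; the paper's own proof does not invoke them either.
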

The proof of the above is in Appendix~\ref{sec:ProofsTV}. We now prove the main result of the section.
\begin{proof}[Proof of \Cref{thm:shift-TV}]
    Let $S := \{ \xx : \widehat{\pte}(\xx) / \widehat{\ptr}(\xx) \leq B \}$, and $S' := \{\xx : \pte(\xx) / \ptr(\xx) \leq B/4\}$. Moreover, define sets $A_{\mathrm{te}} := \{ \xx : \widehat{\pte}(\xx) \leq 2 \pte(\xx) \}$, and $A_{\mathrm{tr}} := \{ \xx : \ptr(\xx) \leq 2 \ptr(\xx)\}$.

    We would first prove that $\pte(S) \geq 1 - O(\varepsilon)$ and $\ptr(S) \geq 1 - O(\varepsilon)$. Note that since $\frac{\widehat{\pte}}{\widehat{\ptr}} =  \frac{\pte}{\ptr}\cdot \frac{\widehat{\pte}}{\pte} \cdot \frac{\ptr}{\widehat{\ptr}}$, $A_{\mathrm{te}} \cap A_{\mathrm{tr}} \cap S' \subseteq S$. Let $U$ denote the entire space. We have
    \begin{itemize}
        \item $\pte(U \setminus A_{\mathrm{te}}) \leq \widehat{\pte}(U \setminus A_{\mathrm{te}}) + \varepsilon \leq O(\varepsilon)$,
        \item $\pte(S' \setminus A_{\mathrm{tr}}) \leq \frac{B}{4} \cdot \ptr(S' \setminus A_{\mathrm{tr}}) \leq \frac{B}{4}\cdot \ptr(U \setminus A_{\mathrm{tr}}) \leq O(\varepsilon)$, 
        \item $\pte(S') \leq \varepsilon$ by assumption.
    \end{itemize}
    This yields that $\pte(S) \geq \pte(A_{\mathrm{te}} \cap A_{\mathrm{tr}} \cap S') \geq 1 - O(\varepsilon)$ and therefore $\widehat{\pte}(S) \geq 1 - O(\varepsilon)$. Furthermore, from the definition of $S$ we also have $\widehat{\ptr}(S^c) < \widehat{\pte}(S^c)/B \leq(\pte(S^c) + \varepsilon)/B \leq O(\varepsilon/B)$. Therefore, we also have, ${\ptr}(S)\geq 1-O(\varepsilon)$.

Now, consider the conditional distributions $\widehat{\pte}|_S$ and $\widehat{\ptr}|_S$, which are defined as, $p|_S(\xx) = p(\xx)$, if $\xx\in S$ and $0$ if $\xx\notin S$. We get the desired bound on the density ratio on the set $S$.
    \begin{equation*}
        \frac{\widehat{\pte}|_S(\xx)}{\widehat{\ptr}|_S(\xx)} \leq  B 
    \end{equation*}

    Note that conditioning on $S$ doesn't introduce too much error, i.e., if $d_{TV}(\pte, \widehat{\pte}) \leq \varepsilon$ and $\pte(S) \geq 1-\varepsilon$ then $d_{TV}(\pte, \widehat{\pte}|_S) \leq O(\varepsilon)$ by triangle inequality. Similarly, $d_{TV}(\widehat{\ptr}|_S, \ptr) \leq O(\varepsilon/B)$ and we can apply the lemma~\Cref{lem:approximation_2} to deduce that,
    the bias is at most $\varepsilon$, i.e.,
     \[
        \left|\E_{\xx\sim \ptr} \frac{\widehat{\pte}(\xx)}{\widehat{\ptr}(\xx)} \ff(\xx) - \E_{\xx\sim \pte} \ff(\xx)\right| \leq O(\varepsilon).
\]
    
    Finally, the random variable $Z :=\ff(\xx)\mathbf{1}_S(\xx) \frac{\widehat{\pte}|_S(\xx)}{\widehat{\ptr}|_S(\xx)}$ for $\xx \sim \pte$ is always bounded by $B$, hence $\Var(Z) \leq B$. The final bound now follows directly from the application of the Chebyshev inequality. Further, we can boost the probability using the standard median trick.
    
This concludes the proof of Theorem~\ref{thm:shift-TV}.
\end{proof}

\section{Learning Density Ratio by Classification}\label{sec:logistic}

In this section, we present an algorithm based on logistic regression that approximates the density ratio $\widehat{r}$ such that $\widehat{r} = \widehat{\pte}/\widehat{\ptr}$ for some pair of distributions, with the guarantees that $d_{TV}(\widehat{\ptr}, \ptr) \leq \varepsilon$ and $d_{TV}(\widehat{\pte}, \pte) \leq \varepsilon$. This holds when both $\pte$ and $\ptr$ belong to the same exponential family (Definition~\ref{def:exp}), i.e., have the same values of the functions $\hh$ and $\boldsymbol{T}$. Moreover, the probabilities $\widehat{\ptr}$ and $\widehat{\pte}$ provide guarantees for the density ratio, i.e., $\widehat{r} = \frac{\widehat{\pte}}{\widehat{\ptr}}$ is a good approximation to $r = \frac{\pte}{\ptr}$, making it suitable for use as importance weights in the covariate shift problem (after appropriate truncation). Although logistic regression has been applied in practice for estimating density ratios in several previous works, such theoretical guarantees remained unknown. We prove the following:

\begin{theorem}
\label{thm:logistic-regression-result}
    Assume that $\ptr$ and $\pte$ belong to the same exponential family, with $\pte(\xx) = p(\xx | \tte)$ and $\ptr(\xx) = p(\xx | \ttr)$. Let $\str = \E_{\xx \sim \ptr} \boldsymbol{T}(\xx) \boldsymbol{T}(\xx)^{T}$ and similarly for $\ste$. Define $B_{1} := \Tr (\str + \ste)$ and assume, in addition, that $\|\ttr - \tte\|_2 + |A(\ttr) - A(\tte)| \leq B_{2}$.
    
    As long as $R_2(\ptr || \pte) \leq B_3$, we can solve the covariate-shifted mean eastimation problem using $O(B_1 B_2^2 B_3^4 \varepsilon^{-8})$ samples.
\end{theorem}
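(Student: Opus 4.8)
The plan is to produce the density-ratio estimate $\widehat{r}$ required by \Cref{thm:shift-TV} as the output of a logistic regression, exploiting the exponential-family structure to guarantee realizability. Start from the classification reduction: draw $y\in\{0,1\}$ uniformly and sample $\xx\sim\ptr$ when $y=0$ and $\xx\sim\pte$ when $y=1$, so the marginal of $\xx$ is the mixture $\bar{p}:=\tfrac12(\ptr+\pte)$. The Bayes-optimal log-odds is $g^\star(\xx)=\ln\bigl(\pte(\xx)/\ptr(\xx)\bigr)$, and the crucial consequence of \Cref{def:exp} is that $g^\star$ is \emph{affine} in the sufficient statistic:
\[
g^\star(\xx)=\langle\tte-\ttr,\boldsymbol{T}(\xx)\rangle-A(\tte)+A(\ttr).
\]
Hence $g^\star$ lies in the linear class $\mathcal{G}:=\{\xx\mapsto\langle\ww,\boldsymbol{T}(\xx)\rangle+b:\|\ww\|_2+|b|\le B_2\}$, using exactly the assumption $\|\ttr-\tte\|_2+|A(\ttr)-A(\tte)|\le B_2$. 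This \emph{realizability} is what makes logistic regression provably work: the empirical logistic-loss minimizer $\widehat{g}$ over $\mathcal{G}$ competes against the global Bayes optimum with no approximation error. I would set $\widehat{r}:=\exp(\widehat{g})$, directly evaluable from the fitted parameters.

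Next I would control the excess logistic risk of $\widehat{g}$. Writing $L(g)$ for the population logistic loss and $\sigma(z):=(1+e^{-z})^{-1}$, a standard computation identifies the excess risk with an expected Bernoulli divergence,
\[
L(\widehat{g})-L(g^\star)=\E_{\xx\sim\bar{p}}\,D_{KL}\bigl(\mathrm{Ber}(\sigma(g^\star(\xx)))\,\|\,\mathrm{Ber}(\sigma(\widehat{g}(\xx)))\bigr).
\]
By the ERM property together with the generalization bound \Cref{thm:rademacher-generalization-bound} and the contraction principle for the $1$-Lipschitz logistic loss (\Cref{thm:rademacher-lipschitz-composition}), $\E[L(\widehat{g})-L(g^\star)]\lesssim \mathcal{R}_{n,\bar{p}}(\mathcal{G})$. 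For the affine class, a direct Cauchy--Schwarz estimate of the Rademacher average gives
\[
\mathcal{R}_{n,\bar{p}}(\mathcal{G})\lesssim\frac{B_2}{\sqrt{n}}\Bigl(\sqrt{\E_{\bar{p}}\|\boldsymbol{T}\|_2^2}+1\Bigr),\qquad \E_{\bar{p}}\|\boldsymbol{T}\|_2^2=\tfrac12\Tr(\str+\ste)=\tfrac12 B_1,
\]
so that $\E[L(\widehat{g})-L(g^\star)]\lesssim B_2\sqrt{B_1}/\sqrt{n}$; note the appearance of $B_1=\Tr(\str+\ste)$ as the sufficient-statistic second moment, in place of an ambient dimension.

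I would then convert excess risk into total-variation guarantees for a pair of distributions whose ratio is $\widehat{r}$. Setting $\widehat{p}:=\sigma(\widehat{g})$ and $\widehat{\pte}:=2\widehat{p}\,\bar{p}$, $\widehat{\ptr}:=2(1-\widehat{p})\,\bar{p}$ (renormalizing if needed, which costs only a constant), one has $\widehat{\pte}/\widehat{\ptr}=\widehat{p}/(1-\widehat{p})=\widehat{r}$, and both distances equal $d_{TV}(\pte,\widehat{\pte})=d_{TV}(\ptr,\widehat{\ptr})=2\,\E_{\bar{p}}|\sigma(g^\star)-\widehat{p}|$. Applying \Cref{thm:pinsker} to the two Bernoullis pointwise gives $(\sigma(g^\star)-\widehat{p})^2\le\tfrac12 D_{KL}(\cdots)$, and then Cauchy--Schwarz yields $d_{TV}\le\sqrt{2(L(\widehat{g})-L(g^\star))}$. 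To meet the hypotheses of \Cref{thm:shift-TV} I must satisfy its \emph{tighter} requirement $d_{TV}(\ptr,\widehat{\ptr})\le\varepsilon/B$, forcing the excess risk below $\varepsilon^2/(2B^2)$ and hence $n\gtrsim B_1 B_2^2 B^4/\varepsilon^4$. Finally I fix the truncation threshold via the tail hypothesis: Markov's inequality gives $\Pr_{\xx\sim\pte}(\pte/\ptr>B/4)\le\frac{4}{B}\E_{\xx\sim\pte}[\pte/\ptr]$, and $\E_{\xx\sim\pte}[\pte/\ptr]$ is precisely the Rényi-$2$ discrepancy bounded by $B_3$, so $B=\Theta(B_3/\varepsilon)$ suffices. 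Substituting $B^4\asymp B_3^4/\varepsilon^4$ produces the claimed $n=O(B_1 B_2^2 B_3^4\varepsilon^{-8})$; the $O(B^2/\varepsilon^2)=O(B_3^2\varepsilon^{-4})$ samples consumed by \Cref{alg:CV} are lower order, and the success probability is boosted by Markov on the nonnegative excess risk combined with the median trick already inside \Cref{thm:shift-TV}.

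The main obstacle I anticipate is the excess-risk-to-TV conversion, both conceptually and quantitatively. Conceptually, one must argue realizability carefully so the ERM excess risk carries no approximation term, and handle the fact that the classifier-induced $\widehat{\pte},\widehat{\ptr}$ are only approximately normalized. Quantitatively, the square root lost in Pinsker is exactly what inflates the naively hoped-for $\varepsilon^{-4}$ into $\varepsilon^{-8}$ once it is composed with the $B=\Theta(B_3/\varepsilon)$ forced by the tail condition; verifying the contraction principle for the label-dependent logistic loss and pinning the Rademacher complexity to $B_1$ rather than the dimension are the remaining places where care is needed.
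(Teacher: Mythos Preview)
Your proposal is correct and follows essentially the same route as the paper: realizability of the log-odds in the affine class via the exponential-family identity, a Rademacher/contraction bound for the logistic loss yielding excess risk $O(B_2\sqrt{B_1}/\sqrt n)$, conversion of excess risk to TV closeness of an induced pair $(\widehat{\ptr},\widehat{\pte})$ via Pinsker, the Markov tail bound from $R_2$ to set $B=\Theta(B_3/\varepsilon)$, and finally \Cref{thm:shift-TV}. The only cosmetic difference is in the excess-risk-to-TV step: the paper packages it as \Cref{lem:population-regret-is-kl}, which decomposes the regret directly as $\tfrac12(D_{KL}(\pte\|\widehat{\pte})+D_{KL}(\ptr\|\widehat{\ptr}))$ plus a nonnegative term and then applies Pinsker once to each summand, whereas you go through the pointwise Bernoulli KL identity, apply Pinsker pointwise, and then Jensen/Cauchy--Schwarz; the two computations are equivalent and yield the same $\sqrt{\text{regret}}$ loss that drives the $\varepsilon^{-8}$.
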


We now present the high-level idea of our approach. The details and proof of \Cref{thm:logistic-regression-result} is in Appendix~\ref{sec:ProofsLogistic}.

\paragraph{General scheme.}
Consider a distribution $\mathcal{D}_0$ over pairs $(\xx,y)$ where $\xx \in U$ and $y \in \{ \pm 1 \}$, s.t. $\Pr(y = 1) = 1/2$ and $\Pr(\xx | y=1) = \pte(\xx)$ and $\Pr(\xx | y=-1) = \ptr(\xx)$. Then by Bayes rule
\begin{align*}
    \Pr(y = -1 | \xx) &= \frac{\Pr(\xx | y=-1)}{\Pr(\xx|y=1) + \Pr(\xx|y=-1)} \\
    & = \frac{\ptr(\xx)}{\pte(\xx) + \ptr(\xx)} \\
    &= \frac{1}{1 + \pte(\xx)/\ptr(\xx)}.
\end{align*}

Hence, if we can find a Bayes optimal classifier for the distribution $(\xx,y)$ calculating $\Pr(y=-1 | \xx)$, we could invert it to get a density ratio:
\begin{equation}
\label{eq:classification-dr}
    \frac{\pte(\xx)}{\ptr(\xx)} = \frac{1}{\Pr(y=-1 | \xx)} - 1.
\end{equation}

A known method (see for instance \cite{BS06}) of estimating the ratio $\pte(\xx)/\ptr(\xx)$ is then training a classifier for the binary classification problem (to predict $y$ given features $\xx$ from distribution $\mathcal{D}_0$), and using its output in place of $\Pr(y=-1 | \xx)$ in \Cref{eq:classification-dr}.

We show that if we use the negative log-likelihood as a loss function for classification training (also known as the cross-entropy loss), to any classifier $\hat{r}(\xx)$ we have found, we can associate a pair of distributions $\widehat{\pte}$ and $\widehat{\ptr}$, s.t. $\widehat{\pte}/\widehat{\ptr} = \frac{1}{\hat{r}(x)} - 1$, and the regret of the classifier $\hat{r}$ (difference between the population loss of the classifier, and the population loss of the Bayes optimal classifier $\Pr_{\cD_0}[y=-1 | x]$) is an upper bound for the sum of KL-divergences $D_{KL}(\pte || \widehat{\pte}) + D_{KL}(\ptr || \widehat{\ptr})$; see~\Cref{lem:population-regret-is-kl}.

In particular, whenever we can guarantee that this regret is low, we know that the estimated density ratio corresponds to the density ratio of a pair of distributions $\widehat{\ptr}, \widehat{\pte}$ that are both KL-close to the actual distributions $\ptr, \pte$. Using the Pinskers inequality, this can be translated into TV-distance guarantees, and via~\Cref{thm:shift-TV}, after appropriate truncation, such a density ratio can be used as importance weight for the covariate shift problem.

Interestingly, this method never explicitly recovers the densities $\widehat{\pte}$ and $\widehat{\ptr}$ which are shown to exist~--- it only recovers the density ratio.

\paragraph{Using logistic regression.}

When the distributions $\pte$ and $\ptr$ are from the same exponential family, using logistic regression to find the classifier provides the right density ratio estimate. In particular, for parameters $\tte$ and $\ttr$, and functions $\hh,\boldsymbol{T}$, $\pte = \hh(\xx) \cdot \exp\left(\tte^{\top}\boldsymbol{T}(\xx)-A(\tte)\right)$, and $\ptr = \hh(\xx) \cdot \exp\left(\ttr^{\top}\boldsymbol{T}(\xx)-A(\ttr)\right)$.

Now, for these distributions, 
\begin{align*}
    & \Pr_{(\xx,y) \sim \mathcal{D}_0}[y=-1 | \xx]  = \frac{1}{1 + \pte(\xx)/\ptr(\xx)}\\
    & = \frac{1}{1 + \exp\left(\boldsymbol{T}(\xx)^{\top}(\tte-\ttr) + A(\tte)-A(\ttr)\right)}
\end{align*}
The logistic model is given by a joint distribution $(\xx, y)$, parameterized by $\boldsymbol{\theta}^* \in \bR^d$ and $s^* \in \bR$, where
\begin{equation}
    \Pr[y = -1 | \xx] = \frac{1}{1 + \exp(\langle \boldsymbol{\theta}^*, \xx\rangle + s^*)}.
    \label{eq:logistic}
\end{equation}
Hence, the distribution $\mathcal{D}_1$ of $(\boldsymbol{T}(\xx), y)$ for $(\xx, y) \sim \mathcal{D}_0$ indeed follows the logistic model, Eq.\eqref{eq:logistic} with parameters $\ttheta^* = \tte - \ttr$ and $s^* = A(\tte) - A(\ttr)$.
Note that for the distribution following the logistic model we have
\begin{align*}
    \Pr[y=1 | \xx] &= \frac{1}{1 + \exp(-\langle \boldsymbol{\theta}, \xx \rangle - s)},
\end{align*}
and more succinctly
\begin{align*}
    \Pr[y | \xx] & = (1 + \exp(-y(\langle \boldsymbol{\theta}, \xx \rangle +s)))^{-1}.
\end{align*}

Since the empirical minimization of negative log-likelihood for logistic regression is a convex problem, we can efficiently find a minimizer on a finite sample. Using a Rademacher complexity bound for the associated class of loss functions, the minimizer on a finite sample will be an approximate minimizer with respect to the population loss. Concretely we can find $(\hat{\ttheta}, \hat{s})$ with population loss $\varepsilon$ close to optimal, i.e., regret bounded by $\varepsilon$, where the loss is the negative log-likelihood. This regret bound, by~\Cref{lem:population-regret-is-kl}, lets us control the total KL-divergence $D_{KL}(\pte || \widehat{\pte}) + D_{KL}(\ptr || \widehat{\ptr})$. By Pinsker inequality, this implies TV-closeness between the respective distributions and our approximations. We can then invoke~\Cref{thm:shift-TV}, to conclude that after truncation, the density ratio $\widehat{\pte}/\widehat{\ptr}$ can be used as importance weights for the covariate-shifted mean estimation problem.

\paragraph{Error bounds on the estimated parameter $\hat{\theta}$ in the Gaussian case.}

In the case where both distributions are $d$-dimensional Gaussian vectors with known covariance, the sample complexity bound obtained by~\Cref{thm:logistic-regression-result} is easily seen to be suboptimal. For instance, when $\ptr = \cN(\mtr, \II)$, $\pte = \cN(\mte, \II)$, s.t. $\|\mtr - \mte\| \leq O(1)$ according to~\Cref{thm:logistic-regression-result}, we can solve the covariate-shifted mean estimation problem using $O(d/\varepsilon^8)$ samples (in this case we would have $B_1 \approx d, B_2 \leq O(1)$, and $B_3 \leq O(1)$). In contrast~\Cref{thm:CVGaussIso} shows that by learning the means of both Gaussian distributions separately, we can solve the covariate shift problem using $O(d/\varepsilon^{2})$ samples.

As it turns out, specifically in the case of Gaussians, we can improve the sample complexity in~\Cref{thm:logistic-regression-result}, and we can recover the importance weights $\hat{r}$, using only $O(\max\{d \log d, d/\varepsilon^{2}\})$ samples using logistic regression, nearly matching the guarantees of~\Cref{thm:CVGaussIso}.

This is a consequence of new results for the error bounds on the \emph{estimated parameter} $\|\hat{\ttheta} - \ttheta^*\|$ in logistic regression, as opposed to classical bounds showing only that the (population) loss at $\hat{\ttheta}$ is not much larger than the minimum value attained at $\ttheta^*$. Bounds of this form are only known when covariate $\xx$ itself is a sub-Gaussian random variable, so this statement is much less general then~\Cref{thm:logistic-regression-result}, but provides much stronger results in this scenario.  We present these improvements in Appendix~\ref{sec:LogisticGauss}

\section{Reproducing Kernel Hilbert Spaces~---~Kernel Trick for Density Estimation}\label{sec:RKHS}
We now present a generalization of logistic regression beyond exponential families.

Specifically, we consider the case where $\ln (\ptr/\pte)$ lies in a \emph{Reproducing Kernel Hilbert Space} $\mathcal{H}$. We show that using \emph{kernel logistic regression}~\cite{zhu2005kernel}, we can find an approximation to the density ratio $\ptr/\ptr$, which is good enough to solve the covariate shift problem. 

Before stating the main theorem, let us briefly review the theory behind RKHS. For a more detailed exposition, refer to~\cite{ghojogh2021reproducing}.

\subsection{Quick RKHS Preliminary \label{sec:rkhs-prelim}}

Consider a Hilbert space $\cH$ of functions over $U$ (not necessarily finitely dimensional), together with an inner product $\langle f,g \rangle_\cH$. If for each $x \in U$ the evaluation functional $\mathrm{ev}_x(f) := f(x)$ is bounded (i.e. for each $x \in U$, there is a bound $T_{x}$, such that for each $f \in \cH$ we have $f(x) \leq T_x \|f\|_{\cH}$), then we say that $(\cH, \langle \cdot, \cdot\rangle_{\cH})$ is a \emph{Reproducing Kernel Hilbert Space (RKHS)}. 

Using standard functional-analytic tools, this implies the existence of a \emph{feature map} $\Phi: U \to \cH$ --- i.e., with any point $x \in U$, we can associate a function $\Phi_x \in \cH$ satisfying, for every $f \in \cH$, $\langle f, \Phi_x \rangle_{\cH} = f(x)$. We can further associate with this RKHS a kernel $K : U\times U \to \mathbb{R}$, given as $K(x_1, x_2) := \langle \Phi_{x_1}, \Phi_{x_2}\rangle_{\cH}$. As it turns out, every such kernel is positive semidefinite, and in fact, this is a complete characterization of RKHS --- every positive semidefinite kernel $K: U \times U \to \mathbb{R}$ corresponds to some RKHS on $U$. For a given kernel $K$, the corresponding feature map is $\Phi_x := K(x, \cdot)$, and the subspace of finite sums
$\sum_i \lambda_i K(x_i, \cdot)$ is dense in $\cH$.

\paragraph{Why are RKHS good --- the Kernel trick}
The key tool related to RKHS is the so-called 'kernel trick' and, more specifically, the relatively simple yet powerful Representer Theorem. This theorem states that the solution to a general optimization problem over the (potentially infinitely dimensional) $\cH$ can be found by reducing it to a finite-dimensional problem. For example,  consider a sample $(x_1, y_1), (x_2, y_2), \ldots (x_n, y_n)$ where $x_i \in \mathcal{U}$ and $y_i \in \mathbb{R}$. 
\begin{theorem}[Representer Theorem\cite{ghojogh2021reproducing}]
\label{thm:representer}
    Let $\cH$ be an RKHS of functions over $U$, together with the feature map $\Phi: U \to \cH$. Let $E : (U \times \bR \times \bR)^n \to \bR$ be an arbitrary error map,  $(x_1, y_1), \ldots (x_n, y_n) \in U \times \bR$ be a sample, and $B \geq 0$ a non-negative bound. Then the minimizer $\hat{\theta}$ of
    \begin{equation}
        \min_{\boldsymbol{\theta} \in \cH ~ : ~ \|\boldsymbol{\theta}\|_{\cH} \leq B} E((x_1, y_1, \boldsymbol{\theta}(x_1)), \ldots (x_n, y_n, \boldsymbol{\theta}(y_n))
    \end{equation}
    is a linear combination
    \begin{equation}
        \hat{\boldsymbol{\theta}} = \sum_i \gamma_i \Phi_{x_i}.
    \end{equation}
\end{theorem}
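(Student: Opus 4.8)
The plan is to exploit the fact that the error map $E$ sees a candidate function $\boldsymbol{\theta} \in \cH$ only through its $n$ evaluations $\boldsymbol{\theta}(x_1), \ldots, \boldsymbol{\theta}(x_n)$, which by the reproducing property equal the inner products $\langle \boldsymbol{\theta}, \Phi_{x_i}\rangle_{\cH}$. Consequently the objective is completely insensitive to any component of $\boldsymbol{\theta}$ orthogonal to the finite-dimensional subspace $V := \mathrm{span}\{\Phi_{x_1}, \ldots, \Phi_{x_n}\}$, whereas such a component can only \emph{increase} the norm $\|\boldsymbol{\theta}\|_{\cH}$ that appears in the feasibility constraint. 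Orthogonally projecting onto $V$ therefore never hurts, so a minimizer can be taken inside $V$, i.e.\ of the form $\sum_i \gamma_i \Phi_{x_i}$.

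Concretely, I would proceed in three steps. First, note that $V$ is finite-dimensional, hence a closed subspace, so every $\boldsymbol{\theta} \in \cH$ decomposes orthogonally as $\boldsymbol{\theta} = \boldsymbol{\theta}_V + \boldsymbol{\theta}_\perp$ with $\boldsymbol{\theta}_V \in V$ and $\boldsymbol{\theta}_\perp \in V^{\perp}$. Second, for each sample point I would use the reproducing property together with $\langle \boldsymbol{\theta}_\perp, \Phi_{x_i}\rangle_{\cH} = 0$ (since $\Phi_{x_i} \in V$) to conclude
\[
\boldsymbol{\theta}(x_i) = \langle \boldsymbol{\theta}, \Phi_{x_i}\rangle_{\cH} = \langle \boldsymbol{\theta}_V, \Phi_{x_i}\rangle_{\cH} = \boldsymbol{\theta}_V(x_i),
\]
so that $E$ takes identical values at $\boldsymbol{\theta}$ and at $\boldsymbol{\theta}_V$. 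Third, by the Pythagorean identity $\|\boldsymbol{\theta}\|_{\cH}^2 = \|\boldsymbol{\theta}_V\|_{\cH}^2 + \|\boldsymbol{\theta}_\perp\|_{\cH}^2 \geq \|\boldsymbol{\theta}_V\|_{\cH}^2$, the projection $\boldsymbol{\theta}_V$ satisfies $\|\boldsymbol{\theta}_V\|_{\cH} \leq B$ whenever $\boldsymbol{\theta}$ does. Combining these, for any feasible minimizer $\boldsymbol{\theta}^\star$ its projection $\boldsymbol{\theta}^\star_V$ is feasible and attains the same (hence minimal) objective value; since $\boldsymbol{\theta}^\star_V \in V$ it has the claimed form $\sum_i \gamma_i \Phi_{x_i}$.

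The argument is essentially a single orthogonality observation, so there is no serious obstacle; the only points needing care are the precise interpretation and existence. Because $E$ is arbitrary the global minimizer need not be unique, so the clean conclusion is that there \emph{exists} a minimizer of the stated form; if one additionally assumes (or arranges, e.g.\ via strict convexity in $\boldsymbol{\theta}$) that the minimizer is unique, it must coincide with its own projection and therefore lie in $V$. The same reduction also secures existence in the first place: once the problem is restricted to the finite-dimensional $V$, the feasible region is a compact norm ball, so continuity of $E$ in the evaluations $\boldsymbol{\theta}(x_i)$ suffices to guarantee that a minimizer is attained, which is the only nontrivial technical input beyond the closedness of $V$.
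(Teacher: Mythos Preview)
The paper does not supply its own proof of this theorem; it is stated with a citation to \cite{ghojogh2021reproducing} and used as a black box. Your argument via orthogonal projection onto $V = \mathrm{span}\{\Phi_{x_1},\ldots,\Phi_{x_n}\}$ is exactly the standard proof of the Representer Theorem and is correct, including your remarks on existence and non-uniqueness.
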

\paragraph{Kernel Logistic Regression}
Most of the content in this section is well-known. However, for completeness, we provide a detailed analysis of kernel logistic regression. Kernel logistic regression assumes that we observe samples $(\xx_i, y_i)$ where $\xx_i \in U$ is drawn from some distribution and $y_i|x_i \in \{\pm 1\}$ follows the logistic distribution,
\begin{equation*}
    \Pr[y| \xx] = (1 + \exp(y \boldsymbol{\theta}^*(\xx)))^{-1},
\end{equation*}
where $\boldsymbol{\theta}^* \in \cH$ for some RKHS $\cH$. Let us call the joint distribution of $(\xx,y)$ drawn this way as $\mathcal{D}_{\boldsymbol{\theta}^*}$. Note that as long as $\|\boldsymbol{\theta}^*\|_{\cH} \leq B$,  $\ttheta^*$ is a unique minimizer of the population loss function given by negative log-likelihood,
\begin{equation*}
\mathcal{L}_{\boldsymbol{\theta}}(\cD_{\ttheta^*}) := \E_{(x,y) \sim \cD_{\boldsymbol{\theta}^*}} \log(1 + \exp(y \boldsymbol{\theta}(\xx)).
\end{equation*}
That is $\ttheta_* = \text{arg min}_{\|\ttheta\|_{\cH} \leq B} \mathcal{L}_{\ttheta}(\cD_{\ttheta^*})$. 
Using a standard argument based on Rademacher complexity, we bound the error between the population loss and the empirical loss for all $\|\ttheta\|_{\cH} \leq B$. Combined with the fact that empirical loss minimization for kernel logistic regression can be efficiently solved (after applying the representer theorem, it reduces to an $n$-dimensional convex problem). We show the following.
\begin{lemma}
\label{lem:kernel-logistic-regression-regret-bound}
    Using $n \approx \E_{\xx}[K(\xx, \xx)] B^2/\varepsilon^2$ samples from $\cD_{\boldsymbol{\theta}^*}$ we can efficiently find $\hat{\boldsymbol{\theta}}$ satisfying regret bound
    \begin{equation*}
        \cL_{\hat{\boldsymbol{\theta}}}(\cD_{\ttheta^*})  - \cL_{\boldsymbol{\theta}^*}(\cD_{\ttheta^*}) \leq \varepsilon.
    \end{equation*}
\end{lemma}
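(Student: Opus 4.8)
The plan is to use empirical risk minimization over the norm ball $\{\boldsymbol{\theta}\in\cH:\|\boldsymbol{\theta}\|_{\cH}\leq B\}$, combined with a uniform convergence bound driven by the Rademacher complexity of the associated loss class. First I would settle efficiency: the empirical negative log-likelihood $\frac{1}{n}\sum_i\log(1+\exp(y_i\boldsymbol{\theta}(\xx_i)))$ depends on $\boldsymbol{\theta}$ only through the evaluations $\boldsymbol{\theta}(\xx_i)=\langle\boldsymbol{\theta},\Phi_{\xx_i}\rangle_{\cH}$ and is convex in them, while the constraint $\|\boldsymbol{\theta}\|_{\cH}\leq B$ is convex. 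By the Representer Theorem (Theorem~\ref{thm:representer}) the constrained minimizer lies in $\operatorname{span}\{\Phi_{\xx_i}\}$, so the problem reduces to an $n$-dimensional convex program in the coefficients $\gamma_i$ and can be solved efficiently. Call its solution $\hat{\boldsymbol{\theta}}$.

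Next comes the statistical core: controlling the uniform gap $\E\sup_{\|\boldsymbol{\theta}\|_{\cH}\leq B}|\hat{\mathcal{L}}_{\boldsymbol{\theta}}-\mathcal{L}_{\boldsymbol{\theta}}|$ between empirical and population loss. By Theorem~\ref{thm:rademacher-generalization-bound} this is at most $2\mathcal{R}_{n,\mathcal{D}_{\ttheta^*}}(\mathcal{G})$, where $\mathcal{G}$ is the class of loss functions $(\xx,y)\mapsto\log(1+\exp(y\boldsymbol{\theta}(\xx)))$. The softplus $\ell(z)=\log(1+e^z)$ is $1$-Lipschitz, so writing $\mathcal{G}=\ell\circ\mathcal{F}'$ with $\mathcal{F}'=\{(\xx,y)\mapsto y\boldsymbol{\theta}(\xx):\|\boldsymbol{\theta}\|_{\cH}\leq B\}$, Theorem~\ref{thm:rademacher-lipschitz-composition} gives $\mathcal{R}(\mathcal{G})\leq\mathcal{R}(\mathcal{F}')$. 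Since each $y_i\in\{\pm1\}$, the products $\sigma_i y_i$ are again independent Rademacher variables conditionally on the sample, so the label factor can be absorbed and $\mathcal{R}(\mathcal{F}')$ equals the Rademacher complexity of the linear class $\{\xx\mapsto\langle\boldsymbol{\theta},\Phi_{\xx}\rangle\}$ with respect to the $\xx$-marginal of $\mathcal{D}_{\ttheta^*}$.

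The key computation is the standard RKHS bound for this linear class. Pulling the supremum inside the inner product and using $\sup_{\|\boldsymbol{\theta}\|_{\cH}\leq B}\langle\boldsymbol{\theta},v\rangle=B\|v\|_{\cH}$, the Rademacher average equals $\frac{B}{n}\E_{\xx,\sigma}\|\sum_i\sigma_i\Phi_{\xx_i}\|_{\cH}$; then Jensen (concavity of $\sqrt{\cdot}$) together with $\E\sigma_i\sigma_j=\mathbf{1}[i=j]$ and $\|\Phi_{\xx}\|_{\cH}^2=K(\xx,\xx)$ bound this by $\frac{B}{n}\sqrt{\sum_i\E K(\xx_i,\xx_i)}=B\sqrt{\E_{\xx}K(\xx,\xx)/n}$. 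Plugging back, the uniform gap is $O(B\sqrt{\E_{\xx}K(\xx,\xx)/n})$, which is at most $\varepsilon$ once $n\gtrsim\E_{\xx}[K(\xx,\xx)]B^2/\varepsilon^2$, matching the claimed sample size. The regret then follows from the usual ERM decomposition $\mathcal{L}_{\hat{\boldsymbol{\theta}}}-\mathcal{L}_{\ttheta^*}=(\mathcal{L}_{\hat{\boldsymbol{\theta}}}-\hat{\mathcal{L}}_{\hat{\boldsymbol{\theta}}})+(\hat{\mathcal{L}}_{\hat{\boldsymbol{\theta}}}-\hat{\mathcal{L}}_{\ttheta^*})+(\hat{\mathcal{L}}_{\ttheta^*}-\mathcal{L}_{\ttheta^*})$: the middle bracket is nonpositive because $\hat{\boldsymbol{\theta}}$ minimizes the empirical loss and $\ttheta^*$ is feasible ($\|\ttheta^*\|_{\cH}\leq B$), while the other two are each at most the uniform gap.

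The main obstacle is that the logistic loss is unbounded, which rules out a naive high-probability bounded-difference concentration argument. The Lipschitz contraction step is precisely what sidesteps this, reducing the unbounded loss class to the norm-bounded linear class whose Rademacher complexity is finite; the in-expectation bound of Theorem~\ref{thm:rademacher-generalization-bound} then suffices, and a constant-probability version follows by Markov and can be boosted by the median trick. The one point to verify carefully is the absorption of the label $y$ into the Rademacher signs, which is legitimate exactly because $y_i\in\{\pm1\}$ makes $\sigma_i y_i$ identically distributed to $\sigma_i$ conditionally on the sample.
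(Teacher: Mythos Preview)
Your proposal is correct and follows essentially the same approach as the paper: Representer theorem for efficiency, a Rademacher complexity bound for the RKHS ball combined with Lipschitz contraction for the softplus, and the standard three-term ERM decomposition. The only notable difference is how you dispose of the label: you absorb $y_i\in\{\pm1\}$ into the Rademacher signs directly, whereas the paper invokes a separate lemma (\Cref{lem:rademacher-y}) that handles the $y$-dependence via Talagrand contraction; your absorption trick is cleaner here and yields the same bound.
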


See~\Cref{sec:kernel-logistic-regression-regret-bound} for a full proof.

\subsection{Covariate Shift via Kernel Logistic Regression}
We now give the main result of this section. If the log of the density ratio of the training and test distribution lies in some RKHS $\cH$, we can efficiently solve the covariate shift problem for these distributions. This result is obtained by combining the regret bounds from~\Cref{lem:kernel-logistic-regression-regret-bound}, which imply the KL-divergence bounds by~\Cref{lem:population-regret-is-kl}, with the Pinsker inequality and~\Cref{thm:shift-TV}.

\begin{theorem}
\label{thm:covariate-shift-kernel-logistic-regression}
    Let $\cH$ be a RKHS of functions over $U$, and $\ptr, \pte$ a pair of distributions over $U$, such that $\boldsymbol{\theta}^*(x) := \ln (\ptr(x) / \pte) \in \cH$.

    Assume that the kernel $K$ associated with $\cH$ satisfies $\sqrt{\E_{\xx \sim \ptr} K(x,x)} + \sqrt{\E_{\xx \sim \pte} K(\xx, \xx)} \leq D$ for all $x \in U$,  $\|\boldsymbol{\theta}^*\| \leq B_1$, and $R_2(\pte || \ptr) \leq B_2.$

    Then, we can solve the covariate-shifted mean estimation problem from $\ptr$ to $\pte$ using at most $O(D B_1^2 B_2^4 / \varepsilon^8)$ samples.
\end{theorem}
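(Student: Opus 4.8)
The plan is to assemble the three black-box components the paper has already built~--- the regret bound for kernel logistic regression (\Cref{lem:kernel-logistic-regression-regret-bound}), the regret-to-KL reduction (\Cref{lem:population-regret-is-kl}), and the TV-learnability reduction (\Cref{thm:shift-TV})~--- into a single pipeline. The starting observation is that the density ratio can be read off from a binary classification problem: with $\mathcal{D}_0$ the balanced mixture of $\ptr$ and $\pte$, the Bayes-optimal classifier satisfies $\Pr[y \mid \xx] = (1 + \exp(y\,\boldsymbol{\theta}^*(\xx)))^{-1}$ where $\boldsymbol{\theta}^*(\xx) = \ln(\ptr(\xx)/\pte(\xx))$, plus a constant offset absorbing the mixture normalization. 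By hypothesis $\boldsymbol{\theta}^* \in \cH$ with $\|\boldsymbol{\theta}^*\|_\cH \leq B_1$, so $\mathcal{D}_0$ (after pushing $\xx$ through the feature map) follows exactly the kernel logistic model that \Cref{lem:kernel-logistic-regression-regret-bound} is designed for.

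\textbf{The steps, in order.} First I would run kernel logistic regression on $n$ labeled samples from $\mathcal{D}_0$ to obtain $\hat{\boldsymbol{\theta}}$ with population regret $\cL_{\hat{\boldsymbol{\theta}}}(\cD_{\ttheta^*}) - \cL_{\boldsymbol{\theta}^*}(\cD_{\ttheta^*}) \leq \varepsilon'$; by \Cref{lem:kernel-logistic-regression-regret-bound} this costs $n \approx \E_\xx[K(\xx,\xx)]\,B_1^2/\varepsilon'^2$ samples, and the kernel-boundedness hypothesis gives $\E_\xx[K(\xx,\xx)] \leq D^2$ (over the mixture $\mathcal{D}_0$, which is controlled by the two individual bounds in the hypothesis). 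Second, I would invoke \Cref{lem:population-regret-is-kl}: the regret $\varepsilon'$ upper-bounds $D_{KL}(\pte \| \widehat{\pte}) + D_{KL}(\ptr \| \widehat{\ptr})$ for the implicitly-defined pair $\widehat{\pte}, \widehat{\ptr}$ with $\widehat{\pte}/\widehat{\ptr} = \hat{r}$ obtained by inverting $\hat{\boldsymbol{\theta}}$ as in \Cref{eq:classification-dr}. Third, Pinsker's inequality (\Cref{thm:pinsker}) converts each KL bound into $d_{TV}(\pte, \widehat{\pte}) \leq \sqrt{2\varepsilon'}$ and likewise for the training pair. Fourth, I would feed $\hat{r}$ into \Cref{thm:shift-TV} with an appropriate truncation threshold $B$; the tail condition $\Pr_{\xx\sim\pte}(\pte/\ptr > B/4) \leq \varepsilon$ required there follows from the moment bound $R_2(\pte \| \ptr) \leq B_2$ via Markov's inequality, which forces $B \approx B_2/\varepsilon^{\Theta(1)}$.

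\textbf{The main obstacle} is bookkeeping the chain of $\varepsilon$-dependencies to land exactly at the advertised $O(D B_1^2 B_2^4 \varepsilon^{-8})$. The issue is that \Cref{thm:shift-TV} demands two \emph{different} TV accuracies~--- $d_{TV}(\pte,\widehat{\pte}) \leq \varepsilon$ but the much stronger $d_{TV}(\ptr,\widehat{\ptr}) \leq \varepsilon/B$~--- whereas Pinsker gives both at the single rate $\sqrt{2\varepsilon'}$. To meet the stringent training-side requirement I must drive the regret down to $\varepsilon' \approx (\varepsilon/B)^2 = \varepsilon^2/B_2^2 \cdot \varepsilon^{\Theta(1)}$, and since the sample complexity scales as $1/\varepsilon'^2$ this is precisely where the high powers accumulate: tracking $R_2$ through the truncation choice $B \approx B_2/\varepsilon$ and then through $\varepsilon' \approx \varepsilon^2/B^2 \approx \varepsilon^4/B_2^2$ yields $n \approx D^2 B_1^2 / \varepsilon'^2 \approx D^2 B_1^2 B_2^4/\varepsilon^8$ (with the kernel bound $D$ entering squared through $\E[K]$, consistent with the stated linear-in-$D$ form once one accounts for the definition of $D$ as a sum of square-roots). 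Everything else is a mechanical composition; I would close by confirming the final Chebyshev/median step of \Cref{thm:shift-TV} gives the $O(\varepsilon)$ accuracy with the desired confidence.
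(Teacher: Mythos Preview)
Your proposal is correct and follows essentially the same route as the paper: set up the balanced classification distribution, invoke \Cref{lem:kernel-logistic-regression-regret-bound} for the regret bound, convert regret to KL via \Cref{lem:population-regret-is-kl}, Pinsker to TV, Markov on $R_2$ to get the tail with $B \approx B_2/\varepsilon$, and plug into \Cref{thm:shift-TV} with $\varepsilon' \approx (\varepsilon/B)^2 = \varepsilon^4/B_2^2$. Your observation about the $D$ versus $D^2$ discrepancy is also apt; the paper's own arithmetic in the final line glosses over this in the same way.
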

The full proof can be found in~\Cref{sec:covariate-shift-kernel-logistic-regression}.

Again, similar to the argument that using \Cref{thm:covariate-shift-kernel-logistic-regression} directly in the special case of univariate Gaussian distributions gives a suboptimal sample complexity, i.e., when $\ptr = \cN(\mu_{tr}, 1), \pte = \cN(\mu_{te}, 1)$  s.t. $\|\mu_1 - \mu_2\| \leq O(1)$, this result also gives a suboptimal sample complexity of $O(\varepsilon^{-8})$ instead of $O(\varepsilon^{-2})$.

\printbibliography

\newpage

\appendix

\section{Proofs from Section~\ref{sec:gaussians-optimal}}\label{sec:ProofsGauss}

 For the analysis of the algorithm, we let $\widehat{\pte} = \cN(\widehat{\mte},\widehat{\ste}),\widehat{\ptr} = \cN(\widehat{\mtr},\widehat{\str})$ to denote approximations to the $\pte,\ptr$ respectively.  We now begin with the first part of the analysis.

\subsection*{Bound on Bias of the Estimator: Proof of Lemmas~\ref{lem:isotropic-gaussians-result} and~\ref{lem:Q_gauss_non_iso}}
\begin{lemma}
\label{cor:approx-Q}
For any $\ptr,\pte$ with corresponding approximations $\widehat{\ptr}, \widehat{\pte}$, and bounded $\ff(\xx)$ such that $\sup_{\xx}|\ff(\xx)|\leq 1$ we have
\begin{equation*}
    \left|\E_{\xx\sim \ptr} \frac{\widehat{\pte}(\xx)}{\widehat{\ptr}(\xx)} \ff(\xx) - \E_{\xx \sim \pte} \ff(\xx)\right|  \leq \Qdiv{1}{\widehat{\ptr}}{\ptr}{\widehat{\pte}} + d_{TV}(\widehat{\pte}, \pte).
\end{equation*}
\end{lemma}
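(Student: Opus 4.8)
The plan is to introduce the pivot quantity $\E_{\xx \sim \widehat{\pte}} \ff(\xx)$ and split the error by the triangle inequality, so that one piece is controlled by the $Q$-divergence and the other by the total variation distance. Concretely, I would write
\[
\E_{\xx\sim \ptr} \frac{\widehat{\pte}(\xx)}{\widehat{\ptr}(\xx)} \ff(\xx) - \E_{\xx \sim \pte} \ff(\xx) = \underbrace{\left(\E_{\xx\sim \ptr} \frac{\widehat{\pte}(\xx)}{\widehat{\ptr}(\xx)} \ff(\xx) - \E_{\xx \sim \widehat{\pte}} \ff(\xx)\right)}_{(\mathrm{I})} + \underbrace{\left(\E_{\xx \sim \widehat{\pte}} \ff(\xx) - \E_{\xx \sim \pte} \ff(\xx)\right)}_{(\mathrm{II})},
\]
and bound $|(\mathrm{I})|$ by $\Qdiv{1}{\widehat{\ptr}}{\ptr}{\widehat{\pte}}$ and $|(\mathrm{II})|$ by $d_{TV}(\widehat{\pte}, \pte)$ separately.

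For the second term, note that $(\mathrm{II}) = \int (\widehat{\pte}(\xx) - \pte(\xx)) \ff(\xx)\, d\xx$, so since $|\ff| \leq 1$ everywhere we immediately get $|(\mathrm{II})| \leq \int |\widehat{\pte}(\xx) - \pte(\xx)|\, d\xx = d_{TV}(\widehat{\pte}, \pte)$, where I am using the paper's convention that $d_{TV}$ is the full $L_1$ distance, so no spurious factor of $2$ arises. For the first term, the key manoeuvre is a change of reference measure: rewriting $\E_{\xx \sim \ptr} \frac{\widehat{\pte}(\xx)}{\widehat{\ptr}(\xx)} \ff(\xx) = \int \widehat{\pte}(\xx) \frac{\ptr(\xx)}{\widehat{\ptr}(\xx)} \ff(\xx)\, d\xx = \E_{\xx \sim \widehat{\pte}} \frac{\ptr(\xx)}{\widehat{\ptr}(\xx)} \ff(\xx)$ puts both subtracted quantities under the single expectation $\E_{\xx \sim \widehat{\pte}}$. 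Then $(\mathrm{I}) = \E_{\xx \sim \widehat{\pte}} \ff(\xx) \left( \frac{\ptr(\xx)}{\widehat{\ptr}(\xx)} - 1 \right)$, and pulling $|\ff| \leq 1$ inside gives $|(\mathrm{I})| \leq \E_{\xx \sim \widehat{\pte}} \left| \frac{\ptr(\xx)}{\widehat{\ptr}(\xx)} - 1 \right| = \Qdiv{1}{\widehat{\ptr}}{\ptr}{\widehat{\pte}}$ directly from Definition~\ref{def:Q}. Summing the two estimates yields the claim.

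The computation is routine, so there is no real obstacle; the only genuinely load-bearing choices are (a) selecting $\E_{\xx\sim\widehat{\pte}}\ff$ as the pivot rather than, say, $\E_{\xx\sim\ptr}\ff$ or $\E_{\xx\sim\pte}\ff$, since it is precisely this choice that makes one error term collapse onto the $Q$-divergence (which measures $\ptr$ against $\widehat{\ptr}$ under $\widehat{\pte}$) and the other onto the TV distance between $\widehat{\pte}$ and $\pte$, and (b) the well-definedness of the ratio, i.e. that $\widehat{\ptr}(\xx) > 0$ wherever $\widehat{\pte}(\xx) > 0$ so that the change of measure is valid pointwise. The latter is automatic in the intended application where $\widehat{\ptr}, \widehat{\pte}$ are Gaussian and hence full support, so I would not dwell on it.
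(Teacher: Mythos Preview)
Your proof is correct and is essentially identical to the paper's own argument: both introduce $\E_{\xx\sim\widehat{\pte}}\ff(\xx)$ as the pivot, bound the second piece by $d_{TV}(\widehat{\pte},\pte)$ using $|\ff|\leq 1$, and handle the first piece by the change of measure $\E_{\xx\sim\ptr}\frac{\widehat{\pte}}{\widehat{\ptr}}\ff = \E_{\xx\sim\widehat{\pte}}\frac{\ptr}{\widehat{\ptr}}\ff$ followed by $|\ff|\leq 1$ to arrive at $\Qdiv{1}{\widehat{\ptr}}{\ptr}{\widehat{\pte}}$. Your remark on the support condition for the ratio is a nice addition that the paper leaves implicit.
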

\begin{proof}
First by triangle inequality we can decompose

\begin{align}\label{eq:BasicBoundGauss}
    \left|\E_{\xx\sim \ptr} \frac{\widehat{\pte}(\xx)}{\widehat{\ptr}(\xx)} \ff(\xx) - \E_{\xx \sim \pte} \ff(\xx)\right| 
    \leq &  \left|\E_{\xx\sim \ptr} \frac{\widehat{\pte}(\xx)}{\widehat{\ptr}(\xx)} \ff(\xx) - \E_{\xx \sim \widehat{\pte}} \ff(\xx)\right| \notag \\
    & + |\E_{\xx \sim \widehat{\pte}} \ff(\xx) - \E_{\xx \sim {\pte}} \ff(\xx)|
\end{align}
We now consider the two terms separately. Using the fact that $|\ff(\xx)|\leq 1$, the second term gives,
\begin{align*}
|\E_{\xx \sim \widehat{\pte}} \ff(\xx) - \E_{\xx \sim {\pte}} \ff(\xx)| &= \left|\int_{\xx} \ff(\xx) (\widehat{\pte}(\xx) -\pte(\xx)) d\xx \right|\\
& \leq \sup_{\xx}|\ff(\xx)| \int_{\xx} |\widehat{\pte} -\pte| d\xx\\
& \leq d_{TV}(\widehat{\pte},\pte).
\end{align*}

We now consider the first term.

\begin{align*}
           \left|\E_{\xx\sim \ptr} \frac{\widehat{\pte}(\xx)}{\widehat{\ptr}(\xx)} \ff(\xx) - \E_{\xx \sim \widehat{\pte}} \ff(\xx)\right| & = \left|\E_{\xx\sim \ptr} \frac{\widehat{\pte}(\xx)\ptr(\xx)}{\ptr(\xx)\widehat{\ptr}(\xx)} \ff(\xx) - \E_{\xx \sim \widehat{\pte}} \ff(\xx)\right| \\
            &  = \left|\E_{\xx\sim \widehat{\pte}} \frac{\ptr(\xx)}{\widehat{\ptr}(\xx)} \ff(\xx) - \E_{\xx \sim \widehat{\pte}} \ff(\xx)\right| \\
            &  \leq \E_{\xx\sim \widehat{\pte}} \left|\frac{\ptr(\xx)}{\widehat{\ptr}(\xx)} - 1 \right|.
    \end{align*}
    Using these bounds in Eq.~\eqref{eq:BasicBoundGauss} gives the result.

\end{proof}

From the above lemma, we see that we need to first give a bound on $\Qdiv{1}{\widehat{\ptr}}{\ptr}{\widehat{\pte}}$. Since from Definition~\ref{def:Q}, we know that $\Qdiv{1}{\cdot}{\cdot}{\cdot}\leq \Qdiv{2}{\cdot}{\cdot}{\cdot}$, in the remaining part we will show how to bound $\Qdiv{2}{\widehat{\ptr}}{\ptr}{\widehat{\pte}}$.

\begin{lemma}
    \label{lem:q_log_subexp}Let  $p,\widehat{p},$ and $q$ denote probability distributions and
    let $Z$ be a random variable defined as $Z = \ln (p(\xx)/\widehat{p}(\xx))$ for $\xx\sim q$. If $\mu := \E Z$ and $Z$ is $(\sigma, 1/2)$-subexponential for $\sigma \leq O(1)$, then $\Qdiv{2}{\widehat{p}}{p}{q} \leq O(\mu + \sigma)$.
\end{lemma}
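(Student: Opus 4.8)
The plan is to reduce the claim to a single scalar estimate on the moment generating function of $Z$. By Definition~\ref{def:Q}, and since $p(\xx)/\widehat{p}(\xx) = e^{Z}$ by the definition of $Z$, we have $\Qdiv{2}{\widehat{p}}{p}{q}^2 = \E_{\xx \sim q}\bigl(p(\xx)/\widehat{p}(\xx) - 1\bigr)^2 = \E(e^{Z} - 1)^2$. Thus it suffices to show $\E(e^{Z}-1)^2 = O\bigl((\mu + \sigma)^2\bigr)$ and take square roots at the end.

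First I would expand $\E(e^{Z}-1)^2 = \E e^{2Z} - 2\,\E e^{Z} + 1$ and control the two exponential moments using the subexponential hypothesis, namely $\E \exp(\lambda (Z - \mu)) \leq \exp(\lambda^2 \sigma^2/2)$ for every $\lambda \leq 1/(1/2) = 2$. The crucial point is that the admissible range extends exactly to $\lambda = 2$, so at $\lambda = 2$ we get $\E e^{2Z} = e^{2\mu}\,\E e^{2(Z-\mu)} \leq e^{2\mu + 2\sigma^2}$; this is precisely why the constant $1/2$ is the right choice of $B$. For the linear term I would apply Jensen's inequality in the opposite direction, $\E e^{Z} \geq e^{\E Z} = e^{\mu}$, so that the negative contribution is bounded above by $-2 e^{\mu}$. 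Combining these gives $\E(e^{Z}-1)^2 \leq e^{2\mu + 2\sigma^2} - 2 e^{\mu} + 1 =: g(\mu, \sigma)$.

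It then remains to show $g(\mu,\sigma) = O\bigl((\mu+\sigma)^2\bigr)$ under the assumption $\mu, \sigma = O(1)$. The key observation is that the constant and first-order terms of $g$ at the origin vanish: one checks directly that $g(0,0) = 0$, $\partial_\mu g(0,0) = 2 - 2 = 0$, and $\partial_\sigma g(0,0) = 0$, while the Hessian at the origin is $\mathrm{diag}(2, 4)$, so $g(\mu,\sigma) = \mu^2 + 2\sigma^2 + O(\|(\mu,\sigma)\|^3)$ and $g$ is genuinely quadratic near the origin. I would make this rigorous by bounding the second-order Taylor remainder uniformly over the compact region $\{\,|\mu| \leq O(1),\ 0 \leq \sigma \leq O(1)\,\}$, where all derivatives of $g$ are bounded; this yields $g(\mu,\sigma) \leq C(\mu^2 + \sigma^2) \leq C(\mu+\sigma)^2$ for a constant $C$ depending only on the $O(1)$ bounds, and hence $\Qdiv{2}{\widehat{p}}{p}{q} \leq O(\mu + \sigma)$.

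The main obstacle is this last analytic step: one must verify that the first-order terms really cancel, so that $g$ is quadratic rather than merely linear near the origin, and one must exploit that the admissibility range $\lambda \leq 2$ is \emph{exactly} enough to bound the leading term $\E e^{2Z}$ — any cruder route (for instance Cauchy–Schwarz applied to $\E Z^2 e^{2Z}$) would require an MGF bound at $\lambda = 4$, which the hypothesis does not provide. A secondary subtlety is the sign of $\mu$: the estimate genuinely controls $g$ by $(|\mu| + \sigma)^2$, and passing from $\mu^2 + \sigma^2$ to $(\mu+\sigma)^2$ uses $\mu \geq 0$; in general the bound should be read as $O(|\mu| + \sigma)$, but in the intended application $\mu$ is small and nonnegative, so this distinction is immaterial.
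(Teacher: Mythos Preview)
Your proposal is correct and follows essentially the same route as the paper: expand $(e^{Z}-1)^2$, bound $\E e^{2Z}$ via the subexponential MGF at $\lambda=2$, lower-bound $\E e^{Z}$ by convexity, and then check that the constant and first-order terms cancel so that only a quadratic in $(\mu,\sigma)$ survives. The only cosmetic differences are that the paper uses the pointwise inequality $e^{Z}\geq 1+Z$ (yielding $\E e^{Z}\geq 1+\mu$) rather than Jensen, and replaces your Hessian argument for $g$ by the direct inequalities $e^{y}\leq 1+y+O(1)\,y^{2}$ and $e^{y}\geq 1+y$.
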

\begin{proof}
    We have
    \begin{align*}
        \Qdiv{2}{\widehat{p}}{p}{q}^2 & = \E_{\xx\sim q} \left( \frac{p(\xx)}{\widehat{p}(\xx)} - 1\right)^2 \\
        & = \E_{\xx\sim q} (\exp(Z) - 1)^2 = \E_{\xx\sim q} [\exp(2Z) - 2\exp(Z) +1 ].
    \end{align*}
    Since $Z$ is subexponential, we have $\E_{\xx\sim q}\exp(Z-\mu) \leq \exp( \sigma)$, and therefore, $\E_{\xx\sim q} \exp(2Z) \leq \exp(2 \mu + 2\sigma)$. Now, we will use that for any $y\geq 0$, $e^y \leq 1+y+O(1)y^2$ for $y\leq O(1)$ and $e^y \geq 1+y$ which yields
    \begin{align*}
        \E_{\xx\in q}[\exp(2Z) - 2\exp(Z) + 1] & = 1 + 2 \mu + O(1)\cdot(\mu+\sigma)^2 - 2(1 + \mu) - 2 \\
        & = O(1)\cdot (\mu + \sigma)^2.
    \end{align*}
\end{proof}

We begin with the easy case and first give a bound on the bias of our estimator in the case when $\ptr$ and $\pte$ are isotropic Gaussian distributions.

\IsoGaussBias*

\begin{proof}
    We will use \cref{lem:q_log_subexp} to show $\Qdiv{2}{\widehat{\ptr}}{\ptr}{\widehat{\pte}} \leq O(\varepsilon)$ in this case. The remaining follows from \cref{cor:approx-Q}.

    Let $p = \ptr$, $\widehat{p} = \widehat{\ptr}$, and $q = \widehat{\pte}$. We need to show that $Z$, as defined in \cref{lem:q_log_subexp} is $(\varepsilon,1/2)$ subexponential and $\E_{\xx\sim q}Z \leq O(\varepsilon)$.

    Now, $Z = \ln \ptr - \ln \widehat{\ptr} = \langle \mtr - \widehat{\mtr}, \xx\rangle + \frac{1}{2}\langle \widehat{\mtr} - \mtr, \mtr + \widehat{\mtr}\rangle$ is a Gaussian with mean $ \langle \widehat{\mtr} - \mtr,\widehat{\mte}\rangle + \frac{1}{2}\langle \widehat{\mtr} - \mtr, \mtr + \widehat{\mtr}\rangle$ and variance $\|\mtr - \widehat{\mtr}\|^2_2$. From Lemma~\ref{lem:MeanGauss}, we know that that $\|\mtr - \widehat{\mtr}\|^2_2\leq \varepsilon^2$, and therefore, $\Var(Z)\leq \varepsilon^2$. Further, the mean satisfies,  
 \[
    \E Z =  \langle \widehat{\mtr} - \mtr,2\widehat{\mte} + \mtr + \widehat{\mtr}\rangle 
    \leq\| \widehat{\mtr} - \mtr\|_2 \|2\widehat{\mte} + \mtr + \widehat{\mtr}\|_2.
\]
In the above, since  $\|\mtr-\mte\|_2\leq O(1)$, and $\|\mtr\|_2\leq O(1)$, we must have $\|\widehat{\mte}\|_2\leq O(1)$. Therefore,
 \[
    \E Z 
    \leq\| \widehat{\mtr} - \mtr\|_2 \left(2\|\widehat{\mte}\|_2 + \|\mtr\|_2 + \|\widehat{\mtr}\|_2\right) \leq O(\varepsilon).
\]

We now claim that $Z$ is $(\varepsilon,1/2)$ subexponential with mean bounded by $O(\varepsilon)$. This follows since $Z$ is gaussian with std deviation $\varepsilon$, the MGF of $Z$ is bouneded by $e^{\varepsilon^2 \lambda^2}$ for all $\lambda>0$, which implies $Z$ is $(\varepsilon,1/2)$ subexponential. This concludes the proof for isotropic Gaussian distributions.
\end{proof}

\GaussBias*

\begin{proof}
    We will deduce this as a corollary from \Cref{lem:q_log_subexp}. To this end, let $Z := \ln \ptr(\xx) - \ln \widehat{\ptr}(\xx)$ for $\xx\sim \widehat{\pte}$. Note that $Z = \xx^{\top} \AA \xx + \bb^T x + c$, where 
    \begin{align*}
        \AA & = (\str^{-1} - \widehat{\str}^{-1})/2 \\
        \bb & =\str^{-1} \mtr - \widehat{\str}^{-1} \widehat{\mtr} \\
        c & = (\mtr^{\top} \str^{-1} \mtr - \widehat{\mtr}^{\top} \widehat{\str}^{-1} \widehat{\mtr})/2.
    \end{align*}
 Now, from Fact~\ref{fact:quad-gauss}, $\xx^{\top} \AA \xx$ is $(\|\AA\|_F, \|\AA\|_{op})$-subexponential, and we know that $\|\AA\|_{op} \leq \|\AA\|_F\leq \sqrt{d}\|\AA\|_{op}$. 
  \[
    \|\widehat{\str}^{-1}-\str^{-1}\|_{op} \leq \|\str^{-1/2}\|_{op}\|\str^{1/2}\widehat{\str}^{-1}\str^{1/2}-\II\|_{op}\|\str^{-1/2}\|_{op}.
    \]
    From Lemma~\ref{lem:CovGauss}, we know that $\|\str^{-1/2}\widehat{\str}\str^{-1/2}-\II\|_{op} \leq \frac{\varepsilon}{2\sqrt{d}}$. This implies that all eigenvalues of $\str^{-1/2}\widehat{\str}\str^{-1/2}$ are between $1-\varepsilon/2\sqrt{d}$ and $1+\varepsilon/2\sqrt{d}$ and as a result, $\|\str^{1/2}\widehat{\str}^{-1}\str^{1/2}-\II\|_{op} \leq \varepsilon/\sqrt{d}$. Now, from assumption, $\|\str^{-1}\|_{op} \leq O(1)$, and we get that,
    \[
    \|\widehat{\str}^{-1}-\str^{-1}\|_{F}\leq \sqrt{d} \cdot \|\widehat{\str}^{-1}-\str^{-1}\|_{op}\leq O\left(\varepsilon\right).
    \]

 Therefore, $Z$ is $(O(\varepsilon), O(\varepsilon))$-subexponential. The random variable $\bb^{\top} \xx$ is $\|\bb\|_2$-subgaussian, and we can bound,
    \begin{equation*}
        \|\bb\|_2 = \|\str^{-1} \mtr - \widehat{\str}^{-1} \widehat{\mtr}\|_2 \leq \|\str^{-1}(\mtr-\widehat{\mtr})\|_2  +  \|(\widehat{\str}^{-1}-\str^{-1})\widehat{\mtr}\|_2  \leq O(\varepsilon).
    \end{equation*}
For the first term we use Lemma~\ref{lem:MeanGauss} and for the second term we use the previous calculation along with the assumption that $\|\mtr\|_2\leq O(1).$
    The sum of $\varepsilon$-subgaussian and $(\varepsilon, \varepsilon)$-subexponential random variable is $(O(\varepsilon), O(\varepsilon))$-subexponential again.

    Finally $\E Z = \Tr \AA + \bb^{\top}\mte + c$, and since $\|\mtr-\mte\|_2\leq O(1)$, and $\|\mtr\|_2\leq O(1)$ imply that $\|\mte\|_2\leq O(1)$, $\Tr \AA \leq \|\AA\|_F$, all we need to do is bound $c$.
    \begin{align*}
        (\mtr^{\top} \str^{-1} \mtr - \widehat{\mtr}^{\top} \widehat{\str}^{-1} \widehat{\mtr}) & \leq
        \|\mtr^{\top} (\str^{-1} - \widehat{\str}^{-1}) \mtr\|_2 + \langle \str^{-1}(\mtr - \widehat{\mtr}), (\mtr+ \widehat{\mtr}) \rangle \leq O(\varepsilon) .
    \end{align*}
    The final statement of the lemma now follows from Lemma~\ref{cor:approx-Q}.
\end{proof}

\subsection*{Bound on Variance of the Estimator: Proof of Lemma~\ref{lem:Var}}
In this section, we give a bound on the variance of our estimator $ \frac{\widehat{\pte}(\xx)}{\widehat{\ptr}(\xx)}\ff(\xx)$ for $\xx \sim \ptr$.

\begin{lemma}\label{lem:BoundSquaredVariance}
Let $X$ denote the estimator $\frac{\widehat{\pte}(\xx)}{\widehat{\ptr}(\xx)}\ff(\xx)$for $\xx \sim \ptr$. If $\|\str^{-1}\|_{op}\leq O(1)$, $\|\str^{-1}-\ste^{-1}\|_F\leq O(1)$, and $\|\mtr\|_2, \|\mtr-\mte\|_2\leq O(1)$, then
\begin{equation*}
    \E_{\xx \sim \ptr}[X^2] \leq 2 + 2\Qdiv{2}{\widehat{\ptr}}{\widehat{\pte}}{\ptr} \leq O(1).
\end{equation*}
\end{lemma}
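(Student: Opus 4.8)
The plan is to reduce the second-moment bound to the discrepancy quantity $\Qdiv{2}{\widehat{\ptr}}{\widehat{\pte}}{\ptr}$ and then control that quantity with the subexponential machinery of \Cref{lem:q_log_subexp}. First, since $\sup_{\xx}|\ff(\xx)|\le 1$, we have $X^2 \le (\widehat{\pte}(\xx)/\widehat{\ptr}(\xx))^2$ pointwise, so it suffices to bound $\E_{\xx\sim\ptr}(\widehat{\pte}(\xx)/\widehat{\ptr}(\xx))^2$. Writing $u := \widehat{\pte}/\widehat{\ptr}$ and using the elementary inequality $u^2 = ((u-1)+1)^2 \le 2(u-1)^2 + 2$, taking expectations over $\xx\sim\ptr$ gives
\[
\E_{\xx\sim\ptr}[X^2] \le 2 + 2\,\E_{\xx\sim\ptr}\Big|\tfrac{\widehat{\pte}(\xx)}{\widehat{\ptr}(\xx)} - 1\Big|^2 = 2 + 2\,\Qdiv{2}{\widehat{\ptr}}{\widehat{\pte}}{\ptr}^2,
\]
which is the first claimed inequality (matching the statement up to the square on the $Q$-term). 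It therefore remains to show $\Qdiv{2}{\widehat{\ptr}}{\widehat{\pte}}{\ptr} \le O(1)$.

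For the second step I would invoke \Cref{lem:q_log_subexp} with $p = \widehat{\pte}$, $\widehat{p} = \widehat{\ptr}$ and $q = \ptr$, so that the relevant random variable is $Z := \ln(\widehat{\pte}(\xx)/\widehat{\ptr}(\xx))$ for $\xx\sim\ptr$. Exactly as in the proof of \Cref{lem:Q_gauss_non_iso}, $Z$ is an explicit quadratic form $Z = \xx^\top \AA \xx + \bb^\top \xx + c$ with
\[
\AA = \tfrac12(\widehat{\str}^{-1} - \widehat{\ste}^{-1}), \qquad \bb = \widehat{\ste}^{-1}\widehat{\mte} - \widehat{\str}^{-1}\widehat{\mtr},
\]
and $c$ collecting the remaining constant (mean and log-determinant) terms. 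I would bound $\|\AA\|_F$ by the triangle inequality $\|\AA\|_F \le \tfrac12(\|\widehat{\str}^{-1}-\str^{-1}\|_F + \|\str^{-1}-\ste^{-1}\|_F + \|\ste^{-1}-\widehat{\ste}^{-1}\|_F)$, where the two estimation terms are $O(\varepsilon)$ by \Cref{lem:CovGauss} (precisely as already derived inside \Cref{lem:Q_gauss_non_iso}) and the middle term is $O(1)$ by hypothesis; hence $\|\AA\|_F, \|\AA\|_{op} \le O(1)$. Similarly $\|\bb\|_2 \le O(1)$ using $\|\mtr\|_2, \|\mtr-\mte\|_2 \le O(1)$ together with the operator-norm bounds on the (approximate) inverse covariances. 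Applying \Cref{fact:quad-gauss} to the well-conditioned Gaussian $\xx\sim\ptr$ then shows $\xx^\top\AA\xx$ is $(O(\|\AA\|_F),O(\|\AA\|_{op}))$-subexponential and $\bb^\top\xx$ is $O(\|\bb\|_2)$-subgaussian, so $Z$ is $(O(1),O(1))$-subexponential, which after absorbing constants into the $O(1)$ closeness hypotheses meets the $(\sigma,1/2)$ hypothesis of \Cref{lem:q_log_subexp} with $\sigma \le O(1)$.

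Finally I would check $\E Z \le O(1)$, bounding $\E Z = \Tr(\AA\str) + \mtr^\top\AA\mtr + \bb^\top\mtr + c$ term by term with the same norm estimates (a slicker alternative is to observe $\E_{\xx\sim\ptr} Z = D_{KL}(\ptr\,\|\,\widehat{\ptr}) - D_{KL}(\ptr\,\|\,\widehat{\pte})$, whose first summand is small and whose second is nonnegative). \Cref{lem:q_log_subexp} then yields $\Qdiv{2}{\widehat{\ptr}}{\widehat{\pte}}{\ptr} \le O(\E Z + \sigma) \le O(1)$, completing the bound. The main obstacle is the bookkeeping that verifies every subexponential parameter is genuinely $O(1)$; the structural point distinguishing this lemma from the bias lemma \Cref{lem:Q_gauss_non_iso} is that here $\AA = \tfrac12(\widehat{\str}^{-1}-\widehat{\ste}^{-1})$ has Frobenius norm $\Theta(1)$ rather than $O(\varepsilon)$, which is exactly why the variance is a genuine $O(1)$ constant and does not vanish with $\varepsilon$.
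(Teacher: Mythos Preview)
Your proposal is correct and follows essentially the same route as the paper's own proof: reduce $\E[X^2]$ to $2 + 2\,\Qdiv{2}{\widehat{\ptr}}{\widehat{\pte}}{\ptr}^2$ (the paper's displayed inequality indeed drops the square on $Q_2$, as you observed), then write $Z=\ln(\widehat{\pte}/\widehat{\ptr})$ as a Gaussian quadratic form, bound $\|\AA\|_F$, $\|\bb\|_2$, and $\E Z$ by $O(1)$ via the same triangle-inequality splitting and the covariance/mean estimation lemmas, and invoke \Cref{lem:q_log_subexp}. One small caveat: your KL-divergence ``slicker alternative'' for $\E Z$ only yields an \emph{upper} bound $\E Z\le D_{KL}(\ptr\|\widehat{\ptr})$, so you would still need the direct term-by-term estimate (or a separate $O(1)$ bound on $D_{KL}(\ptr\|\widehat{\pte})$) to control $|\E Z|$ as \Cref{lem:q_log_subexp} requires.
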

\begin{proof}
\begin{align*}
    \E_{\xx \sim \ptr}[X^2]& = \E_{\xx \sim \ptr}\left[\left(\frac{\widehat{\pte}(\xx)}{\widehat{\ptr}(\xx)}\ff(\xx) - \ff(\xx) + \ff(\xx)\right)^2\right]\\
    & \leq 2\E_{\xx \sim \ptr}\left[\left(\frac{\widehat{\pte}(\xx)}{\widehat{\ptr}(\xx)}-1\right)^2\ff(\xx)^2\right]  + 2\E_{\xx\sim \ptr}[\ff(\xx)^2]\\
    & \leq 2\cdot \Qdiv{2}{\widehat{\ptr}}{\widehat{\pte}}{\ptr} + 2.
\end{align*}
In the last line we used $|\ff(\xx)|\leq 1.$

Now, we consider the distribution of $Z := \ln \widehat{\pte}(\xx) - \ln \widehat{\ptr}(\xx)$ for $\xx\sim \ptr$. Note that $Z = \xx^{\top} \AA \xx + \bb^T x + c$, where 
    \begin{align*}
        \AA & = (\widehat{\ste}^{-1}-\widehat{\str}^{-1})/2 \\
        \bb & =\widehat{\ste}^{-1} \widehat{\mte} - \widehat{\str}^{-1} \widehat{\mtr} \\
        c & = (\widehat{\mte}^{\top} \widehat{\ste}^{-1} \widehat{\mtr} - \widehat{\mtr}^{\top} \widehat{\str}^{-1} \widehat{\mtr})/2.
    \end{align*}
    
    Now, from Fact~\ref{fact:quad-gauss}, $\xx^{\top} \AA \xx$ is $(\|\AA\|_F, \|\AA\|_{op})$-subexponential, and since $\|\AA\|_{op} \leq \|\AA\|_F\leq \sqrt{d}\|\AA\|_{op}$ we will give a bound on $\|\AA\|_{F}$.

    From triangle inequality,
    \[
    \|\AA\|_{F}\leq \|\widehat{\ste}^{-1}-\ste^{-1}\|_{F}+ \|\ste^{-1}-\str^{-1}\|_{F} + \|\widehat{\str}^{-1}-\str^{-1}\|_{F}.
    \]
    Now,
    \[
    \|\widehat{\ste}^{-1}-\ste^{-1}\|_{op} \leq \|\ste^{-1/2}\|_{op}\|\ste^{1/2}\widehat{\ste}^{-1}\ste^{1/2}-\II\|_{op}\|\ste^{-1/2}\|_{op}.
    \]
    From Lemma~\ref{lem:CovGauss}, we know that $\|\ste^{-1/2}\widehat{\ste}\ste^{-1/2}-\II\|_{op} \leq \frac{\varepsilon}{2\sqrt{d}}$. This implies that all eigenvalues of $\ste^{-1/2}\widehat{\ste}\ste^{-1/2}$ are between $1-\varepsilon/2\sqrt{d}$ and $1+\varepsilon/2\sqrt{d}$ and as a result, $\|\ste^{1/2}\widehat{\ste}^{-1}\ste^{1/2}-\II\|_{op} \leq \varepsilon/\sqrt{d}$. Now, from assumption, $\|\ste^{-1}\|_{op} \leq O(1)$, and we get that,
    \[
    \|\widehat{\ste}^{-1}-\ste^{-1}\|_{F}\leq \sqrt{d} \cdot \|\widehat{\ste}^{-1}-\ste^{-1}\|_{op}\leq O\left(\varepsilon\right).
    \]
    Similarly, we can show that 
    \[
    \|\widehat{\str}^{-1}-\str^{-1}\|_{F}\leq O(\varepsilon).
    \]
    From assumption, we know that $\|\ste^{-1}-\str^{-1}\|_{F} \leq O(1)$, and as a result we get that $\|\AA\|_{op}\leq \|\AA\|_F \leq O(1).$

    Therefore, $Z$ is $(O(1), O(1))$-subexponential. The random variable $\bb^{\top} \xx$ is $\|\bb\|_2$-subgaussian, and we can bound,
    \begin{align*}
        \|\bb\|_2 & = \|\widehat{\ste}^{-1} \widehat{\mte} - \widehat{\str}^{-1} \widehat{\mtr}\|_2 \\
        &\leq \|\widehat{\ste}^{-1} \widehat{\mte} - \widehat{\ste}^{-1} \widehat{\mtr}\|_2+\|\widehat{\ste}^{-1} \widehat{\mtr} - \widehat{\str}^{-1} \widehat{\mtr}\|_2\\
        & \leq\|\widehat{\ste}^{-1}\|_{op}\|\widehat{\mtr} - \widehat{\mte}\|_2  + \|\widehat{\mtr}\|_2 \|\widehat{\str}^{-1} - \widehat{\ste}^{-1}\|_{op}\leq O(1).
    \end{align*}

    The sum of $O(1)$-subgaussian and $(O(1), O(1))$-subexponential random variable is $(O(1), O(1))$-subexponential again.

    Finally $\E Z = \Tr \AA + c$, and since $\Tr \AA \leq \|\AA\|_F$, all we need to do is bound $c$.
    \begin{align*}
        (\widehat{\mte}^{\top} \widehat{\ste}^{-1} \widehat{\mte} - \widehat{\mtr}^{\top} \widehat{\str}^{-1} \widehat{\mtr}) & \leq
        \|\widehat{\mtr}^{\top} (\widehat{\ste}^{-1} - \widehat{\str}^{-1}) \widehat{\mtr}\|_2 \\
        &+ \langle \widehat{\ste}^{-1}(\widehat{\mte} - \widehat{\mtr}), (\widehat{\mte}+ \widehat{\mtr}) \rangle \leq O(1) .
    \end{align*}
    From Lemma~\ref{lem:q_log_subexp}, we can now bound $\Qdiv{2}{\widehat{\ptr}}{\widehat{\pte}}{\ptr} \leq O(1).$
\end{proof}

\VarianceGauss*

\begin{proof}
    The proof follows from the following equation: for any random variable $X$ and $\mu$,
    \[
    \E[(X-\mu)^2] = \E[X^2] -2\mu\E[X] + \mu^2 \leq \E[X^2] + 2\mu|\E[X]-\mu|.
    \]
    Now, letting $X = \frac{\widehat{\pte}(\xx)}{\widehat{\ptr}(\xx)}\ff(\xx)$ and $\mu = \E_{\xx\sim\pte}\ff(\xx)$, from \Cref{lem:BoundSquaredVariance}, $\E[X^2] \leq O(1)$ and from \Cref{lem:Q_gauss_non_iso} $|\E[X]-\mu|\leq \varepsilon$. Furthermore, since $|\ff(\xx)|\leq 1$, we can further bound $\E_{\xx\sim\pte}\ff(\xx)\leq 1.$ Therefore $\E[(X-\mu)^2]\leq O(1)$.
\end{proof}

\section{Proofs from Section~\ref{sec:TV}}\label{sec:ProofsTV}

We will elaborate on~\Cref{rem:lower-bound}, showing that the dependence on $B$ in~\Cref{thm:shift-TV} is necessary.
\begin{lemma}
    Let $\ptr$ and $\pte$ be any pair of distributions s.t. $\Pr_{\xx \sim \pte}(\pte(\xx)/\ptr(\xx) \geq B) = 2\varepsilon$. Then, even if the distributions $\pte$ and $\ptr$ are known, we need $\Omega(B/\varepsilon)$ samples to solve the covariate shifted mean estimation problem for $\ptr$ and $\pte$.
\end{lemma}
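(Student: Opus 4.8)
The plan is a standard two-point (Le Cam) argument exploiting a region that is heavy under $\pte$ but nearly invisible under $\ptr$. Define the hard region $T := \{\xx : \pte(\xx)/\ptr(\xx) \geq B\}$, so that by hypothesis $\pte(T) = 2\varepsilon$. The first observation I would record is that $T$ carries tiny mass under $\ptr$: since $\ptr(\xx) \leq \pte(\xx)/B$ pointwise on $T$, integrating gives $\ptr(T) \leq \pte(T)/B = 2\varepsilon/B$. This is where the parameter $B$ enters, and it is the only place the density-ratio assumption is used.

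Next I would build two instances of the problem that share the same (known) pair $\ptr, \pte$ and differ only in the unknown function $\ff$. Take $\ff_+ := \mathbf{1}_T$ and $\ff_- := -\mathbf{1}_T$, both bounded by $1$. Their targets are well separated, $\E_{\xx\sim\pte}\ff_+ = \pte(T) = 2\varepsilon$ and $\E_{\xx\sim\pte}\ff_- = -2\varepsilon$, a gap of $4\varepsilon$. Hence any algorithm that outputs $Z$ with $|Z - \E_{\pte}\ff| \leq \varepsilon$ with probability $\geq 2/3$ can be converted into a test (e.g.\ report the sign of $Z$) that identifies the correct instance with error probability $\leq 1/3$ on each side, because an $\varepsilon$-accurate estimate lands on the correct side of $0$ in both cases.

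Then I would bound the statistical distance between the two instances as seen by the learner. Under the coupling that reuses the same draws $\xx_1,\dots,\xx_n \sim \ptr$, the two labeled datasets $(\xx_i, \ff_\pm(\xx_i))$ coincide exactly unless some $\xx_i$ lands in $T$, since the labels are identically $0$ off $T$. The unlabeled test samples $\widetilde{\xx}_i \sim \pte$ and the known densities do not depend on $\ff$ and so carry no separating information; they can be ignored. Therefore $d_{TV}(P_+, P_-) \leq \Pr[\exists i : \xx_i \in T] \leq n\,\ptr(T) \leq 2n\varepsilon/B$ by a union bound.

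Finally I would combine the two facts: Le Cam's inequality says a test with error $\leq 1/3$ on each hypothesis forces $d_{TV}(P_+, P_-) \geq 1/3$, and together with $d_{TV} \leq 2n\varepsilon/B$ this yields $n \geq B/(6\varepsilon) = \Omega(B/\varepsilon)$. The genuinely load-bearing step is the coupling on the event ``no sample hits $T$,'' which is immediate once $\ptr(T) \leq 2\varepsilon/B$ is established; the main thing to be careful about is the reduction from estimation accuracy to testing error (verifying the $4\varepsilon$ separation really defeats an $\varepsilon$-accurate estimator) and keeping the success-probability threshold and the $d_{TV}$ bound numerically compatible, which only affects the hidden constant.
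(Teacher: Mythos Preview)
Your proof is correct and follows essentially the same approach as the paper: define the high-ratio region, bound its $\ptr$-mass by $2\varepsilon/B$, and argue that two functions differing only on this region are indistinguishable unless a training sample lands there. Your version is in fact more carefully formalized---using $\pm\mathbf{1}_T$ to get a $4\varepsilon$ separation and invoking Le Cam explicitly---whereas the paper argues slightly more informally with $\mathbf{1}_S$ versus the zero function; but the idea and the arithmetic are the same.
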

\begin{proof}
Let $S = \{ \xx : \pte(\xx) / \ptr(\xx) > B \}$. By assumption, we have $\pte(S) = 2\varepsilon$, and moreover $\ptr(S) \leq 2 \varepsilon / B$ by the construction of the set $B$. The expectations with respect to $\pte$ of functions $\mathbf{1}_S$ and the constant $0$ function, differ by at least $2\varepsilon$, so in order to solve the covariate-shifted mean estimation problem, we need to be able to distinguish those two functions. But unless we sample $\Omega(B/\varepsilon)$ elements $\xx$ from $\ptr$ it is unlikely that any of those will be in $S$, and the algorithm will only have observed value $\ff(\xx) = 0$ in either of those two cases.
\end{proof}

\ApproximationBiasTV*

\begin{proof}
From \Cref{cor:approx-Q},
\[
\left|\E_{\xx\sim \ptr} \frac{\widehat{\pte}(\xx)}{\widehat{\ptr}(\xx)} \ff(\xx) - \E_{\xx\sim \pte} \ff(\xx)\right| \leq \Qdiv{1}{\widehat{\ptr}}{\ptr}{\widehat{\pte}} + d_{TV}({\pte},\widehat{\pte}).
\]
Therefore, it is sufficient to bound $\Qdiv{1}{\widehat{\ptr}}{\ptr}{\widehat{\pte}} \leq B d_{TV}(\widehat{\ptr}, \ptr)$. Since $\supp (\widehat{\pte}) \subseteq S$ and $\forall x\in S,\widehat{\pte}(\xx) / \widehat{\ptr}(\xx) \leq B$, we have
\begin{align*}
   \Qdiv{1}{\widehat{\ptr}}{\ptr}{\widehat{\pte}} &= \E_{\xx\sim \widehat{\pte}} \left| \frac{\ptr(\xx)}{\widehat{\ptr}(\xx)} - 1\right|\\
   &\leq B \E_{\xx\sim \widehat{\ptr}} \left| \frac{\ptr(\xx)}{\widehat{\ptr}(\xx)}- 1 \right|\\
   &= B d_{TV}({\ptr}, \widehat{\ptr}).
\end{align*}
\end{proof}

\section{Proofs from Section~\ref{sec:logistic}}\label{sec:ProofsLogistic}
\begin{lemma}
\label{lem:population-regret-is-kl}
Let $\cD_0$ be a distribution described above, $r^*(\xx) := \Pr_{\cD_0}[y=-1 | \xx]$ and consider a classifier $\hat{r} : U \to (0, 1)$, intended to estimate $r^*$.

Let $\cL_{\cD_0, \hat{r}}$ be an expected cross-entropy loss of classifier $\hat{r}$, namely
\begin{equation*}
    \cL_{\cD, \hat{r}} := - \E_{(\xx,y) \sim \cD} [ \mathbf{1}_{y=-1} \log \hat{r}(\xx) + \mathbf{1}_{y=1} \log (1-\hat{r}(\xx))].
\end{equation*}
Then there exist a pair of distributions $\widehat{\ptr}, \widehat{\pte}$ such that the population regret for a classifier $\hat{r} $ is equal
\begin{equation*}
    \mathcal{R}_{\cD, \hat{r}} := \mathcal{L}_{\cD, \hat{r}} - \mathcal{L}_{\cD, r^*} = \frac{1}{2} (D_{KL}(\pte || \widehat{\pte}) + D_{KL}(\ptr || \widehat{\ptr})) + (1 - H(y')),
\end{equation*}
where $H(y') \leq 1$ is a Shannon entropy of some binary random variable. In particular
\begin{equation*}
    D_{KL}(\pte || \widehat{\pte}) + D_{KL}(\ptr || \widehat{\ptr}) \leq 2 \mathcal{R}_{\hat{r}}.
\end{equation*}
\end{lemma}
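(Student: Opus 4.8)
The plan is to recognize the cross-entropy regret as an expected Kullback--Leibler divergence between the conditional laws of $y$ given $\xx$, and then to \emph{manufacture} $\widehat{\ptr},\widehat{\pte}$ so that a single chain-rule application splits this divergence into the two advertised KL terms plus a manifestly nonnegative remainder. First I would fix the marginal $m(\xx) := \tfrac12(\ptr(\xx)+\pte(\xx))$, which is exactly the law of $\xx$ under $\cD_0$, and define an estimated joint $\widehat{p}(\xx,y) := m(\xx)\,\hat r(\xx)^{\mathbf{1}[y=-1]}(1-\hat r(\xx))^{\mathbf{1}[y=1]}$, i.e. the joint with the true $\xx$-marginal and the conditional prescribed by $\hat r$. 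Writing $\hat q_{-1} := \int m(\xx)\hat r(\xx)\,\mathrm d\xx$ and $\hat q_1 := 1-\hat q_{-1}$, I then take the distributions required by the lemma to be the class-conditionals of $\widehat p$, namely $\widehat{\ptr}(\xx) := m(\xx)\hat r(\xx)/\hat q_{-1}$ and $\widehat{\pte}(\xx) := m(\xx)(1-\hat r(\xx))/\hat q_1$. A direct check shows these are genuine densities, that the posterior of $\widehat p$ equals $\hat r$, and that $\widehat{\pte}/\widehat{\ptr}=(\hat q_{-1}/\hat q_1)\,(1/\hat r - 1)$, so the desired density ratio is recovered up to the estimable constant $\hat q_{-1}/\hat q_1$.

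The key computation is the standard identity that the cross-entropy regret equals the expected conditional KL, $\mathcal R_{\cD_0,\hat r}=\E_{\xx\sim m}\,D_{KL}\!\big(\Pr_{\cD_0}[\,\cdot\mid\xx]\,\|\,\widehat\Pr[\,\cdot\mid\xx]\big)$, where $\widehat\Pr[\,\cdot\mid\xx]$ is the conditional of $\widehat p$. Since the true joint $p(\xx,y)$ and $\widehat p(\xx,y)$ share the marginal $m$, the logarithm of $m$ cancels and this expected conditional KL is itself the joint divergence $D_{KL}(p\,\|\,\widehat p)$. I would then invoke the KL chain rule with the factorization through $y$ rather than $\xx$: writing $p(\xx,y)=p(y)p(\xx\mid y)$ with uniform prior $p(y)=(\tfrac12,\tfrac12)$ and $p(\xx\mid y)\in\{\ptr,\pte\}$, and $\widehat p(\xx,y)=\widehat p(y)\widehat p(\xx\mid y)$ with $\widehat p(y)=(\hat q_{-1},\hat q_1)$ and $\widehat p(\xx\mid y)\in\{\widehat{\ptr},\widehat{\pte}\}$, the chain rule gives
\[
\mathcal R_{\cD_0,\hat r}=D_{KL}\!\big((\tfrac12,\tfrac12)\,\|\,(\hat q_{-1},\hat q_1)\big)+\tfrac12 D_{KL}(\ptr\,\|\,\widehat{\ptr})+\tfrac12 D_{KL}(\pte\,\|\,\widehat{\pte}).
\]
The first summand is the divergence of the uniform label prior from the estimated prior $(\hat q_{-1},\hat q_1)$; this is the nonnegative correction that the statement packages as $(1-H(y'))$ for the binary variable $y'$ with law $(\hat q_{-1},\hat q_1)$. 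Being a KL divergence, it is at least $0$ and vanishes exactly when $\hat q_{-1}=\hat q_1=\tfrac12$, so discarding it yields precisely $D_{KL}(\pte\,\|\,\widehat{\pte})+D_{KL}(\ptr\,\|\,\widehat{\ptr})\le 2\mathcal R_{\hat r}$.

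I expect the only real subtlety to be the bookkeeping around normalization: one must take $\widehat{\ptr},\widehat{\pte}$ to be the \emph{renormalized} class-conditional masses $m\hat r/\hat q_{-1}$ and $m(1-\hat r)/\hat q_1$ (rather than the unnormalized $2m\hat r,\;2m(1-\hat r)$), so that they are legitimate distributions and the chain rule applies, at the price of introducing the label-prior mismatch term. Reconciling that mismatch term with the exact form $(1-H(y'))$ in the statement---equivalently, verifying it is a nonnegative KL divergence in the regime where $\hat q_{-1},\hat q_1$ are bounded away from $0$ and $1$---is the one place requiring care; everything else reduces to the two routine facts, that cross-entropy regret equals expected conditional KL and that KL obeys the chain rule.
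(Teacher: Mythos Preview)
Your approach is essentially the same as the paper's. Both proofs build the auxiliary joint distribution on $(\xx,y')$ with marginal $m=\tfrac12(\ptr+\pte)$ and posterior $\Pr[y'=-1\mid\xx]=\hat r(\xx)$, take $\widehat{\ptr},\widehat{\pte}$ to be its class-conditionals, and then decompose the regret. You phrase the decomposition as the KL chain rule $D_{KL}(p\|\widehat p)=D_{KL}\big((\tfrac12,\tfrac12)\|(\hat q_{-1},\hat q_1)\big)+\tfrac12 D_{KL}(\ptr\|\widehat{\ptr})+\tfrac12 D_{KL}(\pte\|\widehat{\pte})$, whereas the paper arrives at the identical three terms by a direct computation with the ratios $\hat r/r^*$ and $(1-\hat r)/(1-r^*)$.

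Your caution about reconciling the label-prior term with the exact form ``$1-H(y')$'' is well placed: the paper's own proof in fact ends with $D_{KL}(y\|y')$, i.e.\ $D_{KL}\big(\mathrm{Bern}(\tfrac12)\|\mathrm{Bern}(\hat q_{-1})\big)$, which is not literally $1-H(\mathrm{Bern}(\hat q_{-1}))$ in general. Either quantity is nonnegative, so the ``in particular'' inequality $D_{KL}(\pte\|\widehat{\pte})+D_{KL}(\ptr\|\widehat{\ptr})\le 2\mathcal R_{\hat r}$---which is all that is used downstream---follows from your argument exactly as you wrote it.
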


\begin{proof}
Consider a joint distribution $\cD_1$ over $(\xx, y')$, s.t. the marginal distribution of $\xx$ is the same as in $\cD$, i.e. $\xx \sim \frac{1}{2} \ptr + \frac{1}{2} \pte$, and $\Pr[y' = -1 | \xx] := \hat{r}(\xx)$.

    We have
    \begin{align}
        \mathcal{L}_{\hat{r}} - \mathcal{L}_{r^*} & = \E_{(\xx,y) \sim \mathcal{D}_0} \mathbf{1}_{y=-1} (\log(\hat{r}(\xx)) - \log(r^*(\xx))) + \mathbf{1}_{y=1} ( \log(1 - \hat{r}(\xx)) - \log(1- r^*(\xx)))  \notag \\
        & = \frac{1}{2} \E_{x \sim \ptr} \log\frac{\hat{r}(\xx)}{r^*(\xx)} + \frac{1}{2} \E_{x\sim \pte} \log \frac{1-\hat{r}(\xx)}{1-r^*(\xx)}. \label{eq:5.2.1}
    \end{align}
    Now $\hat{r}(\xx) = \Pr[y'=-1 | \xx] = \Pr[y'=-1, \xx] / Pr[\xx]$, and similarly $r(\xx) = \Pr[y'=-1, \xx] / \Pr[\xx]$. Hence \begin{equation*}
\frac{\hat{r}(\xx)}{r^*(\xx)} = \frac{\Pr[y'=-1, \xx]}{\Pr[y=-1, \xx]} = \frac{\Pr[y'=-1]}{\Pr[y=-1]}\frac{\ptr(\xx)}{\widehat{\ptr}(\xx)} = 2 \Pr[y'=-1] \frac{\ptr(\xx)}{\widehat{\ptr}(\xx)},
    \end{equation*}
    and analogously
    \begin{equation*}
        \frac{1- \hat{r}(\xx)}{1-r^*(\xx)} = 2 \Pr[y'=1] \frac{\pte(\xx)}{\widehat{\pte}(\xx)}.
    \end{equation*}

    Plugging this back to~\eqref{eq:5.2.1}, we get
    \begin{align*}
    \mathcal{L}_{\hat{r}} - \mathcal{L}_{r^*} & = \frac{1}{2} \E_{\xx \sim \ptr} \log \frac{\ptr(\xx)}{\widehat{\ptr}(\xx)} + \E_{\xx \sim \pte} \log \frac{\pte(\xx)}{\widehat{\pte}(\xx)} + \frac{1}{2} \log \Pr[y'=-1] + \frac{1}{2} \log \Pr[y'=-1] +1 \\
    & =
    \frac{1}{2} D_{KL}(\ptr || \widehat{\ptr}) + D_{KL}(\pte || \widehat{\pte}) + D_{KL}(y || y').
    \end{align*}
\end{proof}

As we discussed above, in order to show this theorem, we will bound the Rademacher complexity of the class $\{ \ell_{\boldsymbol{\theta}}: \|\boldsymbol{\theta}\| \leq B_2, s \leq B_2\}$, where
\begin{equation}
    \ell_{\boldsymbol{\theta},s}(\xx, y) = -\log(1 + \exp(-y(\langle \boldsymbol{\theta}, \xx\rangle + s))).
\end{equation}
We show the following lemma using methods that are now standard, but many of the proofs of similar statements in the literature use stronger assumptions on the distribution of $\xx$ (for example, assuming that almost surely $\|\xx\| \leq C \sqrt{\|\boldsymbol{\Sigma}\|_{op}}$), in order to provide better concentration results. We include a full proof of the following lemma (which needs only second moment assumptions on $\xx$) in~\Cref{sec:rademacher-complexity-log-reg}.
\begin{lemma}
    \label{lem:rademacher-complexity-log-reg}
    Let $\cD$ be a distribution over pairs $(\xx, y)$ with $\xx \in U, y \in \{\pm 1\}$, and let $\boldsymbol{\Sigma} = \E \xx \xx^T$ be a covariance matrix of the marginal $\xx$. Consider a sample $S = ((\xx_1, y_1), \ldots (\xx_n, y_n))$ drawn i.i.d. from $\cD$.  Then
    \begin{equation*}
        \E_{S \sim \cD^n} \sup_{\|\boldsymbol{\theta}\| \leq B, s \leq B} |\widehat{\cL}_{\boldsymbol{\theta}, s}(S) - \cL_{\boldsymbol{\theta}, s}(\cD)|\leq \frac{B \sqrt{\Tr\boldsymbol{\Sigma}}}{\sqrt{n}}.
    \end{equation*}
    where $\cL_{\boldsymbol{\theta}, s}(\cD) := \E_{(\xx, y)\sim \cD} \ell_{\boldsymbol{\theta}, s}$ and $\widehat{\cL}_{\boldsymbol{\theta}, s}(S) := \frac{1}{n} \sum_i \ell_{\boldsymbol{\theta}, s}(\xx_i, y_i)$.
\end{lemma}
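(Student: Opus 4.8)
The plan is to route the statement through the two Rademacher-complexity tools already set up in the preliminaries. First I would apply the generalization bound of Theorem~\ref{thm:rademacher-generalization-bound} to the loss class $\mathcal{F} := \{(\xx, y) \mapsto \ell_{\boldsymbol{\theta}, s}(\xx, y) : \|\boldsymbol{\theta}\| \leq B, |s| \leq B\}$, viewed as functions on the augmented domain $U \times \{\pm 1\}$ equipped with the distribution $\cD$. This reduces the left-hand side to (at most twice) $\mathcal{R}_{n, \cD}(\mathcal{F})$, so it suffices to show $\mathcal{R}_{n, \cD}(\mathcal{F}) \lesssim B \sqrt{\Tr \boldsymbol{\Sigma}}/\sqrt{n}$.

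Next I would peel off the nonlinearity. Writing the softplus $\phi(u) := \log(1 + e^u)$, we have $\ell_{\boldsymbol{\theta}, s}(\xx, y) = -\phi(g_{\boldsymbol{\theta}, s}(\xx, y))$ with $g_{\boldsymbol{\theta}, s}(\xx, y) := -y(\langle \boldsymbol{\theta}, \xx\rangle + s)$ affine in the parameters. Since $\phi' = \sigma \in (0,1)$, the map $-\phi$ is $1$-Lipschitz, so the contraction principle of Theorem~\ref{thm:rademacher-lipschitz-composition} gives $\mathcal{R}_{n, \cD}(\mathcal{F}) \leq \mathcal{R}_{n, \cD}(\mathcal{G})$, where $\mathcal{G} := \{ g_{\boldsymbol{\theta}, s} : \|\boldsymbol{\theta}\| \leq B, |s| \leq B\}$.

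It then remains to bound $\mathcal{R}_{n, \cD}(\mathcal{G})$ directly. For a fixed sample the Rademacher average is $\frac{1}{n} \E_\sigma \sup_{\boldsymbol{\theta}, s} \sum_i \sigma_i (-y_i\langle\boldsymbol{\theta}, \xx_i\rangle - y_i s)$; because each $y_i \in \{\pm 1\}$ and $\sigma_i$ is symmetric, $(\sigma_i y_i)_i$ is again a Rademacher vector, so I may replace $\sigma_i y_i$ by $\sigma_i$. Splitting the supremum over $\boldsymbol{\theta}$ and over $s$ produces a linear term $B\|\sum_i \sigma_i \xx_i\|$ and a bias term $B|\sum_i \sigma_i|$. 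For the linear term, Jensen's inequality and independence of the $\sigma_i$ give $\E_\sigma \|\sum_i \sigma_i \xx_i\| \leq (\sum_i \|\xx_i\|^2)^{1/2}$, and taking the sample expectation with one further application of Jensen yields $\E (\sum_i \|\xx_i\|^2)^{1/2} \leq (n\, \E\|\xx\|^2)^{1/2} = \sqrt{n \Tr \boldsymbol{\Sigma}}$, contributing exactly $B\sqrt{\Tr \boldsymbol{\Sigma}}/\sqrt{n}$. The bias term is handled by folding $s$ into $\boldsymbol{\theta}$ via the augmented covariate $(\xx, 1)$ (equivalently, noting $\E|\sum_i \sigma_i| \leq \sqrt{n}$ so that it contributes a lower-order $B/\sqrt{n}$).

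The one genuinely delicate point is that we assume \emph{only} a second-moment bound $\Tr \boldsymbol{\Sigma} = \E\|\xx\|^2 < \infty$, with no almost-sure boundedness or sub-Gaussianity of $\xx$. Many standard proofs of such a Rademacher bound obtain concentration by assuming $\|\xx\|$ is bounded a.s.; the point here is to avoid that by working entirely in expectation, so that the whole argument rests on the two Jensen steps above, neither of which needs anything beyond the second moment. The only other place requiring care is matching the clean constant in front of $\sqrt{\Tr \boldsymbol{\Sigma}}$ rather than $\sqrt{\Tr \boldsymbol{\Sigma} + 1}$, which is exactly where the bias $s$ must be absorbed rather than bounded separately.
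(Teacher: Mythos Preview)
Your argument is correct and structurally the same as the paper's: invoke Theorem~\ref{thm:rademacher-generalization-bound}, peel off the $1$-Lipschitz softplus via Theorem~\ref{thm:rademacher-lipschitz-composition}, and bound the Rademacher complexity of the resulting linear class by Cauchy--Schwarz and Jensen to get $B\sqrt{\Tr\boldsymbol{\Sigma}}/\sqrt{n}$. The one place you diverge is in disposing of the label $y$: the paper proves a standalone Lemma~\ref{lem:rademacher-y} (via Talagrand's contraction principle) that lifts a Rademacher bound on functions of $\xx$ alone to functions of $(\xx,y)$ for any loss decomposing as $y f_1(\xx)+(1-y)f_0(\xx)$, whereas you exploit the logistic-specific fact that $y\in\{\pm 1\}$ enters only as a multiplicative sign, so that $(\sigma_i y_i)_i$ is again a Rademacher sequence and the $y$-dependence evaporates for free. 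Your route is shorter and avoids the auxiliary lemma; the paper's is more portable to losses without this sign structure. Both proofs in fact deliver only $\lesssim B\sqrt{\Tr\boldsymbol{\Sigma}}/\sqrt{n}$ once the bias $s$ and the factor $2$ from symmetrization are accounted for, so your worry about hitting the exact constant is moot --- the paper itself states the final corollary with $\lesssim$ and uses only the order of magnitude downstream.
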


Finally, since for a given sample $S$, a function $(\boldsymbol{\theta}, s) \mapsto \widehat{\cL}_{\boldsymbol{\theta}, s}(S)$ is convex, we can efficiently find an approximate minimum of this function. That is,
\begin{fact}
\label{fact:log-reg-optimization}
    Given a sample $S = (\xx_1, y_1), \ldots (\xx_n, y_n)$ and $\varepsilon'$ we can find in polynomial time $\hat{\boldsymbol{\theta}}, \hat{s}$ such that
    \begin{equation*}
        \widehat{\cL}_{\hat{\boldsymbol{\theta}}, \hat{s}}(S) \leq \min_{\boldsymbol{\theta}, s} \widehat{\cL}_{\boldsymbol{\theta}, S} + \varepsilon'.
    \end{equation*}
\end{fact}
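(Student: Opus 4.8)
The plan is to treat this purely as a convex optimization statement: first verify that the empirical objective is convex (as already asserted in the text), then bound its Lipschitz and smoothness constants on the given sample, and finally invoke a standard first-order method over a bounded feasible set, counting iterations and per-iteration cost to extract the polynomial-time guarantee.

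First I would rewrite the objective in a form that exposes its structure. Stacking the parameters as $\ww = (\boldsymbol{\theta}, s) \in \bR^{d+1}$ and the augmented features as $\xx_i' = (\xx_i, 1)$, so that $\langle \boldsymbol{\theta}, \xx_i\rangle + s = \langle \ww, \xx_i'\rangle$, the empirical negative log-likelihood of the logistic model $\Pr[y \mid \xx] = (1 + \exp(-y(\langle \boldsymbol{\theta}, \xx\rangle + s)))^{-1}$ becomes
\begin{equation*}
    \widehat{\cL}_{\ww}(S) = \frac{1}{n} \sum_{i} \phi\!\left(-y_i \langle \ww, \xx_i'\rangle\right), \qquad \phi(u) := \log(1 + e^u).
\end{equation*}
The softplus $\phi$ is convex, $1$-Lipschitz, and $\tfrac14$-smooth, since $\phi' = \sigma \in (0,1)$ and $\phi'' = \sigma(1-\sigma) \le \tfrac14$. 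Each summand is $\phi$ composed with an affine map of $\ww$, hence convex, and an average of convex functions is convex, which recovers the convexity claim. The chain rule then gives, for each term, a gradient $-y_i \sigma(\cdot)\,\xx_i'$ of norm at most $\|\xx_i'\|_2$ and a Hessian of operator norm at most $\tfrac14 \|\xx_i'\|_2^2$. Hence $\widehat{\cL}$ is $L$-Lipschitz and $\beta$-smooth with $L \le \max_i \|\xx_i'\|_2$ and $\beta \le \tfrac14 \max_i \|\xx_i'\|_2^2$, both finite and directly computable from the sample.

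Second I would fix the feasible region. Because the only parameters that matter downstream are those with $\|\boldsymbol{\theta}\| \le B$ and $|s| \le B$ (the regime of Lemma~\ref{lem:rademacher-complexity-log-reg}), I would minimize over the compact convex set $\mathcal{K} := \{(\boldsymbol{\theta}, s) : \|\boldsymbol{\theta}\|_2 \le B,\ |s| \le B\}$, whose diameter is $R \le 2\sqrt{2}\,B$. This also sidesteps the one genuine subtlety, namely that on a linearly separable sample the unconstrained infimum of $\widehat{\cL}$ is not attained (the minimizing sequence escapes to infinity); restricting to $\mathcal{K}$ makes the minimum well-defined and is exactly what the application requires. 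Running projected subgradient descent on the convex $L$-Lipschitz function $\widehat{\cL}$ over $\mathcal{K}$ and returning the averaged iterate yields, after $T = O(L^2 R^2 / \varepsilon'^2)$ steps, a point $\hat{\ww} = (\hat{\boldsymbol{\theta}}, \hat{s})$ with $\widehat{\cL}_{\hat{\ww}}(S) \le \min_{\ww \in \mathcal{K}} \widehat{\cL}_{\ww}(S) + \varepsilon'$; each step costs $O(nd)$ to assemble the gradient as a sum over the $n$ samples, plus $O(d)$ for the Euclidean projection onto $\mathcal{K}$. Exploiting $\beta$-smoothness one could instead use accelerated gradient descent and reduce the iteration count to $O(\sqrt{\beta}\,R/\sqrt{\varepsilon'})$, but the Lipschitz bound already suffices. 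The total running time is therefore $\mathrm{poly}(n, d, B, L, 1/\varepsilon')$, which is polynomial in the input size, completing the argument.

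There is no deep obstacle here~--- the statement is essentially a packaging of textbook convex optimization. The only points requiring care are making ``polynomial time'' precise by exhibiting finite, sample-dependent bounds on the Lipschitz and smoothness constants and on the domain diameter, and interpreting the minimum as being taken over the bounded feasible set $\mathcal{K}$ so as to avoid the non-attainment issue on separable data.
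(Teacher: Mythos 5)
Your proposal is correct and takes the same route the paper (implicitly) does: the paper states this as an unproved Fact, justified only by the observation that $(\boldsymbol{\theta}, s) \mapsto \widehat{\cL}_{\boldsymbol{\theta}, s}(S)$ is convex, and your writeup is the standard instantiation of that claim via bounded Lipschitz/smoothness constants and projected subgradient descent. Your restriction of the minimum to the compact set $\|\boldsymbol{\theta}\|\leq B$, $|s|\leq B$ is a sensible sharpening rather than a deviation, since the downstream use in Theorem~\ref{thm:logistic-regression-result} only ever compares against $(\boldsymbol{\theta}^*, s^*)$, which lies in that set, and it correctly sidesteps the non-attainment issue on separable samples that the paper's literal statement glosses over.
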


We are now ready to prove~\Cref{thm:logistic-regression-result}.

\begin{proof}[Proof of~\Cref{thm:logistic-regression-result}]
Take $\varepsilon'$, depending on $\varepsilon, B_1, B_2, B_3$, which we will fix later. Using~\Cref{fact:log-reg-optimization}, we can find $\hat{\boldsymbol{\theta}}, \hat{s}$ s.t. $\widehat{\cL}_{\hat{\boldsymbol{\theta}}, \hat{s}}(S) \leq \varepsilon' + \min_{\boldsymbol{\theta}, s} \widehat{\cL}_{\boldsymbol{\theta}, s}(S)$. Using Markov inequality for $n \gtrsim {\varepsilon'}^{-2} B_2 \Tr (\str + \ste)$, \Cref{lem:rademacher-complexity-log-reg} implies that with probability $9/10$ simultaneously for all $\boldsymbol{\theta},s$ we have $\cL_{\boldsymbol{\theta}, s}(\cD) = \widehat{\cL}_{\boldsymbol{\theta}, s}(S) \pm \varepsilon'$.
Combining those inequalities, we get
\begin{equation*}
    \cL_{\hat{\boldsymbol{\theta}}, \hat{s}}(\cD) \leq \varepsilon' 
    + \widehat{\cL}_{\hat{\boldsymbol{\theta}}, \hat{s}}(S) \leq 2\varepsilon' 
    + \widehat{\cL}_{\boldsymbol{\theta}^*, s^*}(S) \leq 3 \varepsilon' + \cL_{\boldsymbol{\theta}^*, s^*}(\cD).
\end{equation*}
This implies bound on population regret
\begin{equation*}
     \cL_{\hat{\boldsymbol{\theta}}, \hat{s}}(\cD) - \cL_{\boldsymbol{\theta}^*, s^*}(\cD) \leq 3 \varepsilon',
\end{equation*}
and by~\Cref{lem:population-regret-is-kl}, this implies
\begin{equation*}
    \max(D_{KL}(\pte || \widehat{\pte}), D_{KL}(\ptr || \widehat{\ptr})) \lesssim \varepsilon'.
\end{equation*}
By Pinsker inequality~(\Cref{thm:pinsker}), we can deduce $d_{TV}(\pte, \widehat{\pte}) \lesssim \sqrt{\varepsilon'}$ and similarly $d_{TV}(\pte, \widehat{\ptr}) \lesssim\sqrt{\varepsilon'}$. If $R_2(\ptr || \pte) \leq B_3$, then $\E_{\xx \sim \pte} \pte(\xx)/\ptr(\xx) = \E_{\xx\sim\ptr} (\pte(\xx)/\ptr(\xx))^2 \leq B_3$ and by Markov inequality $\Pr_{\xx \sim \pte}(\pte(\xx) / \ptr(\xx) > B_3 / \varepsilon) \leq \varepsilon$. We can now apply~\Cref{thm:shift-TV}, with $B = B_3 / \varepsilon$ to deduce the desired statement. To this end, we need to choose $\varepsilon' \approx (\varepsilon/B)^2 \lesssim \varepsilon^{4} / B_3^2$, such that the assumption of~\Cref{thm:shift-TV} are satisfied.

This choice of $\varepsilon'$, yields sample complexity used in the logistic regression phase of the argument as $n \approx B_2^2 \Tr (\Sigma_{tr} + \Sigma_{te}) (\varepsilon')^{-2} = B_1 B_2^2 B_3^4 \varepsilon^{-8}$.

We need additional $O(B^2/\varepsilon^2) = O(B_3^2 \varepsilon^3)$ samples to apply estimation in Algorithm~\ref{alg:CV}, which is negligible. The total sample complexity is then $O(B_1 B_2^2 B_3^4 \varepsilon^{-6})$ as desired.
\end{proof}

\subsection{Improved Bounds for Gaussians}\label{sec:LogisticGauss}

\begin{theorem}[\cite{Bblog,OB18}]
    \label{thm:logistic-regression-parameter-closeness}
    Let $(\xx, y)$ follow the logistic model with parameter $\ttheta^*$, s.t. $\|\ttheta^*\| \leq O(1)$. Then using logistic regression by running an empirical minimization of the negative log-likelihood in parameter space $\|\ttheta\| \leq C$, after observing $n \gtrsim d \log d$ samples, the estimated parameter $\hat{\ttheta}$ satisfies \begin{equation*}
        \|\hat{\ttheta} - \ttheta^*\|^2 \leq \frac{d}{n}.
    \end{equation*}
\end{theorem}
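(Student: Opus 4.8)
The plan is to prove parameter recovery via a localized \emph{basic inequality} argument, which is standard for $M$-estimators but requires care to obtain the fast $d/n$ rate rather than the slow $\sqrt{d/n}$ rate. Write $\ell_{\ttheta}(\xx, y) = \log(1 + \exp(-y\langle \ttheta, \xx\rangle))$ for the per-sample negative log-likelihood, $\mathcal{L}(\ttheta) := \E \ell_{\ttheta}$ for the population loss, and $\widehat{\mathcal{L}}(\ttheta) := \frac{1}{n}\sum_i \ell_{\ttheta}(\xx_i, y_i)$ for the empirical loss; both are convex in $\ttheta$. Since the log-loss is a proper loss, for each fixed $\xx$ the conditional expected loss is minimized when $\langle\ttheta,\xx\rangle = \langle\ttheta^*,\xx\rangle$, so $\ttheta^*$ is the population minimizer and $\nabla\mathcal{L}(\ttheta^*) = 0$. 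The estimator $\hat{\ttheta}$ minimizes $\widehat{\mathcal{L}}$ over the feasible ball $\|\ttheta\| \leq C$, and $\ttheta^*$ is feasible, so $\widehat{\mathcal{L}}(\hat{\ttheta}) \leq \widehat{\mathcal{L}}(\ttheta^*)$.

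The first ingredient is local strong convexity of the population loss. A direct computation gives $\nabla^2 \mathcal{L}(\ttheta) = \E[\sigma(\langle\ttheta,\xx\rangle)(1 - \sigma(\langle\ttheta,\xx\rangle))\, \xx\xx^\top]$, where $\sigma$ is the sigmoid. Because $\|\ttheta\| \leq C$ and $\xx$ is sub-Gaussian, $\langle\ttheta,\xx\rangle$ is bounded on a high-probability event, on which the curvature factor $\sigma(1-\sigma)$ is bounded below by a constant; combined with a lower bound on $\boldsymbol{\Sigma} = \E\xx\xx^\top$, this yields $\nabla^2\mathcal{L}(\ttheta) \succeq \mu\,\II$ over the feasible ball for some $\mu = \Omega(1)$, and hence $\mathcal{L}(\hat{\ttheta}) - \mathcal{L}(\ttheta^*) \geq \frac{\mu}{2}\|\hat{\ttheta} - \ttheta^*\|^2$. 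The second ingredient is concentration of the empirical gradient at the truth: $\nabla\widehat{\mathcal{L}}(\ttheta^*) = \frac{1}{n}\sum_i \nabla\ell_{\ttheta^*}(\xx_i, y_i)$ is an average of independent mean-zero vectors (mean zero because $\nabla\mathcal{L}(\ttheta^*) = 0$), each of norm at most $\|\xx_i\|$, so $\E\|\nabla\widehat{\mathcal{L}}(\ttheta^*)\|^2 \leq \frac{1}{n}\E\|\xx\|^2 = \frac{\Tr \boldsymbol{\Sigma}}{n} = O(d/n)$, whence $\|\nabla\widehat{\mathcal{L}}(\ttheta^*)\| = O(\sqrt{d/n})$ with high probability.

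To combine these I would run the standard localization. Set $g := \mathcal{L} - \widehat{\mathcal{L}}$; empirical optimality gives $\mathcal{L}(\hat{\ttheta}) - \mathcal{L}(\ttheta^*) \leq g(\hat{\ttheta}) - g(\ttheta^*)$. Taylor-expanding $g$ around $\ttheta^*$ and using $\nabla g(\ttheta^*) = -\nabla\widehat{\mathcal{L}}(\ttheta^*)$,
\begin{equation*}
g(\hat{\ttheta}) - g(\ttheta^*) \leq \|\nabla\widehat{\mathcal{L}}(\ttheta^*)\|\,\|\hat{\ttheta} - \ttheta^*\| + \tfrac{1}{2}(\hat{\ttheta} - \ttheta^*)^\top \nabla^2 g(\tilde{\ttheta})(\hat{\ttheta} - \ttheta^*),
\end{equation*}
for some $\tilde{\ttheta}$ on the segment. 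The quadratic remainder is the uniform deviation of the Hessian, which a matrix-Bernstein / covering argument controls: for sub-Gaussian $\xx$ and $n \gtrsim d\log d$, the empirical Hessian is within $\frac{\mu}{2}$ of the population Hessian in operator norm uniformly over the ball, so this term is at most $\frac{\mu}{4}\|\hat{\ttheta} - \ttheta^*\|^2$ and is absorbed. Chaining the strong-convexity lower bound with this upper bound gives $\frac{\mu}{4}\|\hat{\ttheta}-\ttheta^*\|^2 \leq \|\nabla\widehat{\mathcal{L}}(\ttheta^*)\|\,\|\hat{\ttheta}-\ttheta^*\|$, hence $\|\hat{\ttheta} - \ttheta^*\| \lesssim \frac{1}{\mu}\sqrt{d/n}$, and, absorbing $\mu = \Omega(1)$ into constants, the claimed $\|\hat{\ttheta} - \ttheta^*\|^2 \leq d/n$.

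The main obstacle is the curvature / restricted-strong-convexity step. The logistic curvature $\sigma(z)(1-\sigma(z))$ decays exponentially in $|z| = |\langle\ttheta,\xx\rangle|$, so the lower bound on $\nabla^2\mathcal{L}$ — and the uniform empirical Hessian concentration needed to absorb the second-order term — genuinely relies on $\xx$ being sub-Gaussian, to keep $\langle\ttheta,\xx\rangle$ bounded on a dominant event and render the tail contribution negligible. This is exactly the distributional assumption flagged in the surrounding text, and it is also the source of the $n \gtrsim d\log d$ requirement, the $\log d$ arising from the covering net in the matrix-concentration argument.
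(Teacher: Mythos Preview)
The paper does not actually prove this theorem: the statement carries the attribution \texttt{[\textbackslash cite\{Bblog,OB18\}]} and is quoted as an external result, with no proof in the body or the appendix. There is therefore nothing in the paper to compare your argument against.

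That said, your sketch is the standard $M$-estimator localization route that the cited references (Ostrovskii--Bach and Bach's blog) themselves follow: population strong convexity over the constraint ball, concentration of the empirical score $\nabla\widehat{\mathcal{L}}(\ttheta^*)$ at rate $\sqrt{d/n}$, and uniform Hessian concentration to absorb the second-order remainder, with the $n \gtrsim d\log d$ threshold entering through the matrix-concentration step. The structure is correct and would reconstruct the cited bound. The one place to be careful if you flesh it out is the uniform Hessian deviation: since $\xx\xx^\top$ is unbounded for sub-Gaussian $\xx$, plain matrix Bernstein does not apply directly, and one must either truncate or use a sub-exponential matrix inequality; the cited works handle this via self-concordance-type arguments rather than a raw covering, which also sharpens the constants.
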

In our case, the covariate $\xx$ follows a mixture of two Gaussian distributions. We will show that such a random variable is indeed subgaussian, and therefore we can apply~\Cref{thm:logistic-regression-parameter-closeness}
\begin{fact}
\label{fact:mixture-is-subgaussian}
    Let $\xx$ be distributed according to the mixture of two Gaussian distributions $\xx \sim \frac{1}{2} \cN(\boldsymbol{\mu}_1, \II) + \frac{1}{2} \cN(\boldsymbol{\mu}_2, \II)$, where $\|\boldsymbol{\mu}_1\|, \|\boldsymbol{\mu}_2\| \leq O(1)$. Then $\xx$ is $O(1)$-subgaussian.
\end{fact}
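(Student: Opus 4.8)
The plan is to verify subgaussianity directly through the moment generating function, exploiting the fact that each mixture component is itself a well-behaved Gaussian and that the two component means are displaced from the mixture mean by a single bounded vector. Recall that it suffices to show that for every $\boldsymbol{\lambda} \in \bR^d$ the centered log-MGF of the linear form $\langle \boldsymbol{\lambda}, \xx\rangle$ is bounded by $O(\|\boldsymbol{\lambda}\|^2)$; equivalently, that $\langle v, \xx\rangle$ is $O(1)$-subgaussian for every unit vector $v$.

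First I would compute the MGF of the mixture explicitly. Since $\xx$ is drawn from $\cN(\boldsymbol{\mu}_1, \II)$ or $\cN(\boldsymbol{\mu}_2, \II)$ with equal probability, and the MGF of $\cN(\boldsymbol{\mu}, \II)$ at $\boldsymbol{\lambda}$ is $\exp(\langle \boldsymbol{\lambda}, \boldsymbol{\mu}\rangle + \frac{1}{2}\|\boldsymbol{\lambda}\|^2)$, linearity of expectation gives
\begin{equation*}
\E \exp(\langle \boldsymbol{\lambda}, \xx\rangle) = \exp\!\Big(\frac{1}{2}\|\boldsymbol{\lambda}\|^2\Big) \cdot \frac{1}{2}\big(\exp(\langle \boldsymbol{\lambda}, \boldsymbol{\mu}_1\rangle) + \exp(\langle \boldsymbol{\lambda}, \boldsymbol{\mu}_2\rangle)\big).
\end{equation*}
Writing $\bar{\boldsymbol{\mu}} := \frac{1}{2}(\boldsymbol{\mu}_1 + \boldsymbol{\mu}_2) = \E \xx$ and $a := \frac{1}{2}\langle \boldsymbol{\lambda}, \boldsymbol{\mu}_1 - \boldsymbol{\mu}_2\rangle$, I would factor out $\exp(\langle \boldsymbol{\lambda}, \bar{\boldsymbol{\mu}}\rangle)$ and observe, using $\boldsymbol{\mu}_1 - \bar{\boldsymbol{\mu}} = -(\boldsymbol{\mu}_2 - \bar{\boldsymbol{\mu}})$, that the bracketed average collapses to a hyperbolic cosine, so that the centered MGF is
\begin{equation*}
\E \exp(\langle \boldsymbol{\lambda}, \xx - \bar{\boldsymbol{\mu}}\rangle) = \exp\!\Big(\frac{1}{2}\|\boldsymbol{\lambda}\|^2\Big) \cosh(a).
\end{equation*}

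The remaining work is to control $\cosh(a)$. Here I would invoke the elementary inequality $\cosh(a) \leq \exp(a^2/2)$, together with the bound $|a| \leq \frac{1}{2}\|\boldsymbol{\lambda}\|\,\|\boldsymbol{\mu}_1 - \boldsymbol{\mu}_2\| \leq O(\|\boldsymbol{\lambda}\|)$ obtained from Cauchy--Schwarz and the assumption $\|\boldsymbol{\mu}_1\|, \|\boldsymbol{\mu}_2\| \leq O(1)$. This yields $\cosh(a) \leq \exp(O(\|\boldsymbol{\lambda}\|^2))$, and hence $\E \exp(\langle \boldsymbol{\lambda}, \xx - \bar{\boldsymbol{\mu}}\rangle) \leq \exp(O(1) \cdot \|\boldsymbol{\lambda}\|^2/2)$, which is exactly the statement that $\xx$ is $O(1)$-subgaussian.

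There is no serious obstacle here; the computation is short and self-contained. The only point requiring care is that the subgaussian constant must come out dimension-free: this holds precisely because the displacement $\boldsymbol{\mu}_1 - \boldsymbol{\mu}_2$ enters only through the single scalar $a$, so that the $O(1)$ bound on the \emph{norms} of the means, rather than any per-coordinate control, suffices to make the estimate uniform in $d$.
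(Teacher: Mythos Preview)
Your proof is correct and follows essentially the same route as the paper: both compute the centered MGF of the mixture, factor out the common Gaussian piece $\exp(\tfrac{1}{2}\|\boldsymbol{\lambda}\|^2)$, and are left with $\tfrac{1}{2}(e^{a}+e^{-a})=\cosh(a)$ where $a=\tfrac{1}{2}\langle\boldsymbol{\lambda},\boldsymbol{\mu}_1-\boldsymbol{\mu}_2\rangle$. The only cosmetic difference is that the paper reduces to a one-dimensional projection $Z=\langle v,\xx\rangle$ first and phrases the bound $\cosh(a)\leq\exp(a^2/2)$ as ``Rademacher random variables are $O(1)$-subgaussian,'' whereas you invoke the same inequality directly; the arguments are otherwise identical.
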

\begin{proof}
Taking $Z = \langle \xx, v\rangle$ for a unit vector $v$, we reduce to one-dimensional case: $Z$ is a mixture of two univariate gaussians.

Then
\begin{align*}
    \E \exp(\lambda(Z - \E Z)) & = \exp(-\lambda(\mu_1 + \mu_2)/2) \frac{\E\exp(\lambda Z_1) + \E\exp(\lambda Z_2)}{2} \\
    & = \exp(-\lambda(\mu_1 + \mu_2)/2) \frac{\exp(\lambda \mu_1 + O(\lambda^2)) + \exp(\lambda \mu_2 + O(\lambda^2)}{2} \\
    & = \exp(O(\lambda^2)) \frac{\exp(\lambda \Delta) + \exp(-\lambda \Delta)}{2},
\end{align*}
where $\Delta = (\mu_1 - \mu_2)/2$. If $\sigma$ is a Rademacher random variable independent of $Z$, then 
\begin{equation*}
    \frac{\exp(\lambda \Delta) + \exp(-\lambda \Delta)}{2} = \E \exp(\lambda \Delta \sigma) = \exp(O(\lambda^2 \Delta^2),
\end{equation*}
using a well-known fact that Rademacher random variables are $O(1)$-subgaussian. This implies
\begin{equation*}
    \E \exp(\lambda(Z - \E Z)) \leq \exp(O(\lambda^2)) \exp(O(\lambda^2 \Delta^2) = \exp(O(\lambda^2)),
\end{equation*}
when $\Delta = O(1)$.
\end{proof}

We can now show that a bound on covariate shift, using $\hat{\ww}(x) := \exp(\langle \hat{\ttheta}, \xx \rangle)$ as an estimate for the true density ratio $\pte(\xx)/\ptr(\xx) = \exp(\langle \ttheta^*, \xx \rangle)$.

\begin{lemma}
    Taking $\hat{\ww}(\xx)$ as above, if $\|\hat{\ttheta} - \ttheta^*\| \leq \varepsilon$, then for any bounded function $\ff$, we have
    \begin{equation*}
        | \E_{\xx\sim \ptr} \hat{\ww}(\xx) \ff(\xx) - \E_{\xx\sim \pte} \ff(\xx)| \lesssim \varepsilon.
    \end{equation*}
\end{lemma}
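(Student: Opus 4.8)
The plan is to reduce the claim to a one-dimensional Gaussian moment estimate by changing measure from $\ptr$ to $\pte$. First I would use the importance-weighting identity: since $w^*(\xx) := \pte(\xx)/\ptr(\xx) = \exp(\langle \ttheta^*, \xx\rangle)$ is the true density ratio, we have $\E_{\xx\sim\pte}\ff(\xx) = \E_{\xx\sim\ptr} w^*(\xx)\ff(\xx)$. Hence the quantity to bound equals $|\E_{\xx\sim\ptr}(\hat{\ww}(\xx) - w^*(\xx))\ff(\xx)|$, and since $\sup_{\xx}|\ff(\xx)|\leq 1$ this is at most $\E_{\xx\sim\ptr}|\hat{\ww}(\xx) - w^*(\xx)|$.

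Next I would factor out $w^*$. Writing $\Delta := \hat{\ttheta} - \ttheta^*$, so that $\|\Delta\|\leq\varepsilon$, we have $\hat{\ww}(\xx) - w^*(\xx) = w^*(\xx)\,(\exp(\langle\Delta,\xx\rangle) - 1)$. The crucial step is then to change measure back: because $w^* = \pte/\ptr$,
\[
\E_{\xx\sim\ptr}\, w^*(\xx)\,|\exp(\langle\Delta,\xx\rangle) - 1| = \E_{\xx\sim\pte}\,|\exp(\langle\Delta,\xx\rangle) - 1|.
\]
This is the key simplification: it cancels the awkward factor $\exp(\langle\ttheta^*,\xx\rangle)$, which is large for typical $\xx\sim\ptr$, and replaces the entire expression by the $L^1$ deviation of $\exp(\langle\Delta,\xx\rangle)$ from $1$ under the benign Gaussian $\pte$.

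Finally I would estimate this last expectation. Under $\pte=\cN(\mte,\II)$ the scalar $U := \langle\Delta,\xx\rangle$ is Gaussian with mean $\langle\Delta,\mte\rangle$ and variance $\|\Delta\|^2$; since $\|\Delta\|\leq\varepsilon$ and $\|\mte\|\leq O(1)$, both its mean and standard deviation are $O(\varepsilon)$. Using the elementary inequality $|e^U - 1|\leq |U|e^{|U|}$ together with Cauchy--Schwarz,
\[
\E_{\xx\sim\pte}|e^U - 1| \leq \left(\E U^2\right)^{1/2}\left(\E e^{2|U|}\right)^{1/2}.
\]
Here $\E U^2 = \|\Delta\|^2 + \langle\Delta,\mte\rangle^2 = O(\varepsilon^2)$, so the first factor is $O(\varepsilon)$; and bounding $e^{2|U|}\leq e^{2U}+e^{-2U}$ and invoking the Gaussian MGF $\E e^{\pm 2U} = \exp(\pm 2\langle\Delta,\mte\rangle + 2\|\Delta\|^2) = O(1)$ shows the second factor is $O(1)$. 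Multiplying yields the claimed bound $O(\varepsilon)$.

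I expect the only genuine obstacle to be recognizing that the change of measure in the second step is exactly what makes the estimate tractable; the rest is routine Gaussian moment control. The point there is simply that both the mean and the variance of $U$ are controlled~--- the variance by $\|\Delta\|\leq\varepsilon$ and the mean by $\|\mte\|=O(1)$~--- so neither the linear factor $U$ nor the exponential factor $e^{|U|}$ can blow up.
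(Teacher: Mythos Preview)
Your proposal is correct and follows essentially the same route as the paper: bound by $\E_{\xx\sim\ptr}|\hat{\ww}-w^*|$, factor out $w^*$, change measure to $\pte$, and reduce to controlling $\E|e^U-1|$ for a univariate Gaussian $U$ with mean and standard deviation both $O(\varepsilon)$. The only difference is cosmetic: the paper proves $\E|1-e^U|\lesssim\varepsilon$ by centering $U$ and expanding $\E(1-e^{\tilde U})^2$ via the MGF, whereas you use $|e^U-1|\leq |U|e^{|U|}$ together with Cauchy--Schwarz; both are equally elementary and yield the same bound.
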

\begin{proof}
    Taking $w(x) := p_1(x)/p_2(x)$, we can bound 
    \begin{align*}
        | \E_{x\sim p_2} \hat{w}(x) f(x) - \E_{x\sim p_1} f(x)| 
        & \leq \E_{x\sim p_2} |\hat{w}(x) - w(x)|  \\
        & = \E_{x\sim p_2} w(x) \left|1 - \frac{\hat{w}(x)}{w(x)}\right| \\
        & = \E_{x\sim p_1} \left| 1 - \frac{\hat{w}(x)}{w(x)}\right|.
    \end{align*}
    Now, just by definition $\hat{w}(x) / w(x) = \exp(\langle \hat{\boldsymbol{\theta}} - \boldsymbol{\theta}^*, x\rangle)$. Taking $Z := \langle \hat{\boldsymbol{\theta}} - \boldsymbol{\theta}^*, x \rangle$, since $x\sim p_1$ is a multivariate Gaussian with mean $\boldsymbol{\mu}_1$ and covariance $\II$, $Z$ is a univariate gaussian with mean $\E Z = \langle \boldsymbol{\theta}^* - \hat{\boldsymbol{\theta}}, \boldsymbol{\mu}_1\rangle \leq O(\|\boldsymbol{\theta}^* - \hat{\boldsymbol{\theta}}\|_2) \leq O(\varepsilon)$ and variance $\Var(Z) = \|\boldsymbol{\theta}^* - \hat{\boldsymbol{\theta}}\|_2^2 \leq \varepsilon^2$.

    The bound on $\E |1 - \exp(Z)| \leq O(\varepsilon)$ is a simple calculation proven after this proof.
\end{proof}
\begin{lemma}
\label{lem:abs_one_minus_mgf}
    Let $Z$ be a univariate gaussian with $| \E Z | \leq \varepsilon$ and $\Var(Z) \leq \varepsilon^2$. Then
    \begin{equation*}
        \E |1 - \exp(Z)| \lesssim \varepsilon.
    \end{equation*}
\end{lemma}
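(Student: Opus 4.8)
The plan is to reduce the whole statement to the elementary fact that $e^z$ is close to $1$ when $z$ is close to $0$, and then exploit the strong concentration of $Z$ around $0$ guaranteed by the hypotheses. Writing $Z \sim \mathcal{N}(\mu, \sigma^2)$, the assumptions $|\E Z| \leq \varepsilon$ and $\Var(Z) \leq \varepsilon^2$ say exactly that $|\mu| \leq \varepsilon$ and $\sigma \leq \varepsilon$. Throughout I would treat $\varepsilon$ as bounded by a constant (it is an accuracy parameter), so that the moment generating function of $Z$ evaluated at any fixed argument is $O(1)$.

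First I would establish the deterministic pointwise bound $|e^z - 1| \leq |z|\, e^{|z|}$, valid for every real $z$: for $z \geq 0$ this is $e^z - 1 = \int_0^z e^t\,dt \leq z e^z$, and for $z < 0$ it follows from $1 - e^z = \int_z^0 e^t\,dt \leq |z|$. Applying this pointwise with $z = Z$ and taking expectations immediately gives $\E|1 - e^Z| \leq \E[|Z|\, e^{|Z|}]$.

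Next I would split this product using Cauchy--Schwarz, $\E[|Z|\, e^{|Z|}] \leq \sqrt{\E[Z^2]}\cdot\sqrt{\E[e^{2|Z|}]}$, and control the two factors separately. The second moment is immediate: $\E[Z^2] = \Var(Z) + (\E Z)^2 \leq 2\varepsilon^2$, so the first factor is at most $\sqrt{2}\,\varepsilon$. For the second factor I would use $e^{2|Z|} \leq e^{2Z} + e^{-2Z}$ together with the Gaussian MGF $\E[e^{tZ}] = e^{t\mu + t^2\sigma^2/2}$, obtaining $\E[e^{2|Z|}] \leq e^{2\mu + 2\sigma^2} + e^{-2\mu + 2\sigma^2}$; since $|\mu| \leq \varepsilon$ and $\sigma^2 \leq \varepsilon^2$ are bounded, this is $O(1)$. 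Multiplying the two bounds then yields $\E|1 - e^Z| \leq O(\varepsilon)$, as claimed.

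The one subtlety I would flag explicitly is the implicit requirement that $\varepsilon$ be at most a constant: otherwise the MGF factor $\E[e^{2|Z|}]$ blows up and the linear-in-$\varepsilon$ conclusion cannot hold. In the intended application $\varepsilon$ is a small target accuracy, so this is harmless. Apart from that caveat the argument is entirely routine, with the pointwise inequality $|e^z - 1| \leq |z|\,e^{|z|}$ and a single application of Cauchy--Schwarz doing essentially all of the work.
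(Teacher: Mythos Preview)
Your proof is correct. The pointwise inequality $|e^{z}-1|\le |z|\,e^{|z|}$, the Cauchy--Schwarz split, and the Gaussian MGF bound are all valid, and the caveat that $\varepsilon$ must be bounded is the right one (the paper's proof relies on the same implicit assumption via the expansion $e^{y}\le 1+y+O(1)y^{2}$ for $y\le O(1)$).

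The route, however, is genuinely different from the paper's. The paper first centers $Z$, writing $\tilde{Z}=Z-\mu$ and $s=e^{\mu}-1$, splits $|1-e^{Z}|\le|1-e^{\tilde{Z}}|+|s|\,e^{\tilde{Z}}$, and then bounds the centered piece in $L_{2}$ by computing $\E(1-e^{\tilde{Z}})^{2}=1+e^{2\sigma^{2}}-2e^{\sigma^{2}/2}=O(\sigma^{2})$ directly from the MGF. Your argument avoids the centering step entirely: the inequality $|e^{z}-1|\le|z|\,e^{|z|}$ handles mean and variance simultaneously, and a single Cauchy--Schwarz finishes it. Your version is more elementary and a bit shorter; the paper's version has the minor advantage of producing an explicit $L_{2}$ bound on $1-e^{\tilde{Z}}$ along the way, though that is not used elsewhere.
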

\begin{proof}
    Let $\mu := \E Z$, $s := \exp(\mu) - 1$, $\tilde{Z} = Z - \E Z$ and $\sigma := \sqrt{\E \tilde{Z}^2}$. We have
    \begin{align*}
        \E | 1 - \exp(Z)| = \E |1 - \exp(\tilde{Z})( 1 + s)| \\
        & = \E |1 - \exp(\tilde{Z})| + s \E |\exp(\tilde{Z})| \\
        & \leq \E | 1 - \exp(\tilde{Z}|) + s \sqrt{\E \exp(2 \tilde{Z})} \\
        & \leq \E | 1 - \exp(\tilde{Z})| + O(\varepsilon).
    \end{align*}

    We can now bound
    \begin{align*}
        \E (1 - \exp(\tilde{Z}))^2 & = 1 + \exp(2 \tilde{Z}) - 2 \exp(\tilde{Z}) \\
        & = 1 + \exp(4 \sigma^2) - 2 \exp(\sigma^2) \\
        & = 1 + (1 + O(\sigma^2)) - 2 (1 + O(\sigma^2))\\
        & \leq O(\sigma^2).
    \end{align*}
    Which gives
    \begin{equation}
        \E | 1 - \exp(\tilde{Z})| \leq \| 1 - \exp(\tilde{Z})\|_2 \leq O(\sigma).
    \end{equation}
Hence $\E | 1 - \exp(Z)| \leq O(\sigma) + O(\varepsilon) = O(\varepsilon)$

\end{proof}

\begin{lemma}
    Let $f$ be a bounded function, and $Z := \hat{w}(\xx) \ff(\xx)$ for $\xx \sim \ptr$ and $\hat{w}$ as above. Then $\Var(Z) \leq O(1)$.
\end{lemma}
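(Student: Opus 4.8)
The plan is to bound the variance by the second moment and then reduce everything to a single Gaussian moment-generating-function evaluation. Since $\ff$ is bounded by $1$, I would first write
\[
\Var(Z) \leq \E_{\xx \sim \ptr}[Z^2] = \E_{\xx \sim \ptr}\left[\hat{w}(\xx)^2 \ff(\xx)^2\right] \leq \E_{\xx \sim \ptr}\left[\hat{w}(\xx)^2\right] = \E_{\xx \sim \ptr} \exp(2 \langle \hat{\ttheta}, \xx \rangle),
\]
using the definition $\hat{w}(\xx) = \exp(\langle \hat{\ttheta}, \xx\rangle)$. This reduces the whole claim to controlling one exponential moment of a linear form under $\ptr$.

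Next I would observe that for $\xx \sim \ptr = \cN(\mtr, \II)$ the scalar $\langle \hat{\ttheta}, \xx\rangle$ is a univariate Gaussian with mean $\langle \hat{\ttheta}, \mtr\rangle$ and variance $\|\hat{\ttheta}\|^2$, so its moment generating function evaluated at $t = 2$ gives
\[
\E_{\xx \sim \ptr} \exp(2 \langle \hat{\ttheta}, \xx \rangle) = \exp\left(2 \langle \hat{\ttheta}, \mtr\rangle + 2 \|\hat{\ttheta}\|^2\right).
\]
It then suffices to show the exponent is $O(1)$. To this end I would bound $\|\hat{\ttheta}\|$: since $\hat{\ttheta}$ is produced by logistic regression over the bounded parameter ball $\|\ttheta\| \leq C = O(1)$ (equivalently $\|\hat{\ttheta}\| \leq \|\ttheta^*\| + \|\hat{\ttheta} - \ttheta^*\| = O(1)$, using $\ttheta^* = \mte - \mtr$ with $\|\mte - \mtr\| \leq O(1)$ together with the parameter-closeness guarantee of~\Cref{thm:logistic-regression-parameter-closeness}), we have $\|\hat{\ttheta}\|^2 = O(1)$. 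By Cauchy--Schwarz, $\langle \hat{\ttheta}, \mtr\rangle \leq \|\hat{\ttheta}\|\,\|\mtr\| = O(1)$ since $\|\mtr\| = O(1)$. Plugging these back yields $\E[\hat{w}^2] \leq \exp(O(1)) = O(1)$ and hence $\Var(Z) \leq O(1)$.

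I do not expect a genuine obstacle here: the argument is essentially a one-line MGF computation. The only point needing care is ensuring $\hat{\ttheta}$ has bounded norm, which is guaranteed directly by the explicit norm constraint in the logistic regression program and the assumption $\|\mtr\| = O(1)$; any additive constant offset between $\hat{w}$ and the true ratio would only shift the mean of the Gaussian by $O(1)$ and so would not affect the conclusion.
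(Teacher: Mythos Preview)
Your proposal is correct and follows essentially the same approach as the paper: bound $\Var(Z)\le \E Z^2\le \E_{\xx\sim\ptr}\hat{w}(\xx)^2$ using $|\ff|\le 1$, then evaluate the Gaussian MGF $\E_{\xx\sim\ptr}\exp(2\langle\hat{\ttheta},\xx\rangle)=\exp(2\langle\hat{\ttheta},\mtr\rangle+2\|\hat{\ttheta}\|^2)$ and observe both terms in the exponent are $O(1)$. Your write-up is in fact slightly more careful than the paper's one-line proof, which contains a missing square and a harmless constant slip in the MGF exponent.
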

\begin{proof}
    Indeed,
    \begin{equation*}
        \Var(Z) \leq \E Z^2 \leq \E_{x\sim p_1} \hat{w}(x) = \E_{x\sim p_1} \exp(\langle 2 \hat{\theta}, x\rangle) = \exp(2 \hat{\theta} \mu_1 + 4 \|\hat{\theta}\|^2) = O(1).
    \end{equation*}
\end{proof}

\subsection{Proof of~\Cref{lem:rademacher-complexity-log-reg} \label{sec:rademacher-complexity-log-reg}}

\begin{lemma}
\label{lem:rademacher-linear}
     The Radeamcher complexity of the class $\mathcal{F} := \{ \xx \mapsto \langle \boldsymbol{\theta}, \xx \rangle : \|\boldsymbol{\theta}\| \leq B \}$ is at most
    $B \sqrt{\Tr \boldsymbol{\Sigma}}/\sqrt{n}$ where $\boldsymbol{\Sigma} := \E_{\xx \sim p} \xx \xx^T$ is the covariance matrix.
\end{lemma}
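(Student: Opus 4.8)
The plan is to unfold the definition of Rademacher complexity and exploit the linearity of the function class to reduce the supremum to a norm computation, after which two applications of Jensen's inequality finish the bound. Concretely, writing each $f \in \mathcal{F}$ as $f(\xx) = \langle \boldsymbol{\theta}, \xx\rangle$ with $\|\boldsymbol{\theta}\| \leq B$, the inner sum becomes $\sum_i \sigma_i \langle \boldsymbol{\theta}, \xx_i\rangle = \langle \boldsymbol{\theta}, \sum_i \sigma_i \xx_i\rangle$, so the supremum over the ball $\|\boldsymbol{\theta}\| \leq B$ is attained by aligning $\boldsymbol{\theta}$ with $\sum_i \sigma_i \xx_i$, giving exactly $B \|\sum_i \sigma_i \xx_i\|$. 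Thus
\begin{equation*}
    \mathcal{R}_{n, \cD}(\mathcal{F}) = \frac{B}{n} \E_{\xx_1, \ldots, \xx_n} \E_{\sigma} \left\| \sum_i \sigma_i \xx_i \right\|.
\end{equation*}

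The next step is to control $\E_{\sigma} \|\sum_i \sigma_i \xx_i\|$ for a fixed sample. Since the square root is concave, Jensen's inequality bounds this by $\sqrt{\E_{\sigma} \|\sum_i \sigma_i \xx_i\|^2}$. Expanding the squared norm as $\sum_{i,j} \sigma_i \sigma_j \langle \xx_i, \xx_j\rangle$ and using that the $\sigma_i$ are independent mean-zero $\pm 1$ variables (so $\E \sigma_i \sigma_j = \mathbf{1}_{i=j}$), all cross terms vanish and we are left with $\sum_i \|\xx_i\|^2$. Hence $\E_{\sigma} \|\sum_i \sigma_i \xx_i\| \leq \sqrt{\sum_i \|\xx_i\|^2}$.

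Finally I would take expectation over the sample and apply Jensen a second time (again concavity of the square root) to pull the expectation inside: $\E_{\xx} \sqrt{\sum_i \|\xx_i\|^2} \leq \sqrt{\sum_i \E \|\xx_i\|^2} = \sqrt{n\, \E\|\xx\|^2}$. Observing that $\E\|\xx\|^2 = \E \xx^\top \xx = \Tr(\E \xx \xx^\top) = \Tr \boldsymbol{\Sigma}$, we obtain
\begin{equation*}
    \mathcal{R}_{n, \cD}(\mathcal{F}) \leq \frac{B}{n} \sqrt{n \Tr \boldsymbol{\Sigma}} = \frac{B \sqrt{\Tr \boldsymbol{\Sigma}}}{\sqrt{n}},
\end{equation*}
as claimed.

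There is no genuine obstacle here: this is the textbook bound for linear predictors, and the only points requiring care are that the supremum computation uses the exact dual characterization of the $\ell_2$ ball (rather than a loose Cauchy--Schwarz estimate) and that Jensen is applied twice in the correct direction. I would note in passing that this lemma assumes only a second-moment (covariance) condition on $\xx$, which is precisely what makes it applicable in the setting of~\Cref{lem:rademacher-complexity-log-reg} without the stronger boundedness hypotheses on $\xx$ used elsewhere in the literature; the Lipschitz-composition result (\Cref{thm:rademacher-lipschitz-composition}) then transfers this bound to the logistic loss class.
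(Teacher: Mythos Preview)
Your proof is correct and follows essentially the same approach as the paper: linearity reduces the supremum to $B\|\sum_i \sigma_i \xx_i\|$, and Jensen plus the orthogonality $\E\sigma_i\sigma_j=\mathbf{1}_{i=j}$ yields $\sqrt{n\Tr\boldsymbol{\Sigma}}$. The only cosmetic difference is that the paper applies Jensen once over the joint expectation $\E_{\xx,\sigma}$ rather than twice in sequence, but the arguments are otherwise identical.
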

\begin{proof}
  We can bound the Rademacher complexity of $\cF$ (taking $\sigma_i$ to be independent Rademacher valued random variables) as follows
    \begin{align*}
        n \mathcal{R}_{n, \cD}(\cF) & =\E_{\xx, \sigma} \sup_{\|\boldsymbol{\theta}\| \leq B} \sum_i \sigma_i \langle \xx_i, \boldsymbol{\theta} \rangle \\
        & = \E_{\xx, \sigma} \sup_{\|\theta\| \leq B} \left\langle \sum_i \sigma_i \xx_i, \theta \right\rangle \\
        & = \E_{\xx, \sigma} B \left\|\sum_i \sigma_i \xx_i \right\| \leq B \sqrt{\E\left\|\sum \sigma_i x_i\right\|^2}
    \end{align*}
    Now, since $\E \sigma_i \sigma_j = 0$ for $i\not=j$, we get
    \begin{align*}
    \E \left\|\sum_i \sigma_i x_i \right\|^2 =\sum_i \E \|\xx_i\|^2 = n \Tr \boldsymbol{\Sigma}.
    \end{align*}
    Combining those two, we get
    \begin{equation*}
        \mathcal{R}_{n, \cD}(\cF) \leq B \frac{\sqrt{\Tr \boldsymbol{\Sigma}}}{n},
    \end{equation*}
\end{proof}
Since composition with a Lipschitz function does not increase the Rademacher complexity~(\Cref{thm:rademacher-lipschitz-composition}) we get for any Lipschitz function $\gamma$, the class $\{ \xx \mapsto \gamma(\langle \boldsymbol{\theta}, \xx\rangle) : \|\boldsymbol{\theta}\| \leq B\}$ is also bounded by $B \sqrt{\Tr \boldsymbol{\Sigma}} / \sqrt{n}$.

We can now lift this bound to a bound on the class of relevant loss functions.

\begin{lemma}
    \label{lem:rademacher-y}
    Let $\cF'$ be any family of functions from $U \to \bR$. Consider a family of functions $\cF \subset U \times \{0, 1\} \to \bR$ s.t. for every $f \in \cF$ and every $y \in \{0, 1\}$ we have $f(\cdot, y) \in \cF'$. 
    Let $\cD$ be a distribution over $U \times \{ 0, 1\}$, s.t. $\cD_{\xx}$ is the marginal of $\cD$ on $U$. Then
    \begin{equation*}
        \mathcal{R}_{n, \cD}(\cF) \lesssim \mathcal{R}_{n, \cD_{\xx}}(\cF').
    \end{equation*}
\end{lemma}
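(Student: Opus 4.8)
The plan is to reduce the supremum over $\cF$ to a supremum over $\cF'$ by splitting the sample according to the value of the label $y_i$, and then to control each of the two resulting pieces using only the marginal distribution $\cD_{\xx}$. Since the Rademacher complexity in the paper carries a factor $\frac1n$ inside, it is convenient to work with $n\,\mathcal{R}_{n,\cD}(\cF) = \E_{(\xx_i,y_i)}\E_\sigma \sup_{f\in\cF}\sum_i \sigma_i f(\xx_i,y_i)$ throughout.

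First I would fix a sample $(\xx_1,y_1),\ldots,(\xx_n,y_n)$ drawn i.i.d. from $\cD$ together with the signs $\sigma_i$, and partition the indices into $S_0 := \{i : y_i = 0\}$ and $S_1 := \{i : y_i = 1\}$. Writing the sum as $\sum_{i\in S_0}\sigma_i f(\xx_i,0) + \sum_{i\in S_1}\sigma_i f(\xx_i,1)$, using that the supremum of a sum is at most the sum of suprema, and invoking the hypothesis that $f(\cdot,0),f(\cdot,1)\in\cF'$ for every $f\in\cF$, gives
\[
\sup_{f\in\cF}\sum_i \sigma_i f(\xx_i,y_i) \le \sup_{g\in\cF'}\sum_{i\in S_0}\sigma_i g(\xx_i) + \sup_{g\in\cF'}\sum_{i\in S_1}\sigma_i g(\xx_i).
\]
After taking expectations, it then suffices to bound each of the two terms on the right by $n\,\mathcal{R}_{n,\cD_{\xx}}(\cF')$.

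The key step, and the one I expect to be the main obstacle, is that once we restrict to $i\in S_0$ the covariates $\{\xx_i\}_{i\in S_0}$ are governed by the conditional law $\cD_{\xx\mid y=0}$ rather than the marginal $\cD_{\xx}$, so the partial sum is not immediately a Rademacher average under $\cD_{\xx}$. I would sidestep the conditional distribution entirely via a re-insertion argument applied for each fixed sample: for any subset $S\subseteq[n]$,
\[
\E_\sigma \sup_{g\in\cF'}\sum_{i\in S}\sigma_i g(\xx_i) \le \E_\sigma \sup_{g\in\cF'}\sum_{i=1}^n \sigma_i g(\xx_i).
\]
This holds because, conditioning on $(\sigma_i)_{i\in S}$ and averaging over the remaining signs, Jensen's inequality applied to the convex map $\sup$ gives $\E_{\sigma_{S^c}}\sup_g[\cdots]\ge \sup_g \E_{\sigma_{S^c}}[\cdots]$, and the reinserted terms vanish since $\E\sigma_i = 0$ for $i\notin S$; averaging over $(\sigma_i)_{i\in S}$ finishes it.

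Finally I would take the expectation over the full sample. Because marginally $\xx_1,\ldots,\xx_n$ are i.i.d. from $\cD_{\xx}$, the right-hand side above has expectation exactly $n\,\mathcal{R}_{n,\cD_{\xx}}(\cF')$, regardless of the (random) set $S=S_0$; the same bound applies to the $S_1$ term. Combining with the displayed split yields $n\,\mathcal{R}_{n,\cD}(\cF)\le 2\,n\,\mathcal{R}_{n,\cD_{\xx}}(\cF')$, i.e. $\mathcal{R}_{n,\cD}(\cF)\le 2\,\mathcal{R}_{n,\cD_{\xx}}(\cF')$, which is the claimed bound with the constant absorbed into $\lesssim$. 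I would emphasize that this argument uses no structure of $\cF'$ at all, only the convexity of the supremum and the mean-zero property of the Rademacher signs.
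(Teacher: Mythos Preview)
Your proof is correct and follows essentially the same decomposition as the paper's: split according to the value of $y_i$, bound each partial sum by the full Rademacher average over all $n$ points, then take expectation over the marginal $\cD_{\xx}$. The only difference is that for the key step~--- showing that dropping indices can only decrease the Rademacher average~--- the paper cites Talagrand's contraction principle (with contractions $t\mapsto y_i t$), whereas you give the direct Jensen/re-insertion argument, which is a more elementary proof of that same special case.
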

\begin{proof}
    We can write arbitrary function $f \in \cF$ as  $f(\xx, y) = y f_1(\xx) + (1-y) f_0(\xx)$, 
    where $f_0, f_1 \in \cF'$. Then
    \begin{equation*}
        \mathcal{R}_{n, \cD}(\cF) = \E_{(\xx, y), \sigma} \sup_{f \in \cF} \sum_i \sigma_i f(\xx_i, y_i) \leq \E_{(\xx, y), \sigma} \sup_{f_1 \in \cF'} \sum_i \sigma_i y_i f_0(x_i) + \E_{(\xx, y), \sigma}\sup_{f_0 \in \cF'} \sum_i \sigma_i (1-y_i) f_1(x_i).
    \end{equation*}

    Now, all we need to show is that for a fixed sample $(\xx_1, y_1), \ldots (\xx_n, y_n)$ we have
    \begin{equation*}
        \E_{\sigma} \sup_{f \in \cF'} \sigma_i y_i f(\xx_i) \leq \E_{\sigma} \sup_{f \in \cF'} \sum \sigma_i f(\xx_i).
    \end{equation*}
    That is a special case of Talagrand's contraction principle~(Theorem 4.12 in \cite{ledoux2013probability}), since for each $i$ we have $|y_i f(\xx_i)| \leq |f(\xx_i)|$.
\end{proof}
Combining~\Cref{lem:rademacher-linear} and~\Cref{lem:rademacher-y}, together with the observation that $t \mapsto \log(1 + \exp(t + s))$ is a 1-Lipschitz function, we obtain the following corollary, where as a reminder, we consider a family of loss functions $\ell_{\theta, s}(\xx, y) := \log (1 + \exp(y\langle \theta, \xx\rangle + s))$.
\begin{corollary}
Family of functions $\cF_{\ell} := \{\ell_{\theta, s}(\xx, y) : \|\boldsymbol{\theta}\| \leq B_2, |s| \leq B_2 \}$, with respect to the distribution $\cD_0$ as in \Cref{sec:logistic}, has Rademacher complexity bounded as
\begin{equation*}
    \mathcal{R}_{n, \cD_0}(\cF_{\ell}) \lesssim B_2 \frac{\sqrt{\Tr (\str + \ste)}}{\sqrt{n}}.
\end{equation*}
\end{corollary}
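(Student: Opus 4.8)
The plan is to assemble the corollary from the three preceding ingredients: the linear Rademacher bound (Lemma~\ref{lem:rademacher-linear}), the label-stripping lemma (Lemma~\ref{lem:rademacher-y}), and the Lipschitz-composition principle (Theorem~\ref{thm:rademacher-lipschitz-composition}). The loss is $\ell_{\boldsymbol{\theta},s}(\xx,y)=\log(1+\exp(y\langle\boldsymbol{\theta},\xx\rangle+s))$, so the two places where structure enters are the label $y\in\{\pm1\}$ multiplying $\langle\boldsymbol{\theta},\xx\rangle$ and the offset $s$ sitting inside the softplus. I would peel these off in turn, then invoke the linear bound.

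First I would remove the $y$-dependence. For each fixed label the slices $\ell_{\boldsymbol{\theta},s}(\cdot,1)=\log(1+\exp(\langle\boldsymbol{\theta},\cdot\rangle+s))$ and $\ell_{\boldsymbol{\theta},s}(\cdot,-1)=\log(1+\exp(\langle-\boldsymbol{\theta},\cdot\rangle+s))$ both lie in the single label-free class $\cF':=\{\xx\mapsto\log(1+\exp(\langle\boldsymbol{\theta},\xx\rangle+s)):\|\boldsymbol{\theta}\|\leq B_2,\ |s|\leq B_2\}$, since sending $\boldsymbol{\theta}\mapsto-\boldsymbol{\theta}$ keeps us in the same ball. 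Thus the hypothesis of Lemma~\ref{lem:rademacher-y} is satisfied (after the harmless relabeling of $\{\pm1\}$ to $\{0,1\}$), giving $\mathcal{R}_{n,\cD_0}(\cF_{\ell})\lesssim\mathcal{R}_{n,\cD_{\xx}}(\cF')$, where $\cD_{\xx}$ is the marginal of $\xx$ under $\cD_0$.

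It then remains to bound $\mathcal{R}_{n,\cD_{\xx}}(\cF')$, and here I would fold the offset $s$ into the weight vector by lifting to the augmented covariate $\tilde{\xx}:=(\xx,1)$ and augmented parameter $\tilde{\boldsymbol{\theta}}:=(\boldsymbol{\theta},s)$, so that $\langle\boldsymbol{\theta},\xx\rangle+s=\langle\tilde{\boldsymbol{\theta}},\tilde{\xx}\rangle$ with $\|\tilde{\boldsymbol{\theta}}\|\leq\sqrt2\,B_2$. This is the key move: it turns the family $\{\gamma_s\}$ of shifted softplus functions into a single fixed $1$-Lipschitz function $\gamma(t)=\log(1+e^t)$ applied to one linear class, which is exactly the form required by Theorem~\ref{thm:rademacher-lipschitz-composition}. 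That theorem then reduces $\mathcal{R}_{n,\cD_{\xx}}(\cF')$ to the Rademacher complexity of the augmented linear class $\{\tilde{\xx}\mapsto\langle\tilde{\boldsymbol{\theta}},\tilde{\xx}\rangle:\|\tilde{\boldsymbol{\theta}}\|\leq\sqrt2\,B_2\}$, which Lemma~\ref{lem:rademacher-linear} bounds by $\sqrt2\,B_2\sqrt{\Tr\tilde{\boldsymbol{\Sigma}}}/\sqrt n$, where $\tilde{\boldsymbol{\Sigma}}=\E\tilde{\xx}\tilde{\xx}^\top$ and $\Tr\tilde{\boldsymbol{\Sigma}}=\Tr\boldsymbol{\Sigma}+1$.

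Finally I would identify the covariance. Under $\cD_0$ the marginal of $\xx$ is the equal mixture $\tfrac12\ptr+\tfrac12\pte$, so $\boldsymbol{\Sigma}=\E\xx\xx^\top=\tfrac12(\str+\ste)$ and $\Tr\boldsymbol{\Sigma}=\tfrac12\Tr(\str+\ste)$. Absorbing the constants $\sqrt2,\tfrac12$ and the additive $+1$ from the bias coordinate (lower-order whenever $\Tr(\str+\ste)=\Omega(1)$) into the $\lesssim$, this yields $\mathcal{R}_{n,\cD_0}(\cF_{\ell})\lesssim B_2\sqrt{\Tr(\str+\ste)}/\sqrt n$, as claimed. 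None of the steps is deep—the argument is pure assembly of prior results—so the only points requiring care are verifying the hypothesis of Lemma~\ref{lem:rademacher-y} (that each label-slice lands in one label-free class, via the $\pm\boldsymbol{\theta}$ symmetry) and checking that folding the offset into an extra coordinate leaves the order of the bound unchanged; that bookkeeping around $s$ is where I expect the only friction.
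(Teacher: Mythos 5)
Your proposal is correct and assembles exactly the same three ingredients as the paper's proof (\Cref{lem:rademacher-linear}, \Cref{lem:rademacher-y}, and \Cref{thm:rademacher-lipschitz-composition}), so it is essentially the same argument. The only difference is your treatment of the offset $s$: the paper informally invokes the $s$-dependent family $t \mapsto \log(1+\exp(t+s))$ as its ``single'' Lipschitz function, whereas your bias-augmentation $\tilde{\xx}=(\xx,1)$ reduces to one fixed $1$-Lipschitz softplus and is actually the more rigorous route, at the harmless cost of a $+1$ inside the trace (absorbed into $\lesssim$ whenever $\Tr(\str+\ste)=\Omega(1)$, a caveat the paper's statement implicitly shares since the loss class contains the constants $\log(1+e^{s})$, $|s|\leq B_2$).
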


This, together with~\Cref{thm:rademacher-generalization-bound} and Markov inequality completes the proof of~\Cref{lem:rademacher-complexity-log-reg}.

\section{{Proofs from Section~\ref{sec:RKHS}}}
\subsection{Proof of~\Cref{lem:kernel-logistic-regression-regret-bound} \label{sec:kernel-logistic-regression-regret-bound}}
As usual, instead of minimizing the population loss directly, we minimize the empirical loss. For a sample $S = ((\xx_1, y_1), \ldots (\xx_n, y_n))$ drawn i.i.d. from $\cD_{\boldsymbol{\theta}^*}$ the empirical loss is defined as
\begin{equation*}
\widehat{\mathcal{L}}_{\boldsymbol{\theta}, S} := \frac{1}{n}\sum_i \log(1 + \exp(y_i \boldsymbol{\theta}(\xx_i))),
\end{equation*}

\begin{lemma}
\label{lem:kernel-log-regression-easy}

Given sample $S = ((\xx_1, y_1), \ldots (\xx_n, y_n))$ and desired accuracy $\varepsilon'$ we can efficiently  find  $\hat{\boldsymbol{\theta}}$, s.t.
\begin{equation*}
\label{lem:rkhs-optimization-easy}
    \widehat{\mathcal{L}}_{\hat{\boldsymbol{\theta}}}(S) - \min_{\boldsymbol{\theta}} \widehat{\mathcal{L}}_{\boldsymbol{\theta}}(S) \leq \varepsilon'.
\end{equation*}
\end{lemma}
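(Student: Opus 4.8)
The plan is to invoke the Representer Theorem (Theorem~\ref{thm:representer}) to collapse the infinite-dimensional optimization over $\cH$ into a finite-dimensional convex program, and then appeal to standard guarantees for convex optimization. Recall that the minimization is implicitly constrained to the norm ball $\|\boldsymbol{\theta}\|_{\cH} \le B$, matching the setting of Lemma~\ref{lem:kernel-logistic-regression-regret-bound}.

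First I would observe that $\widehat{\mathcal{L}}_{\boldsymbol{\theta}}(S)$ depends on $\boldsymbol{\theta} \in \cH$ only through the $n$ evaluations $\boldsymbol{\theta}(\xx_1), \ldots, \boldsymbol{\theta}(\xx_n)$, since $\boldsymbol{\theta}(\xx_i) = \langle \boldsymbol{\theta}, \Phi_{\xx_i}\rangle_{\cH}$. Hence, instantiating the error map $E$ of Theorem~\ref{thm:representer} as $E((\xx_i, y_i, \boldsymbol{\theta}(\xx_i))_i) = \frac{1}{n}\sum_i \log(1 + \exp(y_i \boldsymbol{\theta}(\xx_i)))$ and taking the bound $B$, Theorem~\ref{thm:representer} guarantees that the constrained minimizer $\hat{\boldsymbol{\theta}}$ lies in the span of $\{\Phi_{\xx_i}\}_{i \le n}$, i.e. $\hat{\boldsymbol{\theta}} = \sum_i \gamma_i \Phi_{\xx_i}$ for some $\gamma \in \bR^n$.

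Next I would re-parameterize in terms of $\gamma$. Letting $\boldsymbol{G} \in \bR^{n \times n}$ be the Gram matrix with entries $G_{ij} := K(\xx_i, \xx_j) = \langle \Phi_{\xx_i}, \Phi_{\xx_j}\rangle_{\cH}$, we have $\boldsymbol{\theta}(\xx_j) = (\boldsymbol{G}\gamma)_j$ and $\|\boldsymbol{\theta}\|_{\cH}^2 = \gamma^{\top}\boldsymbol{G}\gamma$, so the problem becomes
\[
\min_{\gamma \in \bR^n \,:\, \gamma^{\top}\boldsymbol{G}\gamma \le B^2} \frac{1}{n}\sum_{j} \log\bigl(1 + \exp(y_j (\boldsymbol{G}\gamma)_j)\bigr).
\]
Both the objective and the feasible set are convex: each summand is the composition of the convex, $1$-Lipschitz softplus $t \mapsto \log(1 + e^t)$ with the affine map $\gamma \mapsto y_j(\boldsymbol{G}\gamma)_j$, while $\{\gamma : \gamma^{\top}\boldsymbol{G}\gamma \le B^2\}$ is convex because every reproducing kernel is positive semidefinite, so $\boldsymbol{G} \succeq 0$.

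Finally I would appeal to standard first-order convex optimization (e.g. projected gradient descent) to produce, in time polynomial in $n$ and $1/\varepsilon'$, a feasible $\hat{\gamma}$ whose objective value is within $\varepsilon'$ of the minimum; the gradient is explicit and bounded, and projection onto the ellipsoid $\gamma^{\top}\boldsymbol{G}\gamma \le B^2$ is efficiently computable. Mapping back via $\hat{\boldsymbol{\theta}} = \sum_i \hat{\gamma}_i \Phi_{\xx_i}$ yields the claimed near-minimizer. The only place I would be careful is the reduction itself: one must check that the Representer Theorem applies to the hard norm-ball formulation rather than a regularized variant, which is precisely the form stated in Theorem~\ref{thm:representer}, so no additional argument is required. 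The remaining steps are routine, so I do not expect a genuine obstacle here.
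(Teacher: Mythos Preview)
Your proposal is correct and follows essentially the same route as the paper: invoke the Representer Theorem to restrict to $\hat{\boldsymbol{\theta}} = \sum_i \gamma_i \Phi_{\xx_i}$, rewrite the empirical loss as a finite-dimensional convex program in $\gamma$ using the kernel matrix, and appeal to standard convex optimization. You are in fact slightly more careful than the paper, which omits the explicit norm-ball constraint $\gamma^{\top}\boldsymbol{G}\gamma \le B^2$ in its finite-dimensional formulation and does not spell out the convexity argument or the optimization method.
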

\begin{proof}
By the Representer Theorem, the optimal $\theta$ can be written as a linear combination
\begin{equation}
\label{eq:kernel-emp-loss}
    \hat{\theta}^* := \sum_i \gamma_i \Phi_{x_i}.
\end{equation}
Finding the minimizer of this form of the empirical loss function~\eqref{eq:kernel-emp-loss} is just a finite-dimensional convex problem
\begin{equation*}
    \min_{\gamma \in \bR^n} \sum_i \log(1 + \exp(y_i \langle \gamma, K_{i} \rangle)),
\end{equation*}
where $K_{i}$ is the $i$-th row of the kernel matrix  $K_{i,j} := K(x_i, x_j)$, and as such it can be solved efficiently.
\end{proof}

When the sample size is large enough, the standard Rademacher-complexity based considerations leads to a generalization bound. The proof is almost identical to the one of~\Cref{lem:rademacher-complexity-log-reg}. Instead of direct calculation for the Rademacher complexity of the class of all bounded linear forms, we need to show the corresponding bound for Rademacher complexity of the kernel classes. Variants of this lemma appear for example in~\cite{bartlett2002rademacher}.
\begin{lemma}
\label{lem:rademacher-kernel}
    If $\cH$ is an RKHS of functions over $U$ with kernel $K : U\times U \to \bR$, and $\mathcal{F} := \{ \boldsymbol{\theta} \in \cH : \|\boldsymbol{\theta}\|_{\cH} \leq B \}$, then
    \begin{equation*}
        \mathcal{R}_{n, \cD}(\mathcal{F}) \leq \sqrt{\E_{\xx\sim \cD} K(\xx, \xx)} B / \sqrt{n}.
    \end{equation*}
\end{lemma}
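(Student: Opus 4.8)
The plan is to mirror the argument used for the linear class in~\Cref{lem:rademacher-linear}, replacing the Euclidean inner product with the RKHS inner product and exploiting the reproducing property. The one conceptual ingredient is that each element $\boldsymbol{\theta} \in \cH$ is simultaneously a function, with $\boldsymbol{\theta}(\xx) = \langle \boldsymbol{\theta}, \Phi_{\xx}\rangle_{\cH}$, and a vector in the Hilbert space; this is precisely what lets me linearize the supremum defining the Rademacher complexity and then apply Cauchy--Schwarz.

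First I would rewrite the inner sum using the reproducing property, $\sum_i \sigma_i \boldsymbol{\theta}(\xx_i) = \langle \boldsymbol{\theta}, \sum_i \sigma_i \Phi_{\xx_i}\rangle_{\cH}$, and then apply Cauchy--Schwarz in $\cH$. Since the supremum of a linear functional over the ball $\{\|\boldsymbol{\theta}\|_{\cH} \leq B\}$ equals $B$ times the norm of the representing vector, this yields
\begin{equation*}
    n \mathcal{R}_{n, \cD}(\mathcal{F}) = \E_{\xx, \sigma} \sup_{\|\boldsymbol{\theta}\|_{\cH} \leq B} \left\langle \boldsymbol{\theta}, \sum_i \sigma_i \Phi_{\xx_i}\right\rangle_{\cH} = B\, \E_{\xx, \sigma} \left\| \sum_i \sigma_i \Phi_{\xx_i}\right\|_{\cH}.
\end{equation*}
Next I would apply Jensen's inequality to pull the expectation inside the square root, and expand the squared norm as a double sum $\sum_{i,j} \sigma_i \sigma_j \langle \Phi_{\xx_i}, \Phi_{\xx_j}\rangle_{\cH} = \sum_{i,j} \sigma_i \sigma_j K(\xx_i, \xx_j)$. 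Taking the expectation over the Rademacher signs annihilates all off-diagonal terms, since $\E \sigma_i \sigma_j = 0$ for $i \neq j$, leaving $\E_{\xx}\sum_i K(\xx_i, \xx_i) = n\, \E_{\xx \sim \cD} K(\xx, \xx)$. Substituting back gives $n\,\mathcal{R}_{n, \cD}(\mathcal{F}) \leq B \sqrt{n\, \E_{\xx\sim\cD}K(\xx,\xx)}$, and dividing by $n$ yields the claimed bound.

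I do not expect a genuine obstacle here: the computation is the direct RKHS analogue of the finite-dimensional case, where $K(\xx, \xx) = \|\xx\|^2$ and $\E_{\xx} K(\xx, \xx) = \Tr \boldsymbol{\Sigma}$, so this lemma slots into the proof of~\Cref{lem:kernel-logistic-regression-regret-bound} in exactly the same way~\Cref{lem:rademacher-linear} did for~\Cref{lem:rademacher-complexity-log-reg}. The only point requiring minor care is the passage through a possibly infinite-dimensional $\cH$: the sum $\sum_i \sigma_i \Phi_{\xx_i}$ is a \emph{finite} linear combination and hence a well-defined element of $\cH$, so Cauchy--Schwarz and the norm expansion apply verbatim with no convergence issues.
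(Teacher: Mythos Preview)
Your proof is correct and follows essentially the same steps as the paper's own argument: rewrite $\boldsymbol{\theta}(\xx_i)$ via the reproducing property, take the supremum over the $\cH$-ball to obtain $B\,\E\|\sum_i \sigma_i \Phi_{\xx_i}\|_{\cH}$, then apply Jensen and expand the square so that the off-diagonal Rademacher terms vanish. The added remark about the finite linear combination being well defined in an infinite-dimensional $\cH$ is a nice clarification not spelled out in the paper.
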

\begin{proof}
    For given $\xx$, let $\Phi_{\xx} \in \cH$ be such that $\langle \Phi_{\xx}, \theta\rangle_{\cH} = \boldsymbol{\theta}(\xx)\rangle_{\cH}$. Sampling $\xx_1, \ldots \xx_n$ i.i.d. from $\cD$ and, $\sigma_1, \ldots \sigma_n$ independent Rademacher random variables, we have
\begin{align*}
\E_{\xx, \sigma} \sup_{\boldsymbol{\theta}} \sum_i \sigma_i \boldsymbol{\theta}(x_i) & = \E_{x, \sigma} \sup_{\boldsymbol{\theta}} \langle \sum_i \sigma_i \Phi_{x_i}, \boldsymbol{\theta} \rangle \\
& \leq B \E_{x, \sigma} \|\sum_i \sigma_i\Phi_{x_i}\|_{\cH}
\end{align*}
    By Jensen inequality
    \begin{equation*}
        \E_{\xx, \sigma} \|\sum_i \sigma_i\Phi_{x_i}\|_{\cH} \leq \sqrt{\E_{\xx, \sigma}\|\sum_i \sigma_i \Phi_{\xx_i}\|^2} = \sqrt{\E_{\xx} \sum_i \langle \Phi_{\xx_i}, \Phi_{\xx_i}\rangle} = \sqrt{n} \sqrt{\E_{\xx \sim \cD} K(\xx, \xx)}.
    \end{equation*}
    Combining those two we get
    \begin{equation*}
        \frac{1}{n}\E_{\xx, \sigma} \sup_{\theta} \sum_i \sigma_i \theta(x_i) \leq \frac{B}{\sqrt{n}} \sqrt{\E_{\xx \sim \cD} K(\xx, \xx)}.
    \end{equation*}
\end{proof}

The Rademacher complexity bound above, as usual, implies the following uniform bound for the error between population loss an empirical loss of the kernel logistic regression classifier. 

\begin{lemma}
\label{lem:kernel-log-regression-generalization}
    Let $\cH$ be a RKHS with kernel $K$, and assume that $\E_{\xx} K(\xx,\xx) \leq R^2$. Then when $n \gtrsim R/\varepsilon^2$, with probability at least $9/10$ with respect to random sample $S$ we have
    \begin{equation*}
        \sup_{\|\boldsymbol{\theta}\|_{\cH} \leq B} |\mathcal{L}_{\boldsymbol{\theta}}(S) - \widehat{\mathcal{L}}_{\boldsymbol{\theta}}(S)| \leq \varepsilon.
    \end{equation*}
\end{lemma}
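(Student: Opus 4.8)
The plan is to mirror the proof of~\Cref{lem:rademacher-complexity-log-reg}: bound the Rademacher complexity of the relevant class of loss functions, invoke the generalization bound of~\Cref{thm:rademacher-generalization-bound}, and finish with Markov's inequality to pass from an expectation bound to a high-probability bound. First I would control the base class $\mathcal{F}' := \{ \xx \mapsto \boldsymbol{\theta}(\xx) : \|\boldsymbol{\theta}\|_{\cH} \leq B \}$. By~\Cref{lem:rademacher-kernel} together with the assumption $\E_{\xx} K(\xx, \xx) \leq R^2$, we have $\mathcal{R}_{n, \cD_{\boldsymbol{\theta}^*}}(\mathcal{F}') \leq R B / \sqrt{n}$.

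Next I would lift this to the loss class $\mathcal{F}_\ell := \{ (\xx, y) \mapsto \log(1 + \exp(y \boldsymbol{\theta}(\xx))) : \|\boldsymbol{\theta}\|_{\cH} \leq B \}$. The univariate function $t \mapsto \log(1 + \exp(t))$ is $1$-Lipschitz (its derivative is the sigmoid, bounded by $1$), so by~\Cref{thm:rademacher-lipschitz-composition} composition with it does not increase Rademacher complexity. The factor $y \in \{\pm 1\}$ sitting inside the loss is handled exactly as in~\Cref{lem:rademacher-y}: one splits the loss, viewed as a function of $(\xx, y)$, into the two branches $y = \pm 1$ and applies Talagrand's contraction principle, using that the ball $\|\boldsymbol{\theta}\|_{\cH} \leq B$ is symmetric so that $-\boldsymbol{\theta}$ lies in the same class as $\boldsymbol{\theta}$. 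This yields $\mathcal{R}_{n, \cD_{\boldsymbol{\theta}^*}}(\mathcal{F}_\ell) \lesssim R B / \sqrt{n}$. With the Rademacher bound in hand, \Cref{thm:rademacher-generalization-bound} gives
\begin{equation*}
    \E_{S} \sup_{\|\boldsymbol{\theta}\|_{\cH} \leq B} \left| \widehat{\mathcal{L}}_{\boldsymbol{\theta}}(S) - \mathcal{L}_{\boldsymbol{\theta}}(\cD_{\boldsymbol{\theta}^*}) \right| \leq 2 \mathcal{R}_{n, \cD_{\boldsymbol{\theta}^*}}(\mathcal{F}_\ell) \lesssim \frac{R B}{\sqrt{n}}.
\end{equation*}
Applying Markov's inequality to the nonnegative random variable $\sup_{\|\boldsymbol{\theta}\|_{\cH} \leq B} |\widehat{\mathcal{L}}_{\boldsymbol{\theta}}(S) - \mathcal{L}_{\boldsymbol{\theta}}(\cD_{\boldsymbol{\theta}^*})|$ then shows that with probability at least $9/10$ this supremum is at most $O(R B / \sqrt{n})$, which is below $\varepsilon$ once $n \gtrsim R^2 B^2 / \varepsilon^2$.

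I expect the only genuine obstacle to be the bookkeeping around the label $y$ appearing inside the loss: the Lipschitz-composition theorem is stated for a single univariate function applied to the whole class, whereas here the argument $y\,\boldsymbol{\theta}(\xx)$ couples the label and the feature. This is precisely the situation~\Cref{lem:rademacher-y} was designed for, and invoking it (together with the symmetry of the constraint set) dispatches the difficulty; everything else is a routine application of the already-established Rademacher machinery, identical in structure to~\Cref{lem:rademacher-complexity-log-reg}. I would also flag that the sample-complexity threshold in the statement should read $n \gtrsim R^2 B^2/\varepsilon^2$ in order to be consistent with the Rademacher bound above and with the sample size quoted in~\Cref{lem:kernel-logistic-regression-regret-bound}.
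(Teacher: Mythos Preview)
Your proposal is correct and follows essentially the same route as the paper: bound the Rademacher complexity of the kernel ball via~\Cref{lem:rademacher-kernel}, lift to the loss class using the 1-Lipschitz property of $t\mapsto\log(1+\exp(t))$ together with~\Cref{lem:rademacher-y}, then apply~\Cref{thm:rademacher-generalization-bound} and Markov's inequality. Your remark that the threshold should read $n\gtrsim R^2B^2/\varepsilon^2$ (matching~\Cref{lem:kernel-logistic-regression-regret-bound}) rather than $n\gtrsim R/\varepsilon^2$ is also well taken.
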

\begin{proof}
Combining~\Cref{lem:rademacher-kernel} and~\Cref{lem:rademacher-y}, together with the observation that $t \mapsto \log(1 + \exp(t + s))$ is a 1-Lipschitz function, we get that for an RKHS $\cH$, the family $\{ (\xx, y) \mapsto \ell_{\boldsymbol{\theta}}(\xx, y) : \theta \in \cH, \|\boldsymbol{\theta}\|_{\cH} \leq B\}$ has Rademacher complexity at most $O(B\sqrt{\E_{\xx \sim \cD} K(\xx, \xx)} / \sqrt{n})$, where $\ell_{\boldsymbol{\theta}}(\xx, y) := \log (1 + \exp(\boldsymbol{\theta}(\xx))).$ 
This, together with~\Cref{thm:rademacher-generalization-bound} and Markov inequality completes the proof of~\Cref{lem:rademacher-complexity-log-reg}.
\end{proof}

As usual, those two lemmas together give the concrete regret bound for the kernel logistic regression

\begin{proof}[Proof of \Cref{lem:kernel-logistic-regression-regret-bound}]
We will invoke~\Cref{lem:kernel-log-regression-easy} and~\Cref{lem:kernel-log-regression-generalization} with $\varepsilon' := \varepsilon/3$

We can decompose
\begin{align*}
      \cL_{\hat{\theta}}(\cD_{\theta^*}) - \cL_{\theta^*}(\cD_{\theta^*}) & = (\cL_{\hat{\theta}}(\cD_{\theta^*}) - \widehat{\cL}_{\hat{\theta}}(S) + (\widehat{\cL}_{\hat{\theta}}(S) \\
      & - \widehat{\cL}_{\theta^*}(S)) + (\widehat{\cL}_{\hat{\theta}}(S) - \cL_{\theta^*}(\cD_{\theta^*})).
\end{align*}
By~\Cref{lem:kernel-log-regression-generalization}, the first and the last summand here are each at most $\varepsilon'$. On the other hand, by~\Cref{lem:kernel-log-regression-easy}, we have $\widehat{\cL}_{\hat{\theta}}(S) - \widehat{\cL}_{\theta^*}(S) \leq \varepsilon'$, leading to
\begin{equation*}
    \cL_{\hat{\theta}}(\cD_{\theta^*}) - \cL_{\theta^*}(\cD_{\theta^*}) \leq 3 \varepsilon' \leq \varepsilon.
\end{equation*}
\end{proof}

\subsection{Proof~of~\Cref{thm:covariate-shift-kernel-logistic-regression} \label{sec:covariate-shift-kernel-logistic-regression}}
\begin{proof}

With~\Cref{lem:kernel-logistic-regression-regret-bound}, this proof is identical to the proof of~\Cref{thm:logistic-regression-result}.

Consider a distribution $\cD$ given by sampling $y \in \{\pm 1\}$, and then $x$ from $\ptr$ if $y=1$ and $x$ from $\pte$ if $y=-1$.

We can apply the kernel logistic regression to a random sample from $\cD$~--- according to~\Cref{thm:covariate-shift-kernel-logistic-regression}, we will be able to find a function $\widehat{\theta} \in \cH$ with bounded regret with respect to distribution $\cD$. According to~\Cref{lem:population-regret-is-kl}, this $\widehat{\theta}$ is of form $\ln(\widehat{\pte}/\widehat{\ptr})$ for some $\widehat{\ptr}$ and $\widehat{\pte}$ both $(\varepsilon')^2$-close in KL-divergence to $\ptr$ and $\pte$ respectively. Using now Pinsker we get the TV-distance bounds on $d_{TV}(\widehat{\ptr}, \ptr) \leq \varepsilon'$ and $d_{TV}(\widehat{\pte}, \pte) \leq \varepsilon'$. The bound $R_2(\ptr || \pte) \leq B_2$, together with Markov inequality implies that $\Pr_{\xx \sim \pte}[\pte(\xx)/\ptr(\xx) > B] \leq \varepsilon$ for $B = B_2/\varepsilon$. 

We can now apply~\Cref{thm:shift-TV} with this $B$ to deduce that we can use the truncated version of $\exp(\theta(x))$ as importance weights to solve the covariate shift problem; to this end we need $\varepsilon' = \varepsilon/B = \varepsilon^2 / B_2$; so the desired regret error in~\Cref{lem:kernel-logistic-regression-regret-bound} is of order~$\varepsilon^4/B_2^2$, and according to this lemma we can get it with $O(D B_1^2 B_2^4 /\varepsilon^8)$ samples.
\end{proof}

\section{Discussing  KMM guarantees}\label{sec:KMMComparison}
\paragraph{Description of the Kernel Mean Matching algorithm}
The KMM algorithm in the realizable scenario assumes that the unknown function $\ff$ is bounded in some RKHS $\cH$, $\|\ff\|_{\cH}\leq M$. If the kernel corresponding to $\cH$ is also bounded, say $K(\xx, \xx) \leq 1$ for all $\xx$. 

In this case, with enough samples we can guarantee that $\|n_{tr}^{-1} \sum_i \Phi_{\xx_i} - \E_{\xx \sim \ptr} \Phi_{\xx}\|_{\cH} \leq \varepsilon/M$ (and similarly for $\tilde{\xx} \sim \pte$). Then by solving a linear program, we can find some weights $\hat{\beta}_i$ s.t. the empirical averages of the reproducing kernel map are close after reweighting, i.e., $\|\sum_i \hat{\beta}_i \Phi_{\xx_i}/n_{tr} - \sum_{i} \Phi_{\tilde{\xx}_i} /n_{te}\|_{\cH} \leq \varepsilon/M$ (these weights exist, because in particular $\beta_i = \pte(\xx_i) / \ptr(\xx_i)$ is a feasible solution). Now it is easy to show that $\hat{\beta}$ can be used as importance weights to solve the covariate-shifted mean estimation:
\begin{align*}
    \sum_i \frac{1}{n} \hat{\beta}_i \ff(\xx_i)&  = \langle \sum_i \frac{1}{n} \hat{\beta}_i \Phi_{\xx_i}, \ff \rangle_{\cH} \\
    & = \langle \E_{\tilde{\xx}\sim \pte} \Phi_{\tilde{\xx}} , \ff \rangle_{\cH} + \langle \Delta, \ff \rangle_{\cH} \\
    & = \E_{\tilde{\xx}\sim \pte} \ff(\xx) \pm \|\Delta\|_{\cH} \|\ff\|_{\cH},
\end{align*}
where $\Delta := \sum_i n^{-1} \hat{\beta}_i \Phi_{\xx_i} - \E_{\tilde{\xx}\sim \pte} \Phi_{\tilde{\xx}}$ satisfies $\|\Delta\|_{\cH} \leq \varepsilon / M$.

As discussed above, collecting $n_{tr} \gtrsim (MB/\varepsilon)^2$ samples from $\ptr$ and $n_{te} \gtrsim (M/\varepsilon)^2$ samples from $\pte$ is enough to guarantee the corresponding closeness of three relevant empirical averages to their population averages. 

\paragraph{Analysis of the plug-in method}
Alternative, naive approach, to solve the covariate shift mean estimation problem in the same setup as above, is to find $\hat{\ff}$ with a sufficiently small distance $L_1$ from $\ff$ with respect to $\ptr$, and use it as a proxy for $\ff$ when computing the averages with respect to $\pte$. 

\begin{lemma}
    When $\ff \in \cH$ is bounded by $\|\ff\| \leq M$, using $O((M/\varepsilon')^2)$ samples $(\xx, \ff(\xx))$ for $\xx\sim\ptr$ we can find a function $\hat{\ff}$ s.t.
    \begin{equation*}
        \E_{\xx \sim \ptr} |\ff(\xx) - \hat{\ff}(\xx)| \leq \varepsilon'.
    \end{equation*}
\end{lemma}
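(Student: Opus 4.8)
The plan is to use empirical risk minimization with the $L_1$ loss over the norm ball $\cF := \{ g \in \cH : \|g\|_{\cH} \leq M\}$, exploiting the fact that the problem is \emph{realizable}: since $\ff \in \cF$, the true function attains zero empirical error, so the entire error of the minimizer comes from the generalization gap. Concretely, I would take
\[
\hat{\ff} := \operatorname*{arg\,min}_{g \in \cF} \frac{1}{n}\sum_i |g(\xx_i) - \ff(\xx_i)|,
\]
and observe that the value of this objective is at most its value at $g = \ff$, which is $0$. Hence $\frac{1}{n}\sum_i |\hat{\ff}(\xx_i) - \ff(\xx_i)| = 0$, and the population error $\E_{\xx \sim \ptr}|\hat{\ff}(\xx) - \ff(\xx)|$ is bounded entirely by the uniform deviation between empirical and population averages over the class.

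To control that deviation, I would bound the Rademacher complexity of the relevant loss class. The differences $g - \ff$ for $g \in \cF$ all lie in the norm ball of radius $2M$ in $\cH$, so by~\Cref{lem:rademacher-kernel} (using $K(\xx,\xx) \leq 1$) this class has Rademacher complexity at most $2M/\sqrt{n}$. Since $t \mapsto |t|$ is $1$-Lipschitz, \Cref{thm:rademacher-lipschitz-composition} implies that the composed class $\{\xx \mapsto |g(\xx) - \ff(\xx)| : g \in \cF\}$ has Rademacher complexity at most $2M/\sqrt{n}$ as well. Plugging this into~\Cref{thm:rademacher-generalization-bound} gives
\[
\E_{S}\sup_{g \in \cF}\left| \frac{1}{n}\sum_i |g(\xx_i) - \ff(\xx_i)| - \E_{\xx \sim \ptr}|g(\xx) - \ff(\xx)| \right| \leq \frac{4M}{\sqrt{n}}.
\]
Combining this with the realizability observation yields $\E_{S}\,\E_{\xx \sim \ptr}|\hat{\ff}(\xx) - \ff(\xx)| \leq 4M/\sqrt{n}$, so choosing $n = O((M/\varepsilon')^2)$ makes the right-hand side at most $\varepsilon'$; a Markov argument over the random sample $S$ then delivers the claim with constant probability.

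Finally, to justify that $\hat{\ff}$ can actually be computed, I would invoke the Representer Theorem~(\Cref{thm:representer}): the constrained $L_1$ minimizer has the form $\sum_i \gamma_i \Phi_{\xx_i}$, so the optimization reduces to the finite-dimensional convex program $\min_{\gamma \,:\, \gamma^\top K \gamma \leq M^2} \frac{1}{n}\sum_i |\langle \gamma, K_i\rangle - \ff(\xx_i)|$, where $K$ is the Gram matrix and $K_i$ its $i$-th row; this is convex and efficiently solvable. I do not expect a genuine obstacle here~--- the only points requiring care are the reduction of the $L_1$ loss to the kernel Rademacher bound via Lipschitz contraction, and converting the in-expectation generalization bound into the stated guarantee via Markov, since the target sample complexity carries no $\log(1/\delta)$ factor and hence cannot be a high-probability statement with exponential confidence.
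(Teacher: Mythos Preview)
Your proposal is correct and follows essentially the same approach as the paper: bound the Rademacher complexity of the $L_1$ loss class via the kernel ball bound (\Cref{lem:rademacher-kernel}) composed with the $1$-Lipschitz absolute value (\Cref{thm:rademacher-lipschitz-composition}), apply the generalization bound with Markov to get constant probability, and reduce the empirical minimization to a finite-dimensional convex problem via the Representer Theorem. The only minor differences are that you make the realizability step (empirical loss zero at $g=\ff$) explicit, and you retain the norm constraint $\gamma^\top K \gamma \leq M^2$ in the finite-dimensional problem whereas the paper writes it as an unconstrained $\ell_1$ minimization; your version is arguably more careful, but the underlying argument is the same.
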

\begin{proof} 
    Consider the family of function $\cF' := \{ \xx \mapsto |\boldsymbol{h}(x)| : \boldsymbol{h} \in \cH, \|\boldsymbol{h}\| \leq 2M \}$. Since $\xx \mapsto |\xx|$ is 1-Lipschitz, and we have a Rademacher complexity bound for a radius $M$-ball in the kernel space~(\Cref{lem:rademacher-kernel}), the Rademacher complexity of $\mathcal{R}_{n}(\cF') \leq O(M/\sqrt{n})$.
    
    Using~\Cref{thm:rademacher-generalization-bound} we obtain a generalization bound: after collecting $n  = O(M/\varepsilon')^2$ samples, with probability $9/10$ we get a uniform bound on the generalization error for every $\boldsymbol{h} \in \cH$ with $\|\boldsymbol{h}\| \leq 2$.  
    \begin{equation*}
        \sup_{\|\boldsymbol{h}\| \leq 2M} \frac{1}{n} \sum_i |\boldsymbol{h}(\xx_i)| = \E_{\xx} |\boldsymbol{h}(\xx)| \pm \varepsilon'/2.
    \end{equation*}
    In particular, taking $\boldsymbol{h}$ of form $\ff - \ff'$ for some $\|\ff'\|_{\cH} \leq M$, we get
        \begin{equation*}
        \sup_{\|\ff'\| \leq M} \frac{1}{n} \sum_i |\ff(\xx_i) - \ff'(\xx_i)| = \E_{\xx} |\ff'(\xx) - \ff(\xx)| \pm \varepsilon'/2.
    \end{equation*}
    It is enough therefore to approximately minimize the empirical loss: given samples $(\xx_i, \ff(\xx_i))$, we need to find 
    \begin{equation*}
        \min_{\|\ff'\|_{\cH} \leq M} \frac{1}{n} \sum_{i} |\ff'(\xx_i) - \ff(\xx_i)|.
    \end{equation*}
    By the Representer Theorem~(\Cref{thm:representer}), this minimum is realized by some $\ff'$ of form $\sum_{i} \Phi_{\xx_i} \lambda_i$, and the problem becomes just a minimization
    \begin{equation*}
        \min_{\lambda \in \bR^n} \| K \lambda - y\|_1,
    \end{equation*}
    where $K$ is a kernel matrix $K_{ij} = K(\xx_i, \xx_j)$, and $y$ is a vector of observations $y_i = \ff(\xx_i)$. This minimization is just a linear program and can be solved efficiently.
\end{proof}

Note that if we now find $\hat{\ff}$ with $L_1$ error $\varepsilon':= \varepsilon/B$ with respect to $\ptr$, since the density ratio $\pte/\ptr$ is bounded by $B$ we can also bound the $L_1$ error with respect to $\pte$. This clearly implies closeness of the expectations
\begin{equation*}
    \E_{\xx \sim \pte} \ff(\xx) - \E_{\xx \sim \pte} \hat{\ff}(\xx) \leq \E_{\xx \sim \pte} |\ff(\xx) - \hat{\ff}(\xx)| \leq B \E_{\xx \sim \ptr} |\ff(\xx) - \hat{\ff}(\xx)| \leq \varepsilon.
\end{equation*}

We can now use $(M/\varepsilon)^2$ samples from $\pte$ to estimate $\E_{\xx \sim \pte} \hat{\ff}(\xx)$ by a sample average. This leads to the final sample complexity $n_{tr} \gtrsim (MB/\varepsilon)^2$ samples from $\ptr$ and $n_{te} \gtrsim (M/\varepsilon)^2$ samples from $\pte$, matching the guarantees of the KMM method.

\end{document}